\documentclass[preprint,3p]{elsarticle}

\usepackage{amssymb}
\usepackage[caption=false]{subfig}
\usepackage{graphicx}
\usepackage{supertabular,booktabs}
\usepackage{amsmath,amssymb}
\usepackage{algorithm}
\usepackage{algcompatible}
\usepackage{multirow}
\usepackage{multicol}
\usepackage{makecell}
\usepackage{diagbox}
\usepackage{hyperref}
\usepackage{color}
\usepackage{ntheorem}
\newenvironment{proof}{{\noindent\bf Proof}\quad}{\hfill $\square$\par}

\newtheorem{lemma}{Lemma}
\newtheorem{theorem}{Theorem}

\newtheorem{problem}{Problem}

\journal{Swarm and Evolutionary Computation}

\begin{document}

\begin{frontmatter}



\title{Error Analysis of Elitist Randomized Search Heuristics\tnoteref{label1}}
\tnotetext[label1]{A preliminary research on this topic was published in proceedings of BIC-TA 2019.}
\author[CHEN]{Cong Wang}
\author[CHEN]{Yu Chen\corref{cor}}
\address[CHEN]{School of Science, Wuhan University of Technology, Wuhan 430070, China}
\ead{ychen@whut.edu.cn }

\author[HE]{Jun He}
\address[HE]{School of Science and Technology, Nottingham Trent University, Nottingham NG11 8NS, UK}

\author[XIE]{Chengwang Xie\corref{cor}}
\address[XIE]{School of Computer and Information Engineering, Nanning Normal University, Nanning 530299, China}
\ead{chengwangxie@nnnu.edu.cn}
\cortext[cor]{Corresponding authors}

\begin{abstract}
When globally optimal solutions of complicated optimization problems cannot be located by evolutionary algorithms (EAs) in polynomial expected running time, the hitting time/running time analysis is not flexible enough to accommodate the requirement of theoretical study, because sometimes we have no idea on what approximation ratio is available in polynomial expected running time. Thus, it is necessary to propose an alternative routine for theoretical analysis of EAs.  To bridge the gap between theoretical analysis and algorithm implementation, in this paper we perform an error analysis where expected approximation error is estimated to evaluate performances  of randomized search heuristics (RSHs). Based on the Markov chain model of RSHs, the multi-step transition matrix can be computed by diagonalizing the one-step transition matrix, and a general framework for estimation of expected approximation errors is proposed. Case studies indicate that the error analysis works well for both uni- and multi-modal benchmark problems. It leads to precise estimations of approximation error instead of  asymptotic results on fitness values, which demonstrates its competitiveness to fixed budget analysis.
\end{abstract}

\begin{keyword}
Expected Approximation Error \sep Fixed-Budget Analysis \sep Runtime Analysis \sep Random Local Search \sep (1+1)EA, Knapsack Problem.
\end{keyword}

\end{frontmatter}

\section{Introduction}
In principle, evolutionary algorithms (EAs) could be employed solving a wide variety of optimization problems, which contributes to its popular application in scientific and engineering fields. However, their performances are significantly influenced by the mathematical characteristics of investigated problems. So, it is helpful to analyze fitness landscapes of the investigated problems~\cite{yang2016dynamic}, and then, design individualized strategies to accommodate them efficiently~\cite{li2019parameter,gong2020finding}.

Wise design of individualized strategies could be based on results of theoretical study. Theoretical analysis of randomized search heuristics (RSHs) was usually focused on estimation of the expected first hitting time (FHT) or the expected running time (RT), which quantify the needed evaluation budget to hit the global optimal solutions. A variety of theoretical routines were proposed for estimation of FHT/RT~\cite{he2001drift,droste2002analysis,he2003towards,doerr2012multiplicative,yu2014switch}, and massive theoretical results have been reported in the past years~\cite{oliveto2007time,zhou2009runtime,huang2009pheromone,friedrich2010approximating,sutton2014parameterized,doerr2017time,qian2017constrained}.

Although popular in theoretical analysis, estimation of FHT/RT does not make sense when RSHs are not anticipated to locate the global optimal solutions with satisfactory efficiency. For this case, the FHT/RT is usually exponential in problem size. To obtain polynomial FHT/RT of RSHs, a remedy is to investigate the approximation performance by taking an approximation set as the hitting destination of consecutive iterations~\cite{chen2011drift,yu2012approximation,lai2013performance,huang2014runtime,zhou2014approximation,xia2015analysis,xia2015approximation,peng2016approximation,yushan2016first}.

However,  to perform an analysis of approximation performance one must preset an approximation ratio in advance, which is sometimes unavailable if we are short of knowledge about  mathematical properties of the investigated problems. Inspired by the fact that in numerical experiments EAs are usually evaluated  by qualities of solutions obtained with given budget.  Jansen and Zarges proposed to estimate objective values by fixed-budget analysis~\cite{jansen2014performance}. Following this theoretical routine, Jansen and Zarges performed a theoretical evaluation of immune-inspired hypermutations~\cite{jansen2014reevaluating}, and Nallaperuma \emph{et al.} investigated performances of RSH on the traveling salesperson problem~\cite{nallaperuma2017expected}. {\color{red} Fixed budget analysis generates bound estimations of $f(X^{[t]})$ for given iteration budget $t$, which is not general and sometimes invalid for a large $t$. Moreover, it is performed by analysis tricks depending on properties of the investigated problems, and so,  general analysis frameworks are not easy to be obtained. For the same reason, the analysis is very complicated, and sometimes only asymptotic results can be achieved.}

Convergence rate, which is often employed assessing convergence speed of deterministic iteration algorithms, is also used in theoretical analysis {\color{red}of} RSH. Due to the stochastic iteration mechanism of RSH, the convergence rate is defined as $r^{[t]}=e^{[t]}/e^{[t-1]}$, where $e^{[t]}$ is the expected approximation error at generation $t$. By restricting the convergence rate under the condition $r^{[t]} \le \lambda <1$, Rudolph~\cite{rudolph1997convergence} proved that the sequence $\{e^{[t]}; t=0,1, \cdots\}$ converges in mean geometrically fast to $0$. Considering that numerical simulation of$r^{[t]}$ is unstable, He and Lin~\cite{he2016average} investigated the geometric average convergence rate (ACR) of  for binary-coded RSH, defined as
$
R^{[t]} = 1- \left( {e^{[t]}}/{e^{[0]}} \right)^{1/t}
$.
They estimated the lower bound on $R^{[t]}$ and proved if the initial population is randomly initialized, $R^{[t]}$ converges to an eigenvalue of the transition matrix associated with an EA. Recently, Chen and He~\cite{chen2019average} performed an ACR analysis for continuous RSH, which demonstrated a significantly different performance of RSH for continuous optimization problems.

Starting from the convergence rate $r^{[t]}$ or $R^{[t]}$, it is straightforward to get an exact expression of the approximation error: {\color{red} $$e^{[t]} =e^{[0]} \prod_{k=1}^{t}r^{[k]} \quad \mbox{or}\quad e^{[t]} =e^{[0]}(1-R^{[t]})^t.$$}
By estimating one-step convergence rate $r^{[t]}$ for any $t$, He \emph{et al.}~\cite{he2019unlimited1,he2019unlimited2} performed the unlimited budget analysis for approximation error of RSH, which demonstrated a general framework for estimation of approximation error for any computation budget. However, it is not trivial to derive the convergence rate dependent on $t$, and a general estimation of $r^{[t]}$ could lead to a very loose estimation of the expected approximation error.

By investigate the Markov chain model of EAs, He~\cite{he2016analytic} made a first attempt to  obtain an analytic expression of the approximation error for. He proved if
the transition matrix associated with an EA is an upper triangular matrix with distinct diagonal entries, the relative  error $e^{[t]}$ for any $t \ge 1$ is expressed by
$ e^{[t]} =\sum^{L}_{k=1}  c_k \lambda_k^{t-1},
$ where $\lambda_k$  are eigenvalues of the transition matrix and $c_k$ are coefficients. In accordance with this idea, He \emph{et al.}~\cite{he2018theoretical} proposed to compute $c_k$ and $\lambda_k^{t-1}$ by estimating $t$-th power of the transition matrix, and presented several mathematical routines depending on the properties of transition matrices.

{\color{red} As suggested by He \emph{et al.}~\cite{he2016analytic,he2018theoretical}, to compute expected approximation error it is necessary to confirm the coefficients $c_k$ and eigenvalues $\lambda_k$. As a first attempt, we investigated performances of RSH for the case that the status transition matrices can be computationally diagonalized, and estimated the expected approximation error for arbitrary iteration budget~\cite{wang2019estimating}. However, when the bitwise mutation is employed, the elitist selection would generate a Markov chain model with an upper triangular transition matrix, and it is difficult to get analytic expression of its $t$-th power. In this study, we extend our research to investigate more complicated cases. When a global search strategy is implemented for multi-modal problems, the transition matrix is much more complicated. Thus, we construct auxiliary search processes modelled by bi-diagonal transition matrices, and analyze performances of an elitist RSH by computing expected approximation error of the auxiliary search process. In this way, a general framework to estimate approximation error of elitist RSH is available, by which we can ge an exact estimation of approximation error instead of the asymptotic results by fixed-budget analysis.} Rest of this paper is organized as follows. Section \ref{SecPre} presents some preliminaries. Section \ref{SecGen} proposes general results on the approximation error, and case studies are performed in Section \ref{SecCase}. Moreover, applicability of the theoretical framework is further verified in Section \ref{SecKnap} by investigating an instance of the knapsack problem. Finally, Section \ref{SecCon} concludes this paper.

\section{Preliminaries}\label{SecPre}
In this paper, we consider the maximization problem
\begin{equation}\label{CO}
  \max\quad f(\mathbf{x}),\quad\mathbf{x}=(x_1,\dots,x_n)\in\{0,1\}^n.
\end{equation}
Denote its optimal solution as $\mathbf{x}^*$, and the corresponding objective value $f^*$. Then, quality of a solution $\mathbf x$ can be evaluated by its approximation error $e(\mathbf x)=|f(\mathbf x)-f^*|$. For error analysis, an elitist RSH described in Algorithm \ref{alg1} is investigated in this paper. When the one-bit mutation is employed, it is called a \emph{random local search (RLS)}; if the bitwise mutation is used, it is named as a \emph{(1+1)EA}.

\begin{algorithm}[ht]
\caption{Elitist Randomized Search Heuristics}
\label{alg1}
\begin{algorithmic}[1]
\STATE  counter $t = 0$;
\STATE randomly  initialize a solution $\mathbf{x}_{0}$;
\WHILE{the stopping criterion is not satisfied}
\STATE  generate a new candidate solution $\mathbf{y}_{t}$ from $\mathbf{x}_{t}$ by mutation;
\STATE set individual $\mathbf{x}_{t+1}=\mathbf{y}_{t}$ if $f(\mathbf{y}_{t})>f(\mathbf{x}_{t})$; otherwise, let $\mathbf{x}_{t+1}=\mathbf{x}_{t}$;
\STATE $t= t+1$;
\ENDWHILE
\end{algorithmic}
\end{algorithm}

The population sequence $\{\mathbf{x}_t,t=0,1,\dots\}$ of RLS/(1+1)EA is a \emph{Homogeneous Markov Chain (HMC)}. Classify the solution set into $L+1$ mutually disjoint subset $\mathcal{X}_0,\mathcal{X}_1,\dots, \mathcal{X}_L$, where solutions in $\mathcal{X}_i$ have identical approximation error $e_i$ satisfying
\begin{equation}\label{Sec1MonoError}
  0= e_0\le e_1\le\dots\le e_L.
\end{equation}
If $\mathbf{x}\in\mathcal{X}_i$, it is called \emph{at the status $i$}. Status $0$, consisting of globally optimal solutions, is called the \emph{optimal status}, and other statuses are the \emph{non-optimal statuses}. Then, $\{\mathbf{x}_t,t=0,1,\dots\}$ is a discrete HMC with $L+1$ available statuses, and the
transition probability matrix is  $\mathbf{\tilde{R}}=(r_{i,j})_{(L+1)\times (L+1)}$, where
\begin{equation*}
  r_{i,j}=\Pr\{\mathbf{x}_{t+1}\in\mathcal{X}_i|\mathbf{x}_{t}\in\mathcal{X}_j\},\quad i,j=0,\dots,L.
\end{equation*}
While the elitist RSH is employed solving a maximization problem with the error vector
  $\mathbf{\tilde{e}}=(e_0,\dots,e_l)'$,
initialization  of solutions $\mathbf{x_0}$ would generate an initial status distribution
  $\mathbf{\tilde{p}}^{[0]}=(p_{0}^{[0]},p_{1}^{[0]},\dots,p_{L}^{[0]})'$.
Then, after $t$ generations, we get the status distribution
\begin{equation*}
  \mathbf{\tilde{p}}^{[t]}=(p_{0}^{[t]},p_{1}^{[t]},\dots,p_{L}^{[t]})'=\mathbf{\tilde{R}}^t\mathbf{\tilde{p}}^{[0]},
\end{equation*}
and the expected approximation error can be confirmed as
\begin{equation}\label{Sec1ExpError}
  e^{[t]}=\mathbf{\tilde{e}}'\mathbf{\tilde{R}}^t\mathbf{\tilde{p}}^{[0]}.
\end{equation}

Since the elitist selection is employed, the transition matrix $\mathbf{\tilde{R}}$ is upper triangular, and it can be partitioned as
\begin{equation}\label{TranMatrix}
  \mathbf{\tilde{R}}=\left(
               \begin{array}{cc}
                 1 & \mathbf{r}_0 \\
                 \mathbf{0} & \mathbf{R} \\
               \end{array}
             \right),
\end{equation}
where $\mathbf{r}_0=(r_{0,1},r_{0,2},\dots,r_{0,L})$, $\mathbf{0}=(0,\dots,0)'$,
\begin{equation}\label{Sec1R}
  \mathbf{R}=\left(
               \begin{array}{ccc}
                 r_{1,1} & \dots & r_{1,L} \\
                  & \ddots & \vdots \\
                  &  & r_{L,L} \\
               \end{array}
             \right).
\end{equation}
The following lemma demonstrates that the expected approximation error of RSH is independent on initial distribution of the optimal status and the one-step transition probability  from non-optimal statuses to the optimal one.
\begin{lemma}\label{Sec1L1}
Let $\mathbf{\tilde{e}}=(e_0,e_1,\dots, e_L)$ and $\mathbf{\tilde{r}}=(r_0,r_1,\dots,r_L)'$ be non-negative vectors. If $e_0=0$, it holds that
\begin{equation*}
  \mathbf{\tilde{e}}'\mathbf{\tilde{R}}^t\mathbf{\tilde{r}}=\mathbf{e}'\mathbf{R}^t\mathbf{r},\quad,\forall\,\,t\in\mathbb Z^+,
\end{equation*}
where $\mathbf{e}=(e_1,\dots, e_L)$, $\mathbf{r}=(r_1,\dots,r_L)'$, $\mathbf{R}$ and $\mathbf{\tilde{R}}$ confirmed by (\ref{TranMatrix}) and (\ref{Sec1R}), respectively.
\end{lemma}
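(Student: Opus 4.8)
The plan is to exploit the block upper-triangular structure of $\mathbf{\tilde{R}}$ displayed in (\ref{TranMatrix}), so that its powers retain the same shape and the optimal-status row decouples from the rest. First I would prove by induction on $t$ that
\[
\mathbf{\tilde{R}}^t=\left(\begin{array}{cc} 1 & \mathbf{r}_0^{[t]} \\ \mathbf{0} & \mathbf{R}^t \end{array}\right),\qquad \forall\,t\in\mathbb{Z}^+,
\]
for a suitable row vector $\mathbf{r}_0^{[t]}$ whose explicit value will turn out to be irrelevant. The base case $t=1$ is (\ref{TranMatrix}) itself. For the inductive step I multiply $\mathbf{\tilde{R}}^t$ by $\mathbf{\tilde{R}}$ and carry out the block product: the bottom-left block stays $\mathbf{0}$ (since $\mathbf{0}\cdot 1=\mathbf{0}$ and $\mathbf{R}^t\mathbf{0}=\mathbf{0}$), the bottom-right block becomes $\mathbf{R}^t\mathbf{R}=\mathbf{R}^{t+1}$, the top-left entry remains $1$, and the top-right block updates to $\mathbf{r}_0^{[t+1]}=\mathbf{r}_0+\mathbf{r}_0^{[t]}\mathbf{R}$, which confirms the claimed form.

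Next I would split the two given vectors conformally with the $1+L$ partition, writing $\mathbf{\tilde{e}}'=(e_0,\mathbf{e}')$ as a row and $\mathbf{\tilde{r}}=\begin{pmatrix} r_0 \\ \mathbf{r}\end{pmatrix}$ as a column, and expand the scalar by block multiplication:
\[
\mathbf{\tilde{e}}'\mathbf{\tilde{R}}^t\mathbf{\tilde{r}} = e_0 r_0 + e_0\,\mathbf{r}_0^{[t]}\mathbf{r} + \mathbf{e}'\mathbf{R}^t\mathbf{r}.
\]
The reason only these three terms survive is that the zero bottom-left block of $\mathbf{\tilde{R}}^t$ prevents $\mathbf{e}'$ from coupling to $r_0$, so every remaining contribution that involves the top row of $\mathbf{\tilde{R}}^t$ (namely the entry $1$ and the block $\mathbf{r}_0^{[t]}$) is multiplied by the scalar $e_0$.

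Finally, invoking the hypothesis $e_0=0$ annihilates the first two terms and leaves exactly $\mathbf{e}'\mathbf{R}^t\mathbf{r}$, which is the desired identity. There is no genuine obstacle here: the only points that need care are the bookkeeping of the block product in the inductive step and the observation that $\mathbf{r}_0^{[t]}$ never has to be computed, because its entire contribution is killed by $e_0=0$. This last observation is in fact the substance of the lemma, since it shows that the expected error (\ref{Sec1ExpError}) is governed solely by the non-optimal sub-block $\mathbf{R}$, independently both of the initial probability mass on the optimal status and of the one-step transition probabilities from non-optimal statuses into it.
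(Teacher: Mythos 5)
Your proposal is correct and follows essentially the same route as the paper: exploit the block upper-triangular form of $\mathbf{\tilde{R}}$ so that $\mathbf{\tilde{R}}^t$ keeps the shape $\left(\begin{smallmatrix} 1 & \ast \\ \mathbf{0} & \mathbf{R}^t \end{smallmatrix}\right)$, then observe that $e_0=0$ annihilates every term coupling to the top row. The paper states this as a one-line ``trivial'' computation without writing out the induction on the block power, so your version is simply a more careful rendering of the same argument.
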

\begin{proof}
The proof is trivial, and we can complete it by the following deduction.
\begin{equation*}
  \mathbf{\tilde{e}}'\mathbf{\tilde{R}}^t\mathbf{\tilde{r}}=(0, \mathbf{e}') \left(
               \begin{array}{cc}
                 1 & \mathbf{r}_0 \\
                 \mathbf{0} & \mathbf{R} \\
               \end{array}
             \right)(r_0, \mathbf{r}')'=\mathbf{e}'\mathbf{R}^t\mathbf{r},\quad,\forall\,\,t\in\mathbb Z^+.
\end{equation*}
\end{proof}

{\color{red}Then, estimation of the expected approximation error of RSHs relys on computation of $\mathbf{R}^t$. If $\mathbf{R}$ has $L$ distinct diagonal elements, it can be theoretically diagonalized~\cite{lay2003linear}. That is, there exist an invertible matrix $\mathbf{P}$ such that $\mathbf{R}^t=\mathbf{P}\Lambda ^t\mathbf{P}^{-1}$, where $\Lambda=diag\{r_{1,1},r_{2,2},\dots,r_{L,L}\}$. However, deduction of the transformation matrix $\mathbf{P}$ is sometimes difficult, which depends on the distribution of non-zero elements in $\mathbf{R}$.
According to the distribution of non-zero elements, we can classify the searching process of elitist RSH into three categories.
\begin{enumerate}
\item \textbf{Diagonal Search}: If the transition submatrix is a diagonal matrix
\begin{equation}\label{DiagTran}
  \mathbf{R}_D=diag\{r_{1,1},r_{2,2},\dots,r_{L,L}\},
\end{equation}
we call that an EA generates a \emph{diagonal search}.
  \item \textbf{Bi-Diagonal Search}: If the transition submatrix is a bi-diagonal matrix
  \begin{equation}\label{BiDiagTran}
  \mathbf{R}_{BD}=\left(
             \begin{array}{ccccc}
               r_{1,1} & r_{1,2} &  &  &\\
                & r_{2,2} & r_{2,3} &  &\\
                &  & \ddots &\ddots  &\\
                &  &  &r_{L-1,L-1}  &r_{L-1,l} \\
                &  &  &  & r_{L,L}\\
             \end{array}
           \right),
\end{equation}
we call that an EA generates a \emph{bi-diagonal search}.
  \item \textbf{Elisit Search}: If the transition submatrix is an upper triangular matrix
  \begin{equation}\label{UppTran}
  \mathbf{R}_{E}=\left(
             \begin{array}{ccccc}
               r_{1,1} & r_{1,2} & r_{1,3} & \dots &r_{1,L}\\
                & r_{2,2} & r_{2,3} & \dots &r_{2,L}\\
                &  & \ddots &\ddots  &\vdots\\
                &  &  &r_{L-1,L-1}  &r_{L-1,L} \\
                &  &  &  & r_{L,L}\\
             \end{array}
           \right),
\end{equation}
we call that an EA generates an \emph{elitist search}.
\end{enumerate}

By definition of status we know that an elitist EA must generates an elitist search. A diagonal search is necessarily a bi-diagonal search, and a bi-diagonal search is an elitist one. The easiest task is to estimate the approximation error of an diagonal search, and the hardest task is that of an elitist search.}


\section{General Results on Estimation of Expected Approximation Error}\label{SecGen}
For an diagonal search, computation of the $t$-th power of transition submatrix is a trivial task because the $\mathbf{R}_D$ is diagonal. For the submatrix $\mathbf{R}_{BD}$ of a bi-diagonal search, we can also deduce the analytic forms of the transformation matrix $\mathbf{P}$ and its inverse $\mathbf{P}^{-1}$, and then get the analytic form of the $t$-th power. However, it is difficult to get the precise expression for the $t$-th power of a general upper-triangular matrix. Thus, we would construct an auxiliary bi-diagonal search that converges more slowly than the elitist one, and get an upper bound for the expected approximation error of an elitist search.
\subsection{Expected Approximation Error of a Diagonal Search}

\begin{theorem}\label{Th_D}
For the error vector $\mathbf{e}=(e_1,\dots,e_L)'$ of non-optimal statuses with $e_{i}\le e_{i+1}$, $\forall\,i=1,\dots, L-1$,
\begin{equation*}
  e_D^{[t]}=\sum_{i=1}^Lr_{i,i}^te_ip^{[0]}_i,
\end{equation*}
where $\mathbf{p}^{[0]}=(p^{[0]}_1,\dots,p^{[0]}_L)'$.
\end{theorem}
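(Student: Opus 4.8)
The plan is to reduce the full expected-error expression to the non-optimal submatrix and then exploit the diagonal structure directly. Starting from the master formula (\ref{Sec1ExpError}), we have $e^{[t]} = \mathbf{\tilde{e}}' \mathbf{\tilde{R}}^t \mathbf{\tilde{p}}^{[0]}$. Because status $0$ collects the globally optimal solutions, the monotonicity chain (\ref{Sec1MonoError}) guarantees $e_0 = 0$, so the hypotheses of Lemma \ref{Sec1L1} are met with the generic vector $\mathbf{\tilde{r}}$ identified as $\mathbf{\tilde{p}}^{[0]}$. Applying the lemma strips off the first coordinate and yields the reduced identity $e^{[t]} = \mathbf{e}' \mathbf{R}^t \mathbf{p}^{[0]}$, where $\mathbf{e} = (e_1,\dots,e_L)'$ and $\mathbf{p}^{[0]} = (p_1^{[0]},\dots,p_L^{[0]})'$.

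First I would specialise $\mathbf{R}$ to the diagonal case $\mathbf{R} = \mathbf{R}_D = diag\{r_{1,1},\dots,r_{L,L}\}$ from (\ref{DiagTran}). Since powers of a diagonal matrix act entrywise, $\mathbf{R}_D^t = diag\{r_{1,1}^t,\dots,r_{L,L}^t\}$, which needs no eigen-decomposition at all. Substituting this into the reduced identity and expanding the quadratic form $\mathbf{e}' \mathbf{R}_D^t \mathbf{p}^{[0]}$ collapses the double sum to its diagonal contributions, giving $e_D^{[t]} = \sum_{i=1}^{L} r_{i,i}^t e_i p_i^{[0]}$, which is exactly the claimed expression.

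There is no genuine obstacle here; the statement is essentially bookkeeping once Lemma \ref{Sec1L1} is in hand. The only points worth checking are that the lemma's standing assumption $e_0 = 0$ really follows from (\ref{Sec1MonoError}), so that the optimal status and the transition probabilities into it may be discarded, and that the diagonal structure (\ref{DiagTran}) is precisely what makes the $t$-th power available in closed form without the transformation matrix $\mathbf{P}$ that the harder bi-diagonal and elitist cases will require. The monotonicity hypothesis $e_i \le e_{i+1}$ plays no role in the derivation itself; it is carried along only for consistency with the ordering convention fixed in (\ref{Sec1MonoError}).
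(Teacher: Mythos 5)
Your proposal is correct and follows essentially the same route as the paper's own proof: reduce to the non-optimal submatrix via Lemma \ref{Sec1L1} (valid since $e_0=0$), compute $\mathbf{R}_D^t$ entrywise from the diagonal structure, and expand $\mathbf{e}'\mathbf{R}_D^t\mathbf{p}^{[0]}$ into the claimed sum. Your added remark that the monotonicity hypothesis $e_i\le e_{i+1}$ is not actually used in this derivation is accurate.
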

\begin{proof}
Because $$\mathbf{R}_D^t=\left(diag\{r_{1,1},r_{2,2},\dots,r_{L,L}\}\right)^t=diag\{r_{1,1}^t,r_{2,2}^t,\dots,r_{L,L}^t\},$$
Lemma \ref{Sec1L1} implies that
$$e_D^{[t]}=\mathbf{e}'\mathbf{R_D}^t\mathbf{p}^{[0]}=\mathbf{e}'\left(diag\{r_{1,1}^t,r_{2,2}^t,\dots,r_{L,L}^t\}\right)\mathbf{p}^{[0]}=\sum_{i=1}^Lr_{i,i}^te_ip^{[0]}_i.$$
\end{proof}

\subsection{Expected Approximation Error of a Bi-Diagonal Search}
To estimate the expected approximation error of a bi-diagonal search, it is essential to compute the $t$-th power of a bi-diagonal matrix $\mathbf R$. While $\mathbf R$ has $n$ distinct eigenvalues, it can be diagonalized by similarity transformation.
\begin{lemma}\label{Sec2L2}
\cite{lay2003linear} If an $L\times L$ matrix $\mathbf{A}$ has $L$ distinct eigenvalues $\lambda_1$, $\lambda_2$,...,$\lambda_L$, it can be diagonalized as
\begin{equation}\label{diag}
  \boldsymbol{\Lambda}=\mathbf{P}^{-1}\mathbf{A}\mathbf{P}.
\end{equation}
Here, $\boldsymbol{\Lambda}=diag\{\lambda_1,\dots,\lambda_L\}$, $\mathbf{P}=(\mathbf{p}_1,\dots,\mathbf{p}_L)$, where $\mathbf{p}_i$ is the corresponding eigenvector of $\lambda_i$ with $$\mathbf{A}\mathbf{p}_i=\lambda_i\mathbf{p}_i,\quad i=1,\dots,L.$$
\end{lemma}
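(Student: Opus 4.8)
The plan is to produce the invertible matrix $\mathbf{P}$ explicitly as the array whose columns are the eigenvectors $\mathbf{p}_1,\dots,\mathbf{p}_L$, and then to reinterpret the $L$ scalar eigen-equations $\mathbf{A}\mathbf{p}_i=\lambda_i\mathbf{p}_i$ as a single matrix identity. Stacking these column by column gives $\mathbf{A}\mathbf{P}=(\lambda_1\mathbf{p}_1,\dots,\lambda_L\mathbf{p}_L)=\mathbf{P}\boldsymbol{\Lambda}$, where the last equality is just the definition of multiplying $\mathbf{P}$ on the right by the diagonal matrix $\boldsymbol{\Lambda}=diag\{\lambda_1,\dots,\lambda_L\}$. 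Once $\mathbf{P}$ is known to be invertible, left-multiplying by $\mathbf{P}^{-1}$ yields $\mathbf{P}^{-1}\mathbf{A}\mathbf{P}=\boldsymbol{\Lambda}$, which is exactly (\ref{diag}). So the whole argument reduces to verifying that the chosen $\mathbf{P}$ is invertible, i.e.\ that its columns are linearly independent.

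The heart of the proof, therefore, is to show that eigenvectors belonging to pairwise distinct eigenvalues are linearly independent; this is the step I expect to carry the real work. I would argue by induction on the number of eigenvectors. The base case of a single nonzero eigenvector is immediate. For the inductive step, suppose $\mathbf{p}_1,\dots,\mathbf{p}_{k-1}$ are independent and consider a relation $\sum_{i=1}^{k}c_i\mathbf{p}_i=\mathbf{0}$. Applying $\mathbf{A}$ produces $\sum_{i=1}^{k}c_i\lambda_i\mathbf{p}_i=\mathbf{0}$, while scaling the original relation by $\lambda_k$ produces $\sum_{i=1}^{k}c_i\lambda_k\mathbf{p}_i=\mathbf{0}$. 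Subtracting eliminates the $k$-th term and leaves $\sum_{i=1}^{k-1}c_i(\lambda_i-\lambda_k)\mathbf{p}_i=\mathbf{0}$. By the inductive hypothesis each coefficient vanishes, and since the eigenvalues are distinct the factors $\lambda_i-\lambda_k$ are nonzero, forcing $c_1=\dots=c_{k-1}=0$; then $c_k\mathbf{p}_k=\mathbf{0}$ with $\mathbf{p}_k\neq\mathbf{0}$ forces $c_k=0$ as well.

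With the $L$ distinct eigenvalues in hand, this independence result supplies $L$ linearly independent eigenvectors in the $L$-dimensional space, so the columns of $\mathbf{P}$ form a basis and $\mathbf{P}$ is invertible. Combining this with the matrix identity $\mathbf{A}\mathbf{P}=\mathbf{P}\boldsymbol{\Lambda}$ from the first paragraph then closes the argument. The only genuine obstacle is the linear-independence step; everything else is bookkeeping. Since this is the standard diagonalization criterion (cited from \cite{lay2003linear}), I emphasize that the distinctness hypothesis is exactly what makes the differences $\lambda_i-\lambda_k$ invertible in the induction: without it the subtraction trick collapses, and indeed a matrix with repeated eigenvalues need not admit such a $\mathbf{P}$ at all.
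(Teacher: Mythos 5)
Your argument is correct and complete: the induction establishing linear independence of eigenvectors for distinct eigenvalues, followed by assembling $\mathbf{A}\mathbf{P}=\mathbf{P}\boldsymbol{\Lambda}$ and inverting $\mathbf{P}$, is the standard proof of this diagonalization criterion. The paper itself gives no proof here --- it simply cites the result from a linear algebra textbook --- and your write-up is exactly the textbook argument being cited, so there is nothing to reconcile.
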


Note that equation (\ref{diag}) is equivalent to $\mathbf{A}=\mathbf{P}\boldsymbol{\Lambda}\mathbf{P}^{-1}$.
Then,
\begin{equation}\label{Diagonalization}
\mathbf{A}^t=\left(\mathbf{P}\boldsymbol{\Lambda}\mathbf{P}^{-1}\right)^t=\mathbf{P}\boldsymbol{\Lambda}^t\mathbf{P}^{-1},
\end{equation}
which can be confirmed by computing the matrix $\mathbf{P}$ and its inverse $\mathbf{P}^{-1}$.

\begin{lemma}\label{Sec2L3}
If the bi-diagonal matrix $\mathbf{R}_{BD}$ confirmed by  (\ref{BiDiagTran}) has $L$ distinct diagonal elements,
\begin{align*}
  \mathbf{R}_{BD}^t=\sum_{j=1}^{L}\lambda_j^t\mathbf{p}_j\mathbf{q}'_j,
\end{align*}
where
\begin{align}
  \mathbf{p}_{j}&=\left(\prod_{k=1}^{j-1}\frac{r_{k,k+1}}{r_{j,j}-r_{k,k}},\prod_{k=2}^{j-1}\frac{r_{k,k+1}}{r_{j,j}-r_{k,k}},\dots,\frac{r_{j-1,j}}{r_{j,j}-r_{j-1,j-1}},1,0,\dots,0\right)', \label{Eigen1} \\
  \mathbf{q}'_{j}&=\left(0,\dots,0,1,\frac{r_{j,j+1}}{r_{j,j}-r_{j+1,j+1}},\prod_{k=j+1}^{j+2}\frac{r_{k-1,k}}{r_{j,j}-r_{k,k}},\dots,\prod_{k=j+1}^{L}\frac{r_{k-1,k}}{r_{j,j}-r_{k,k}}\right),\label{ColTransMatrixIn}\\
  &\qquad\qquad\qquad\qquad\qquad\qquad\quad\quad\quad\quad\quad\quad\quad\quad\quad\quad\quad\quad\quad\quad\quad\quad\quad j=1\dots,L.\nonumber
\end{align}
\end{lemma}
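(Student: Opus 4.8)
The plan is to exploit the diagonalization machinery already set up in Lemma \ref{Sec2L2} and equation (\ref{Diagonalization}). Since $\mathbf{R}_{BD}$ in (\ref{BiDiagTran}) is upper triangular, its eigenvalues are exactly its diagonal entries $\lambda_j=r_{j,j}$; by hypothesis these are $L$ distinct values, so Lemma \ref{Sec2L2} applies and $\mathbf{R}_{BD}^t=\mathbf{P}\boldsymbol{\Lambda}^t\mathbf{P}^{-1}$. Writing $\mathbf{P}=(\mathbf{p}_1,\dots,\mathbf{p}_L)$ for the matrix of right eigenvectors and $\mathbf{q}'_1,\dots,\mathbf{q}'_L$ for the rows of $\mathbf{P}^{-1}$, the product $\mathbf{P}\boldsymbol{\Lambda}^t\mathbf{P}^{-1}$ expands into the rank-one sum $\sum_{j=1}^{L}\lambda_j^t\mathbf{p}_j\mathbf{q}'_j$. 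Thus the lemma reduces to two verifications: that the vectors in (\ref{Eigen1}) are the right eigenvectors, and that the covectors in (\ref{ColTransMatrixIn}) are exactly the rows of $\mathbf{P}^{-1}$.

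First I would verify (\ref{Eigen1}). Because $\mathbf{R}_{BD}$ is bi-diagonal, the $i$-th component of $\mathbf{R}_{BD}\mathbf{p}$ is $r_{i,i}p_i+r_{i,i+1}p_{i+1}$ (with the super-diagonal term absent for $i=L$), so the equation $\mathbf{R}_{BD}\mathbf{p}_j=r_{j,j}\mathbf{p}_j$ becomes the scalar recurrence $(r_{i,i}-r_{j,j})p_i=-r_{i,i+1}p_{i+1}$. Reading this at $i=L,L-1,\dots,j+1$ and using distinctness of the diagonal forces $p_i=0$ for $i>j$; normalizing $p_j=1$ and solving the recurrence downward for $i<j$ yields precisely the telescoping product $p_i=\prod_{k=i}^{j-1}r_{k,k+1}/(r_{j,j}-r_{k,k})$, which is the $i$-th entry of (\ref{Eigen1}). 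An entirely symmetric computation on the left, using $(\mathbf{q}'\mathbf{R}_{BD})_i=r_{i-1,i}q_{i-1}+r_{i,i}q_i$, shows that the covectors (\ref{ColTransMatrixIn}) satisfy $\mathbf{q}'_j\mathbf{R}_{BD}=r_{j,j}\mathbf{q}'_j$, i.e. they are the corresponding left eigenvectors, supported on indices $\{j,\dots,L\}$ and normalized so that $q_j=1$.

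The remaining, and only delicate, step is to confirm that these left eigenvectors are genuinely the rows of $\mathbf{P}^{-1}$, i.e. that $\mathbf{q}'_i\mathbf{p}_j=\delta_{ij}$. For $i\ne j$ this is the standard biorthogonality of left and right eigenvectors belonging to distinct eigenvalues: evaluating $\mathbf{q}'_i\mathbf{R}_{BD}\mathbf{p}_j$ two ways gives $(\lambda_i-\lambda_j)\mathbf{q}'_i\mathbf{p}_j=0$, hence $\mathbf{q}'_i\mathbf{p}_j=0$. The normalization $\mathbf{q}'_j\mathbf{p}_j=1$ is where the bi-diagonal structure pays off: the right eigenvector $\mathbf{p}_j$ is supported on $\{1,\dots,j\}$ while the left eigenvector $\mathbf{q}'_j$ is supported on $\{j,\dots,L\}$, so their supports overlap only at the single index $j$, where both entries equal $1$; hence $\mathbf{q}'_j\mathbf{p}_j=1$ with no rescaling needed. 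This establishes that $\mathbf{q}'_1,\dots,\mathbf{q}'_L$ are the rows of $\mathbf{P}^{-1}$ and completes the proof. I expect this support/biorthogonality argument, rather than the two eigenvector recurrences, to be the main conceptual obstacle; the recurrences themselves are routine once the bi-diagonal action is written down.
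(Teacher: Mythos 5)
Your proof is correct and follows essentially the same route as the paper: diagonalize $\mathbf{R}_{BD}$ via Lemma \ref{Sec2L2}, solve the bi-diagonal recurrences to obtain the right eigenvectors (\ref{Eigen1}) and the rows (\ref{ColTransMatrixIn}) of $\mathbf{P}^{-1}$, and expand $\mathbf{P}\boldsymbol{\Lambda}^t\mathbf{P}^{-1}$ as a sum of rank-one terms. The only cosmetic difference is in identifying $\mathbf{Q}=\mathbf{P}^{-1}$: you verify biorthogonality $\mathbf{q}'_i\mathbf{p}_j=\delta_{ij}$ directly using the disjoint supports, whereas the paper deduces the unit diagonal of $\mathbf{Q}$ from the fact that the inverse of a unit upper triangular matrix is unit upper triangular and then reads the remaining entries off the relation $\mathbf{Q}\mathbf{R}_{BD}=\boldsymbol{\Lambda}\mathbf{Q}$; both arguments are valid.
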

\begin{proof}
If the upper triangular matrix $\mathbf{R}_{BD}$ has $L$ distinct diagonal elements,  it has $L$ distinct eigenvalues $$\lambda_i=r_{i,i},\quad i=1,\dots,L.$$
Then, Lemma \ref{Sec2L2} applies and $\mathbf{R}_{BD}$ can be diagonalized as
\begin{equation}\label{SimTrans}
  \mathbf {P}^{-1}\mathbf R_{BD}\mathbf  P=\boldsymbol{\Lambda}=diag\{\lambda_1,\lambda_2,\dots,\lambda_L\},
\end{equation}
where $\mathbf{P}=(\mathbf{p}_1,\dots,\mathbf{p}_L)$,
\begin{equation}\label{Eigen}
  \mathbf R_{BD}\mathbf p_{j}=\lambda_j\mathbf p_{j},\,\mathbf p_{j}\neq\mathbf 0,\quad j=1,\dots,L.
\end{equation}

Denote $\mathbf{p}_{j}=(p_{1,j},\dots,p_{L,j})'$. Equation (\ref{Eigen}) indicates that
\begin{equation*}
\left\{\begin{array}{l}
  r_{i,i}p_{i,j}+r_{i,i+1}p_{i+1,j}=r_{j,j}p_{i,j},\quad i=1,\dots,L;\\
  r_{L,L}p_{L,j}=r_{j,j}p_{L,j}.
  \end{array}\right.\quad j=1,2,\dots,L.
\end{equation*}
Note that $r_{i,i}\neq r_{j,j}$ when $i\neq j$. Thus, for the eigenvalue $\lambda_j$ we can obtain an corresponding eigenvector $\mathbf{p}_{j}=(p_{1,j},\dots,p_{L,j})'$  confirmed by
\begin{equation*}
p_{i,j}=\left\{\begin{aligned} & 0, && \mbox{ if } i>j;\\
& 1, && \mbox{ if } i=j;\\
& \prod_{k=i}^{j-1}\frac{r_{k,k+1}}{r_{j,j}-r_{k,k}}, && \mbox{ if }i<j;
\end{aligned}\right.\quad j=1,2,\dots,L.
\end{equation*}
That is,
\begin{align*}
 \mathbf{p}_{j}=\left(\prod_{k=1}^{j-1}\frac{r_{k,k+1}}{r_{j,j}-r_{k,k}},\prod_{k=2}^{j-1}\frac{r_{k,k+1}}{r_{j,j}-r_{k,k}},\dots,\frac{r_{j-1,j}}{r_{j,j}-r_{j-1,j-1}},1,0,\dots,0\right)',\quad j=1\dots,L.
\end{align*}

Denote  $\mathbf{Q}=\mathbf{P}^{-1}=(\mathbf{q}_1,\dots,\mathbf{q}_L)'$, where $\mathbf{q}_j=(q_{1,j},\dots,q_{L,j})'$. Because $\mathbf Q$ is the inverse matrix of $\mathbf P$, it is upper triangular, and its diagonal elements are inverse of the corresponding diagonal elements of $\mathbf P$. That is to say,
\begin{equation}\label{E2}
q_{j,j}=1,\quad j=1,\dots,L.
\end{equation}
By equation (\ref{SimTrans}) we know that
\begin{equation*}
(\mathbf R'_{BD}\mathbf{q}_1,\dots,\mathbf R'_{BD}\mathbf{q}_L)'=\mathbf Q\mathbf R_{BD}=\boldsymbol\Lambda \mathbf Q=(\lambda_1\mathbf{q}_1,\dots,\lambda_L\mathbf{q}_L)',
\end{equation*}
which implies that
\begin{equation*}
    r_{j-1,j}q_{i,j-1}+r_{j,j}q_{i,j}=r_{i,i}q_{i,j}, \quad i=1,\dots,L,\,\,j=1,\dots,L.
\end{equation*}
Combining it with equation (\ref{E2}), we know
\begin{equation*}
\left\{
\begin{array}{ll}
  q_{i,j}=0;&i=1,\dots,j-1;\\
  q_{i,j}=1;&i=j;\\
  q_{i,j}=\prod\limits_{k=i+1}^{j}\frac{r_{k-1,k}}{r_{i,i}-r_{k,k}}; &i=j+1,\dots, L,
  \end{array}\quad j=1,\dots,L.
  \right.
\end{equation*}
Then,
$\mathbf{Q}=\left(\mathbf{q}_1,\dots,\mathbf{q}_L\right)'$,
where
\begin{equation*}
\mathbf{q}'_{j}=\left(0,\dots,0,1,\frac{r_{j,j+1}}{r_{j,j}-r_{j+1,j+1}},\prod_{k=j+1}^{j+2}\frac{r_{k-1,k}}{r_{j,j}-r_{k,k}},\dots,\prod_{k=j+1}^{n}\frac{r_{k-1,k}}{r_{j,j}-r_{k,k}}\right),j=1\dots,L.
\end{equation*}

Finally, from (\ref{Diagonalization}) we conclude that
\begin{equation*}
  \mathbf{R}_{BD}^t=\mathbf{P}\boldsymbol{\Lambda}^t\mathbf{P}^{-1}=\mathbf{P}\boldsymbol{\Lambda}^t\mathbf{Q}
  =(\mathbf{p}_1,\dots,\mathbf{p}_L)diag\{\lambda_1^t,\dots,\lambda_L^t\}(\mathbf{q}_1,\dots,\mathbf{q}_L)'
  =\sum_{j=1}^{L}\lambda_j^t\mathbf{p}_j\mathbf{q}'_j.
\end{equation*}
\end{proof}

Then, the expected approximation error of a bi-diagonal search characterized by (\ref{BiDiagTran}) can be confirmed by the following theorem.

\begin{theorem}\label{Th_BD}
For non-optimal statuses, denote $\mathbf{e}=(e_1,\dots,e_L)'$ and  $\mathbf{p}^{[0]}=(p^{[0]}_1,\dots,p^{[0]}_n)'$. It holds that
\begin{equation*}
  e_{BD}^{[t]}=\sum_{j=1}^{L}\lambda_j^t\left(\mathbf{e}'\mathbf{p}_j\right)\left(\mathbf{q}_j'\mathbf{p}^{[0]}\right),
\end{equation*}
where $\mathbf{p}_j$ and $\mathbf{q}_j$ are confirmed by (\ref{Eigen1}) and (\ref{ColTransMatrixIn}), respectively.
\end{theorem}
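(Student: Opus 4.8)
The plan is to recognize that this theorem is essentially a corollary of the two lemmas already established, so the entire argument is a chain of substitutions rather than any new analysis. I would start from the defining formula for the expected approximation error, equation (\ref{Sec1ExpError}), namely $e^{[t]}=\mathbf{\tilde{e}}'\mathbf{\tilde{R}}^t\mathbf{\tilde{p}}^{[0]}$, specialized to the bi-diagonal case where $\mathbf{\tilde R}$ has the block form (\ref{TranMatrix}) with $\mathbf R=\mathbf R_{BD}$. Since the optimal status contributes zero error ($e_0=0$), the first move is to invoke Lemma \ref{Sec1L1} to collapse the full $(L+1)$-dimensional product down to the non-optimal block, giving $e_{BD}^{[t]}=\mathbf{e}'\mathbf{R}_{BD}^t\mathbf{p}^{[0]}$. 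This step is what lets us discard both the initial mass on the optimum and the transition probabilities into the optimum, exactly the independence noted before Lemma \ref{Sec1L1}.

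The second move is to substitute the dyadic (rank-one) decomposition of the matrix power supplied by Lemma \ref{Sec2L3}, $\mathbf{R}_{BD}^t=\sum_{j=1}^{L}\lambda_j^t\,\mathbf{p}_j\mathbf{q}'_j$, into this scalar expression. By linearity of the matrix product in its middle factor, I would pull the finite sum and the scalars $\lambda_j^t$ outside, leaving $e_{BD}^{[t]}=\sum_{j=1}^{L}\lambda_j^t\,\mathbf{e}'\bigl(\mathbf{p}_j\mathbf{q}'_j\bigr)\mathbf{p}^{[0]}$. The only point that requires a moment's care is the associativity and regrouping of the triple product: because each $\mathbf{p}_j\mathbf{q}'_j$ is an outer product, the expression $\mathbf{e}'(\mathbf{p}_j\mathbf{q}'_j)\mathbf{p}^{[0]}$ factors into the product of two scalars, $\bigl(\mathbf{e}'\mathbf{p}_j\bigr)\bigl(\mathbf{q}'_j\mathbf{p}^{[0]}\bigr)$, since $\mathbf{e}'\mathbf{p}_j$ and $\mathbf{q}'_j\mathbf{p}^{[0]}$ are each $1\times 1$. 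Assembling these yields the claimed closed form.

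There is really no hard part here in the analytic sense: all the genuine work — diagonalizing the bi-diagonal matrix and writing down the eigenvectors $\mathbf{p}_j$ and left eigenvectors $\mathbf{q}'_j$ explicitly — was already completed in Lemma \ref{Sec2L3}, and the reduction of dimension was handled in Lemma \ref{Sec1L1}. The one thing I would be careful to state cleanly is precisely the rank-one factorization of the triple product, since that is the single structural observation that turns the matrix identity into the scalar sum in the theorem. Accordingly I would present the proof as a short displayed computation that (i) applies Lemma \ref{Sec1L1}, (ii) inserts the decomposition from Lemma \ref{Sec2L3}, and (iii) uses the outer-product factorization, with $\lambda_j=r_{j,j}$ identified from the diagonal of $\mathbf R_{BD}$.
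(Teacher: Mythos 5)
Your proposal is correct and follows exactly the paper's own argument: apply Lemma \ref{Sec1L1} to reduce to $\mathbf{e}'\mathbf{R}_{BD}^t\mathbf{p}^{[0]}$, insert the decomposition $\mathbf{R}_{BD}^t=\sum_{j=1}^{L}\lambda_j^t\mathbf{p}_j\mathbf{q}'_j$ from Lemma \ref{Sec2L3}, and factor each term as the product of the scalars $\mathbf{e}'\mathbf{p}_j$ and $\mathbf{q}'_j\mathbf{p}^{[0]}$. No differences of substance.
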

\begin{proof}
Applying Lemmas \ref{Sec1L1} and \ref{Sec2L3}, we know that
\begin{align*}
  e_{BD}^{[t]}=\mathbf{e}'\mathbf{R}_{BD}^t\mathbf{p}^{[0]}=\mathbf{e}'\left(\sum_{j=1}^{L}\lambda_j^t\mathbf{p}_j\mathbf{q}_j'\right)\mathbf{p}^{[0]}=\sum_{j=1}^{L}\lambda_j^t\left(\mathbf{e}'\mathbf{p}_j\right)\left(\mathbf{q}_j'\mathbf{p}^{[0]}\right).
\end{align*}
\end{proof}

\subsection{Expected Approximation Error of an Elitist Search}
When the transition matrix $\mathbf{\tilde{R}}$ is upper triangular,  we would like to estimate not the precise expression but an upper bound of the approximation error. For an elitist search characterized by (\ref{UppTran}), this idea could be realized by constructing an auxiliary bi-diagonal search that converges more slowly than the original one.

\begin{lemma}\label{Sec3L4}
\cite{he2018theoretical} Provided that transition matrices $\mathbf{\tilde{R}}=(r_{i,j})_{(L+1)\times(L+1)}$ and $\mathbf{\tilde{S}}=(s_{i,j})_{(L+1)\times(L+1)}$ are upper triangular. If
\begin{align}
\label{conC1}
&s_{j,j}  \ge  r_{j,j}, &&\textrm{for any } j,
\\
\label{conC2}
&\textstyle \sum^{i-1}_{l=0} (r_{l,j}-s_{l,j}) \ge 0 , &&\textrm{for  any }i<j,
\\
\label{conC3}
&\textstyle  \sum^{i}_{l=0} ( s_{l,j-1}- s_{l,j})\ge 0 , &&\textrm{for any }i<j-1,
\end{align}
it holds that
\begin{equation*}
  \mathbf{T}(\mathbf{\tilde{R}})^t\le \mathbf{T}(\mathbf{\tilde{S}})^t,\quad \forall\,\,t\in\mathbb Z^+,
\end{equation*}
where
\begin{equation*}
  T=\left(\begin{array}{ccc}
      1 & \dots & 1 \\
       & \ddots & \vdots \\
       &   & 1
    \end{array}\right).
\end{equation*}
\end{lemma}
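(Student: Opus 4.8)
The plan is to read the claim as a stochastic-domination statement and prove it by induction on $t$ via a telescoping identity. Writing $\mathbf{T}$ for the $(L+1)\times(L+1)$ upper-triangular all-ones matrix, the $(i,j)$ entry of $\mathbf{T}\mathbf{\tilde{R}}^t$ equals $\sum_{l\ge i}(\mathbf{\tilde{R}}^t)_{l,j}$, i.e.\ the probability that, starting from status $j$, the chain sits in a status no better than $i$ after $t$ steps. This reading explains the three hypotheses: (\ref{conC1})--(\ref{conC2}) encode that one step of $\mathbf{\tilde{S}}$ dominates one step of $\mathbf{\tilde{R}}$, while (\ref{conC3}) is a stochastic-monotonicity condition on $\mathbf{\tilde{S}}$ alone. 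The engine of the induction is the splitting
\begin{equation*}
\mathbf{T}\mathbf{\tilde{R}}^t-\mathbf{T}\mathbf{\tilde{S}}^t
=(\mathbf{T}\mathbf{\tilde{R}}-\mathbf{T}\mathbf{\tilde{S}})\mathbf{\tilde{R}}^{t-1}
+(\mathbf{T}\mathbf{\tilde{S}}\mathbf{T}^{-1})(\mathbf{T}\mathbf{\tilde{R}}^{t-1}-\mathbf{T}\mathbf{\tilde{S}}^{t-1}),
\end{equation*}
obtained by inserting $\pm\,\mathbf{T}\mathbf{\tilde{S}}\,\mathbf{\tilde{R}}^{t-1}$ and using $\mathbf{T}^{-1}\mathbf{T}=\mathbf{I}$. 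I would show that both summands are entrywise nonpositive, which propagates $\mathbf{T}\mathbf{\tilde{R}}^{t-1}\le\mathbf{T}\mathbf{\tilde{S}}^{t-1}$ to step $t$.

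For the base case $\mathbf{T}\mathbf{\tilde{R}}\le\mathbf{T}\mathbf{\tilde{S}}$ I would compare the entries $\sum_{l\ge i}r_{l,j}$ and $\sum_{l\ge i}s_{l,j}$ directly. Since each column of a transition matrix sums to $1$, for $i\le j$ one has $\sum_{l\ge i}r_{l,j}=1-\sum_{l=0}^{i-1}r_{l,j}$, so the desired inequality is equivalent to $\sum_{l=0}^{i-1}(r_{l,j}-s_{l,j})\ge 0$; this is exactly (\ref{conC2}) when $i<j$, and it reduces to $r_{j,j}\le s_{j,j}$, i.e.\ (\ref{conC1}), when $i=j$. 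The entries with $i=0$ are both $1$ and those with $i>j$ are both $0$ by upper-triangularity, so they contribute equalities. Hence $\mathbf{T}\mathbf{\tilde{R}}-\mathbf{T}\mathbf{\tilde{S}}$ is entrywise nonpositive, and multiplying on the right by the nonnegative matrix $\mathbf{\tilde{R}}^{t-1}$ keeps the first summand nonpositive.

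The crux is the second summand, and this is where (\ref{conC3}) enters. By the induction hypothesis $\mathbf{D}:=\mathbf{T}\mathbf{\tilde{R}}^{t-1}-\mathbf{T}\mathbf{\tilde{S}}^{t-1}$ is entrywise nonpositive, so it suffices to prove that $\mathbf{M}:=\mathbf{T}\mathbf{\tilde{S}}\mathbf{T}^{-1}$ is entrywise nonnegative, since then $(\mathbf{M}\mathbf{D})_{i,j}=\sum_l M_{i,l}D_{l,j}\le 0$. Using that $\mathbf{T}^{-1}$ is bidiagonal with $1$ on the diagonal and $-1$ on the superdiagonal, a short computation gives $M_{i,j}=(\mathbf{T}\mathbf{\tilde{S}})_{i,j}-(\mathbf{T}\mathbf{\tilde{S}})_{i,j-1}=\sum_{l=0}^{i-1}(s_{l,j-1}-s_{l,j})$ for $1\le i\le j$, which is nonnegative precisely by (\ref{conC3}); the remaining entries ($i=0$, $i=j$, or $i>j$) I would dispatch using only the upper-triangular and column-stochastic structure of $\mathbf{\tilde{S}}$.

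I expect the main obstacle to be bookkeeping rather than conceptual: identifying the correct splitting identity and then verifying $\mathbf{M}\ge\mathbf{0}$ at the boundary indices that (\ref{conC3}) does not literally cover, namely the diagonal $i=j$, where one must check $M_{j,j}=s_{j,j}\ge 0$ separately, and the first row. Once the identity and the sign of $\mathbf{M}$ are established, the induction closes at once, and the case $t=1$ is subsumed because $\mathbf{\tilde{R}}^{0}=\mathbf{\tilde{S}}^{0}=\mathbf{I}$ makes the second summand vanish.
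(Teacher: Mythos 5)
The paper does not prove this lemma at all: it is imported verbatim from He \emph{et al.}~\cite{he2018theoretical}, so there is no in-paper proof to compare against. Judged on its own, your argument is correct and self-contained, and it is essentially the standard route for such stochastic-domination results. The three pillars all check out: the base case $\mathbf{T}\mathbf{\tilde{R}}\le\mathbf{T}\mathbf{\tilde{S}}$ follows from column-stochasticity together with (\ref{conC2}) for $i<j$, (\ref{conC1}) for $i=j$ (where $\sum_{l=0}^{j-1}(r_{l,j}-s_{l,j})=s_{j,j}-r_{j,j}$ by upper-triangularity), and trivial equalities at $i=0$ and $i>j$; the splitting identity is algebraically valid since $\mathbf{T}$ is invertible with the bidiagonal inverse you describe; and the entrywise nonnegativity of $\mathbf{M}=\mathbf{T}\mathbf{\tilde{S}}\mathbf{T}^{-1}$ reduces, after the index shift $M_{i,j}=\sum_{l=0}^{i-1}(s_{l,j-1}-s_{l,j})$, exactly to (\ref{conC3}) for $1\le i<j$, with the boundary entries $M_{j,j}=s_{j,j}\ge 0$, $M_{0,j}=0$, $M_{i,j}=0$ for $i>j$, and $M_{i,0}\ge 0$ handled as you indicate. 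The only implicit assumption worth stating explicitly is that both $\mathbf{\tilde{R}}$ and $\mathbf{\tilde{S}}$ have unit column sums and nonnegative entries, which is what licenses the identity $\sum_{l\ge i}s_{l,j}=1-\sum_{l=0}^{i-1}s_{l,j}$ and the sign argument for the first summand; this is guaranteed by the hypothesis that they are transition matrices, but since the auxiliary matrix $\mathbf{\tilde{S}}$ is constructed by hand in the applications, it deserves a sentence.
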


Construct an auxiliary search characterized by $\mathbf{\tilde{S}}$ in Lemma \ref{Sec3L4}, we can get the upper bound of approximation error for the elitist search.
\begin{theorem}\label{Th_ES}
Let $\mathbf{e}'=(e_1,\dots,e_L)$, $e_{i}\le e_{i+1}$, $i=1,\dots,L-1$, and $\mathbf{r}$ be a nonnegative $L$-dimensional vector. If transition matrices $\mathbf{\tilde{R}}$ and $\mathbf{\tilde{S}}$ satisfy conditions (\ref{conC1})-(\ref{conC3}), it holds that
  $$\mathbf{e}'\mathbf{R}^t\mathbf{r}\le \mathbf{e}'\mathbf{S}^t\mathbf{r}.$$
where $\mathbf{R}$ and $\mathbf{S}$ are the transition submatrices of $\mathbf{\tilde{R}}$ and $\mathbf{\tilde{S}}$, respectively.
\end{theorem}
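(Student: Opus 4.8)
The plan is to reduce the claim to the matrix comparison already furnished by Lemma \ref{Sec3L4}. That lemma does the genuinely hard work --- turning the three structural conditions (\ref{conC1})--(\ref{conC3}) into the entrywise bound $\mathbf{T}(\mathbf{\tilde{R}})^t\le\mathbf{T}(\mathbf{\tilde{S}})^t$ --- so the remaining task is purely to convert this matrix inequality into the scalar inequality $\mathbf{e}'\mathbf{R}^t\mathbf{r}\le\mathbf{e}'\mathbf{S}^t\mathbf{r}$. The bridge is the observation that, because $\mathbf{T}$ is the cumulative-sum operator (all ones on and above the diagonal), the monotonicity hypothesis $e_1\le\dots\le e_L$ lets us factor $\mathbf{e}'$ through $\mathbf{T}$ with a nonnegative coefficient vector.

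First I would record this factorization. Setting $e_0=0$ and $d_k=e_k-e_{k-1}$ for $k=1,\dots,L$, the monotonicity of $\mathbf{e}$ together with $e_1\ge 0$ (errors are nonnegative, cf. (\ref{Sec1MonoError})) gives $d_k\ge 0$ for every $k$. Writing $\mathbf{U}$ for the $L\times L$ upper-triangular all-ones matrix, which is the lower-right block of the $(L+1)\times(L+1)$ matrix $\mathbf{T}$ of Lemma \ref{Sec3L4}, a direct index check shows $(\mathbf{d}'\mathbf{U})_i=\sum_{k\le i}d_k=e_i$, that is, $\mathbf{e}'=\mathbf{d}'\mathbf{U}$ with $\mathbf{d}=(d_1,\dots,d_L)'\ge\mathbf{0}$.

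Next I would pass from the $(L+1)$-dimensional comparison of Lemma \ref{Sec3L4} to an $L$-dimensional one by exploiting the block structure (\ref{TranMatrix}). Since $\mathbf{\tilde{R}}=\left(\begin{smallmatrix}1 & \mathbf{r}_0\\ \mathbf{0} & \mathbf{R}\end{smallmatrix}\right)$, its powers inherit the same shape, $(\mathbf{\tilde{R}})^t=\left(\begin{smallmatrix}1 & *\\ \mathbf{0} & \mathbf{R}^t\end{smallmatrix}\right)$, and writing $\mathbf{T}=\left(\begin{smallmatrix}1 & \mathbf{1}'\\ \mathbf{0} & \mathbf{U}\end{smallmatrix}\right)$ one sees that the lower-right $L\times L$ block of $\mathbf{T}(\mathbf{\tilde{R}})^t$ is exactly $\mathbf{U}\mathbf{R}^t$; the analogous block of $\mathbf{T}(\mathbf{\tilde{S}})^t$ is $\mathbf{U}\mathbf{S}^t$. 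Restricting the entrywise inequality of Lemma \ref{Sec3L4} to this block therefore yields $\mathbf{U}\mathbf{R}^t\le\mathbf{U}\mathbf{S}^t$ componentwise.

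Finally I would assemble the pieces,
$$\mathbf{e}'\mathbf{R}^t\mathbf{r}=\mathbf{d}'(\mathbf{U}\mathbf{R}^t)\mathbf{r}\le\mathbf{d}'(\mathbf{U}\mathbf{S}^t)\mathbf{r}=\mathbf{e}'\mathbf{S}^t\mathbf{r},$$
where the inequality holds because $\mathbf{d}\ge\mathbf{0}$ and $\mathbf{r}\ge\mathbf{0}$ turn the componentwise matrix bound into a bound on the bilinear form. The one step that needs care --- and essentially the only place the argument can go wrong --- is the bookkeeping that matches the $(L+1)$-dimensional objects of Lemma \ref{Sec3L4} with the $L$-dimensional submatrices in the statement, making sure the block restriction is legitimate and that the sign conditions $d_k\ge 0$ and $r_j\ge 0$ are invoked in exactly the right places; the analytic substance is entirely absorbed into Lemma \ref{Sec3L4}.
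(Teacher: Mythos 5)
Your proposal is correct and follows essentially the same route as the paper: both factor $\mathbf{e}'$ as a nonnegative difference vector times the upper-triangular all-ones matrix, invoke Lemma \ref{Sec3L4} for the componentwise bound $\mathbf{T}\mathbf{R}^t\le\mathbf{T}\mathbf{S}^t$, and close with nonnegativity of the difference vector and of $\mathbf{r}$. Your explicit block-restriction step reconciling the $(L+1)$-dimensional statement of Lemma \ref{Sec3L4} with the $L$-dimensional submatrices is bookkeeping the paper leaves implicit, but it is the same argument.
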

\begin{proof}
Given vector $\mathbf{e}=(e_1,\dots,e_L)'$ with $e_{i}\le e_{i+1}$, $i=1,\dots,L-1$, and a nonnegative vector $\mathbf{r}$, we know that
\begin{align}
  &\mathbf{e}'\mathbf{R}^t\mathbf{r}-\mathbf{e}'\mathbf{S}^t\mathbf{r} =\mathbf{e}'\left(\mathbf{R}^t-\mathbf{S}^t\right)\mathbf{r}\nonumber\\
  =&(e_1,e_2-e_1,\dots,e_L-e_{L-1})\mathbf{T}\left(\mathbf{R}^t-\mathbf{S}^t\right)\mathbf{r}\nonumber\\
  \le & (e_1,e_2-e_1,\dots,e_L-e_{L-1})\left(\mathbf{T}\mathbf{R}^t-\mathbf{T}\mathbf{S}^t\right)\mathbf{r}.\nonumber
\end{align}
Then, Lemma \ref{Sec3L4} implies that
$$\mathbf{e}'\mathbf{R}^t\mathbf{r}-\mathbf{e}'\mathbf{S}^t\mathbf{r}\le 0,$$
that is, $\mathbf{e}'\mathbf{R}^t\mathbf{r}\le \mathbf{e}'\mathbf{S}^t\mathbf{r}$.
\end{proof}

By Theorem \ref{Th_ES}, we can set $\mathbf{r}$ as the initial distribution vector of non-optimal statuses to estimate the upper bound of expected approximation errors. To obtain a tight upper bound of the expected approximation error, the method for construction of such an auxiliary bi-diagonal search is problem-dependent.

\section{Case Study}\label{SecCase}
In this section, we would like to perform several case studies to demonstrate the feasibility of analysis routines proposed in Section \ref{SecGen}. For comparison, both the RLS and the (1+1)EA are considered for maximization of the following benchmark problems.
\begin{problem}\label{P1}(\textbf{OneMax})
\begin{equation*}
  \max f(\mathbf x)=\sum_{i=1}^nx_i, \quad\mathbf x=(x_1,\dots,x_n)\in \{0,1\}^n.
\end{equation*}
\end{problem}

\begin{problem}\label{P2}(\textbf{Peak})
\begin{equation*}
  \max f(\mathbf x)=\prod_{i=1}^{n}x_i,\quad\mathbf x=(x_1,\dots,x_n)\in \{0,1\}^n.
\end{equation*}
\end{problem}

\begin{problem}\label{P3}(\textbf{Deceptive Problem})
\begin{equation*}
  \max f(\mathbf x)=\left\{\begin{aligned}& \sum_{i=1}^nx_i, && \mbox{if }\sum_{i=1}^nx_i>n-1,\\ & n-1-\sum_{i=1}^nx_i, && \mbox{otherwise.} \end{aligned}\right.\quad\mathbf x=(x_1,\dots,x_n)\in \{0,1\}^n.
\end{equation*}
\end{problem}

\subsection{The OneMax Problem}
Objective value of the OneMax problem is the number of 1-bits in the bit-string, and the approximation error is number of 0-bits. Thus, the solution space can be divided into $n+1$ statuses labeled by their approximation errors. That is,  $$\mathbf{\tilde{e}}=(0,1,2,\dots,n)'.$$
Correspondingly, the initial distribution of status generated by random initialization is $$\mathbf{\tilde{p}}^{[0]}=(C_n^0/2^n,C_n^1/2^n,C_n^2/2^n,\dots,C_n^n/2^n)'.$$ Then, for non-optimal statues, the error vector and initial distribution are
\begin{align}\label{ErOneMax}
  \mathbf{e}=(1,2,\dots,n)',
\end{align}
and
\begin{align}\label{DiOneMax}
  \mathbf{p}^{[0]}=(C_n^1/2^n,C_n^2/2^n,\dots,C_n^n/2^n)'.
\end{align}
The expected approximation error of RLS is  given by the follow theorem.

\begin{theorem}\label{Th_RLS_OneMax}
The expected approximation error of RLS for the OneMax problem is $\frac{n}{2}\left(1-\frac{1}{n}\right)^t$.
\end{theorem}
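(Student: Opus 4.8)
The plan is to first pin down the Markov-chain structure that RLS induces on OneMax, and then exploit the fact that the one-step expected error is a homogeneous linear function of the current status, which collapses the whole computation to a one-line recursion.

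First I would identify the statuses. Since the objective value of OneMax is the number of $1$-bits, the approximation error of a solution is exactly its number of $0$-bits, so there are $L=n$ non-optimal statuses and the error vector is $\mathbf{e}=(1,2,\dots,n)'$ as in (\ref{ErOneMax}). From a solution at status $j$ (i.e.\ with $j$ zero-bits), one-bit mutation flips a uniformly chosen bit: with probability $j/n$ it flips one of the $j$ zero-bits, strictly increasing the fitness, so the elitist acceptance rule of Algorithm \ref{alg1} moves the chain to status $j-1$; with the remaining probability $(n-j)/n$ it flips a one-bit, which is rejected and the chain stays at status $j$. Hence the transition submatrix $\mathbf{R}$ is bi-diagonal of the form (\ref{BiDiagTran}) with $r_{j,j}=1-j/n$ and $r_{j-1,j}=j/n$. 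Its diagonal entries $1-j/n$, $j=1,\dots,n$, are pairwise distinct, so Theorem \ref{Th_BD} is in principle applicable, with eigenvalues $\lambda_j=1-j/n$.

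Rather than grind through the eigenvector products $\mathbf{e}'\mathbf{p}_j$ and $\mathbf{q}_j'\mathbf{p}^{[0]}$ of Theorem \ref{Th_BD}, I would observe that the transition rule above makes the conditional one-step error linear in the status: given status $i$, the next error equals $i-1$ with probability $i/n$ and $i$ otherwise, so its conditional expectation is $(i-1)\cdot\frac{i}{n}+i\cdot\left(1-\frac{i}{n}\right)=i\left(1-\frac{1}{n}\right)$. Taking expectations over the status distribution $\mathbf{p}^{[t]}$ and using linearity gives the closed recursion
\begin{equation*}
  e^{[t+1]}=\left(1-\frac{1}{n}\right)e^{[t]},
\end{equation*}
whose solution is $e^{[t]}=\left(1-\frac{1}{n}\right)^t e^{[0]}$. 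It remains to evaluate the initial error: under uniform random initialization the number of $0$-bits follows Binomial$(n,1/2)$, so $e^{[0]}=\sum_{i=0}^n i\,C_n^i/2^n=n/2$, consistent with the initial distribution (\ref{DiOneMax}). Substituting yields $e^{[t]}=\frac{n}{2}\left(1-\frac1n\right)^t$.

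The only real subtlety, and the step I would double-check most carefully, is the claim that the expected error obeys an \emph{exact}, not merely approximate, recursion; this hinges on the conditional expectation $i(1-1/n)$ being exactly linear and homogeneous in $i$, which in turn relies on the improvement probability from status $i$ being precisely $i/n$ under one-bit mutation. This linearity is also what explains why the answer is a pure single-eigenvalue expression: the surviving rate $1-1/n$ is exactly the top eigenvalue $\lambda_1$ of $\mathbf{R}$, so the eigen-expansion of Theorem \ref{Th_BD} must have vanishing coefficients for all $\lambda_j$ with $j\ge2$. The drift recursion is simply a shortcut that delivers $\frac{n}{2}\left(1-\frac1n\right)^t$ without verifying those cancellations directly.
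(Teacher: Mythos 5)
Your proof is correct, but it takes a genuinely different route from the paper's. The paper treats this as the first illustration of its general framework: it writes down the bi-diagonal transition submatrix (\ref{TrOneRLS}), invokes Theorem \ref{Th_BD} to expand $e^{[t]}=\sum_{j=1}^{n}\lambda_j^t(\mathbf{e}'\mathbf{p}_j)(\mathbf{q}_j'\mathbf{p}^{[0]})$, and then verifies in \ref{AppendixAA} by explicit combinatorial identities that $\mathbf{e}'\mathbf{p}_j=0$ for all $j\ge 2$ while $\mathbf{e}'\mathbf{p}_1=1$ and $\mathbf{q}_1'\mathbf{p}^{[0]}=n/2$ — i.e.\ it proves directly the cancellations you only infer must happen. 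Your argument instead exploits the exact linearity of the conditional one-step error, $\mathbb{E}[e(\mathbf{x}_{t+1})\mid \mathbf{x}_t \text{ at status } i]=i(1-1/n)$, to obtain the closed recursion $e^{[t+1]}=(1-1/n)e^{[t]}$ by the law of total expectation; this is essentially the route the paper itself attributes to Jansen and Zarges (law of total probability) and to the constant-convergence-rate argument of He \emph{et al.}, and it is shorter and more transparent here. What the eigen-expansion buys in exchange is generality: for the (1+1)EA on OneMax, the Deceptive problem, and the Knapsack instance the conditional expected error is no longer an exact homogeneous linear function of the status, so no one-line recursion exists, whereas the matrix-power framework (combined with the auxiliary bi-diagonal search of Theorem \ref{Th_ES}) still applies. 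Your computation of the transition probabilities, of $e^{[0]}=n/2$, and your flagged subtlety (that the recursion is exact, including at status $0$ where both sides vanish) are all sound.
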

\begin{proof}
Combining the one-bit mutation with elitist selection, RLS transfer from status $j$ to $j-1$ with probability $j/n$; otherwise, its individual status keeps unchanged. Thus, application of RLS on the unimodal OneMax problem generates a bi-diagonal search, the transition submatrix of which is
\begin{equation}\label{TrOneRLS}
  \mathbf{R}=(r_{i,j})_{n\times n}=\left(
                 \begin{array}{ccccc}
                   1-1/n & 2/n &   &   &   \\
                     & 1-2/n & 3/n &   &   \\
                     &   & \ddots & \ddots &   \\
                     &   &   & 1/n & 1 \\
                     &   &   &   & 0 \\
                 \end{array}
               \right).
\end{equation}

It is trivial to check that conditions of Theorem \ref{Th_BD} hold, and we know that
\begin{equation}\label{err1}
  e^{[t]}=\mathbf{e}'\mathbf{R}^t\mathbf{p}^{[0]}=\mathbf{e}'\left(\sum_{j=1}^{n}\lambda_j^t\mathbf{p}_j\mathbf{q}'_j\right)\mathbf{p}^{[0]}=\sum_{j=1}^{n}\lambda_j^t\left(\mathbf{e}'\mathbf{p}_j\right)\left(\mathbf{q}_j'\mathbf{p}^{[0]}\right),
\end{equation}
where $\lambda_j=r_{j,j}$, $\mathbf{p}_j$ and $\mathbf{q}_{j}$ are defined by (\ref{Eigen1}) and (\ref{ColTransMatrixIn}), respectively.
Substituting (\ref{ErOneMax}), (\ref{DiOneMax}), (\ref{temp1}) and (\ref{temp2}) to (\ref{err1}), we conclude that
  $$e^{[t]}=\sum_{j=1}^{n}\lambda_j^t\left(\mathbf{e}'\mathbf{p}_j\right)\left(\mathbf{q}'_j\mathbf{p}^{[0]}\right)=\lambda_1^t(\mathbf{e}'\mathbf{p}_1)(\mathbf{q}'_1\mathbf{p}^{[0]})=\frac{n}{2}\left(1-\frac{1}{n}\right)^t.$$
\end{proof}

The same results about performance of RLS on the OneMax problem have also been reported in \cite{jansen2014performance,he2019unlimited1}. Jansen and Zarges get this results by the law of total probability~\cite{jansen2014performance}, and He \emph{et al.} get it with the help of the constant convergence rate~\cite{he2019unlimited1}. {\color{red} If the iteration budget $t=an^k$, $(a>0,k\in\mathbb{Z}^+)$,
we know
$e^{[t]}=\frac{n}{2}\left(1-\frac{1}{n}\right)^{an^k}$.
Because
\begin{equation*}
  \frac{1}{2e}<\left(1-\frac{1}{n}\right)^{n}<\frac{1}{2},\quad n\ge 2,
\end{equation*}
it holds that
\begin{equation*}
  \frac{n}{2}\left((2e)^a\right)^{-n^{k-1}}<e^{[t]}<\frac{n}{2}\left(2^a\right)^{-n^{k-1}},
\end{equation*}
which indicates the  asymptotic expected approximation error is $C^{-\Omega(n^{k-1})}$ for some constant $C>1$.}

There are several ways to analyze performance of RLS on OneMax because the distribution of status transition is very simple. When the (1+1)EA is employed, the analyzing process presented in \cite{jansen2014performance} is very complicated. However, estimation of expected approximation error for the (1+1)EA is just a simple implementation of the analysis framework proposed in Section \ref{SecGen}.

\begin{theorem}\label{Th_EA_OneMax}
The expected approximation error of (1+1)EA for the OneMax problem satisfies
\begin{equation*}
  e^{[t]}\le\frac{n}{2}\left(1-\frac{1}{ne}\right)^t.
\end{equation*}
\end{theorem}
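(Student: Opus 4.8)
The plan is to recognize that the $(1+1)$EA on OneMax produces a \emph{full} upper-triangular transition matrix rather than a bi-diagonal one, because bitwise mutation can flip several bits at once and hence jump across several statuses in a single step. Consequently Theorem \ref{Th_BD} does not apply directly, and I would instead invoke the elitist-search comparison result, Theorem \ref{Th_ES}: it suffices to exhibit an auxiliary bi-diagonal transition matrix $\tilde{\mathbf S}$ that converges no faster than the genuine chain $\tilde{\mathbf R}$ and whose error equals the claimed bound. As before I label statuses by the number of zero-bits, so $\mathbf e=(1,2,\dots,n)'$ and $\mathbf p^{[0]}$ are given by (\ref{ErOneMax})--(\ref{DiOneMax}).

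The construction of $\tilde{\mathbf S}$ rests on a single probability estimate. From status $j$ the $(1+1)$EA improves with probability at least that of flipping exactly one of the $j$ zero-bits and leaving the rest fixed, namely $\frac{j}{n}\left(1-\frac1n\right)^{n-1}$; since $\left(1-\frac1n\right)^{n-1}\ge \frac1e$, the aggregate improvement probability is at least $\frac{j}{ne}$. This suggests taking $\tilde{\mathbf S}$ bi-diagonal with
\begin{equation*}
  s_{j,j}=1-\frac{j}{ne},\qquad s_{j-1,j}=\frac{j}{ne},
\end{equation*}
i.e.\ exactly the RLS matrix (\ref{TrOneRLS}) with $\frac1n$ replaced by $\frac{1}{ne}$, which models a slower single-step descent.

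Next I would verify the three hypotheses of Lemma \ref{Sec3L4}/Theorem \ref{Th_ES}. Condition (\ref{conC1}) is precisely the estimate above: $1-r_{j,j}\ge \frac{j}{ne}=1-s_{j,j}$, so $s_{j,j}\ge r_{j,j}$. Conditions (\ref{conC2}) and (\ref{conC3}) then hold essentially by the shape of $\tilde{\mathbf S}$: in column $j$ the only nonzero entries of $\tilde{\mathbf S}$ sit in rows $j-1$ and $j$, so for any $i<j$ the truncated sum $\sum_{l=0}^{i-1}s_{l,j}$ vanishes and (\ref{conC2}) reduces to $\sum_{l=0}^{i-1}r_{l,j}\ge 0$; similarly for $i<j-1$ the entries of column $j$ up to row $i$ vanish while those of column $j-1$ are nonnegative, giving (\ref{conC3}).

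Finally I would evaluate the auxiliary error. Because the eigenvector formulas (\ref{Eigen1})--(\ref{ColTransMatrixIn}) depend only on the ratios $r_{k,k+1}/(r_{j,j}-r_{k,k})$, in which the common factor $\frac1n$ (resp.\ $\frac{1}{ne}$) cancels, the coefficients $\mathbf e'\mathbf p_j$ and $\mathbf q_j'\mathbf p^{[0]}$ for $\tilde{\mathbf S}$ coincide with those computed in the proof of Theorem \ref{Th_RLS_OneMax}; hence only the $j=1$ term survives and $\mathbf e'\mathbf S^t\mathbf p^{[0]}=\frac n2\left(1-\frac1{ne}\right)^t$ (equivalently, a direct multiplicative-drift computation gives $E[e^{[t+1]}\mid j]=j\left(1-\frac1{ne}\right)$ together with $e^{[0]}=\mathbf e'\mathbf p^{[0]}=n/2$). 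Theorem \ref{Th_ES} then yields $e^{[t]}=\mathbf e'\mathbf R^t\mathbf p^{[0]}\le \mathbf e'\mathbf S^t\mathbf p^{[0]}=\frac n2\left(1-\frac1{ne}\right)^t$. The main obstacle is the first step: correctly lower-bounding the aggregate improvement probability by $\frac{j}{ne}$ (via the one-bit-flip event and the inequality $\left(1-\frac1n\right)^{n-1}\ge \frac1e$) so that condition (\ref{conC1}) holds uniformly in $j$; once the auxiliary matrix is pinned down, the remaining verifications and the error evaluation are routine.
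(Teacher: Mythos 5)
Your proposal is correct and follows essentially the same route as the paper: label statuses by the number of zero-bits, lower-bound the improvement probability from status $j$ by the one-bit-flip event $\frac{j}{n}\left(1-\frac{1}{n}\right)^{n-1}$, build an auxiliary bi-diagonal chain, check conditions (\ref{conC1})--(\ref{conC3}) of Theorem \ref{Th_ES}, and exploit the cancellation in the eigenvector ratios so that only the $j=1$ term survives. The only (harmless) difference is that you replace $\left(1-\frac{1}{n}\right)^{n-1}$ by $\frac{1}{e}$ when constructing $\tilde{\mathbf S}$, whereas the paper keeps the exact factor in $\tilde{\mathbf S}$ and applies that inequality only in the final line.
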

\begin{proof}
Since the bitwise mutation can locate any solution with a positive probability, the (1+1)EA applied to OneMax generates an elitist search with  $r_{i,j}>0,\forall\,0\le i\le j\le L$. Thus, We estimate its approximation error by constructing an auxiliary bi-diagonal search.

For a solution $\mathbf{x}$ with status $j$, let us consider a special case that only one `0' is flip to `1', which leads to decrease of approximation error and the status transition from $j$ to $j-1$. Noting that it happens with a probability  $\frac{j}{n}\left(1-\frac{1}{n}\right)^{n-1}$, we conclude that
\begin{equation*}
  r_{j-1,j}\ge \frac{j}{n}\left(1-\frac{1}{n}\right)^{n-1},\quad j=1,\dots,n.
\end{equation*}
Then, for the elitist search with probability transition matrix $\mathbf{\tilde{R}}=(r_{i,j})_{i,j=0,\dots,n}$, we construct an auxiliary bi-diagonal search with probability transition matrix
\begin{align*}
  &\mathbf{\tilde{S}}=(s_{i,j})_{i,j=0,\dots,n}\\
  =&\left(
                 \begin{array}{cccccc}
                 1& \frac{1}{n}(1-\frac{1}{n})^{n-1} & & & & \\
                   & 1-\frac{1}{n}(1-\frac{1}{n})^{n-1} & \frac{2}{n}(1-\frac{1}{n})^{n-1} &   &   &   \\
                   &   & 1-\frac{2}{n}(1-\frac{1}{n})^{n-1} & \frac{3}{n}(1-\frac{1}{n})^{n-1} &   &   \\
                   &   &   & \ddots & \ddots &   \\
                   &   &   &   & 1-\frac{n-1}{n}(1-\frac{1}{n})^{n-1} & (1-\frac{1}{n})^{n-1} \\
                   &   &   &   &   & 1-(1-\frac{1}{n})^{n-1} \\
                 \end{array}
               \right).
\end{align*}

It is trivial to verify that $\mathbf{\tilde{R}}$ and $\mathbf{\tilde{S}}$ satisfy conditions (\ref{conC1})-(\ref{conC3}), and the result of Theorem \ref{Th_ES} holds. Then, Theorem \ref{Th_BD} implies that
\begin{equation}\label{err2}
  e^{[t]}\le\mathbf{e}'\mathbf{S}^t\mathbf{p}^{[0]}=\sum_{j=1}^{n}\lambda_j^t\left(\mathbf{e}'\mathbf{p}_j\right)\left(\mathbf{q}_j'\mathbf{p}^{[0]}\right),
\end{equation}
 where
 \begin{equation}\label{lambdaTheo5}
   \lambda_j=1-\frac{j}{n}\left(1-\frac{1}{n}\right)^{n-1},
 \end{equation}
 \begin{align}
  \mathbf{p}_{j}&=\left(\prod_{k=1}^{j-1}\frac{s_{k,k+1}}{s_{j,j}-s_{k,k}},\prod_{k=2}^{j-1}\frac{s_{k,k+1}}{s_{j,j}-s_{k,k}},\dots,\frac{s_{j-1,j}}{s_{j,j}-s_{j-1,j-1}},1,0,\dots,0\right)', \label{EigenTheo5} \\
  \mathbf{q}'_{j}&=\left(0,\dots,0,1,\frac{s_{j,j+1}}{s_{j,j}-s_{j+1,j+1}},\prod_{k=j+1}^{j+2}\frac{s_{k-1,k}}{s_{j,j}-s_{k,k}},\dots,\prod_{k=j+1}^{n}\frac{s_{k-1,k}}{s_{j,j}-s_{k,k}}\right),\label{ColTransMatrixInTheo5}
\end{align}
$j=1\dots,n$. Similar to computation of $\mathbf{p}_{j}$ and $\mathbf{q}'_{j}$ in \ref{AppendixAA}, we know that the values of $\mathbf{p}_{j}$ and $\mathbf{q}_{j}$ defined by (\ref{EigenTheo5}) and (\ref{ColTransMatrixInTheo5}) are also confirmed by (\ref{p1}) and (\ref{q1}), respectively. Submitting (\ref{ErOneMax}), (\ref{DiOneMax}), (\ref{lambdaTheo5}), (\ref{temp1}) and (\ref{temp2}) to (\ref{err2}) we conclude that
  $$e^{[t]}\le\sum_{j=1}^{n}\lambda_j^t\left(\mathbf{e}'\mathbf{p}_j\right)\left(\mathbf{q}'_j\mathbf{p}^{[0]}\right)=\lambda_1^t(\mathbf{e}'\mathbf{p}_1)(\mathbf{q}'_1\mathbf{p}^{[0]})=\frac{n}{2}\left(1-\frac{1}{n}\left(1-\frac{1}{n}\right)^{n-1}\right)^t\le\frac{n}{2}\left(1-\frac{1}{ne}\right)^t.$$
\end{proof}

To demonstrate how tight the estimated upper bound is, we perform comparison between the simulation results and the estimated bound. For the 10-, 20-,...,90-D OneMax problems, the simulated approximation error averaged for 1000 independent runs are compared with the theoretical upper bound presented in Theorem \ref{Th_EA_OneMax}. Just as illustrated in Figure \ref{fig1}, it is showed that the estimated upper bound is very tight for low-dimensional OneMax problems, and the difference between simulation results and the upper bound increases slowly with increase of problem dimension. Similar to the estimation for approximation error of RLS, with the iteration budget $t=an^k$, $(a>0,k\in\mathbb{Z}^+)$,
the asymptotic expected approximation error of (1+1)EA is $C^{-\Omega(n^{k-1})}$ for some constant $C>1$.

\begin{figure}
  \centering
  \includegraphics[width=15cm]{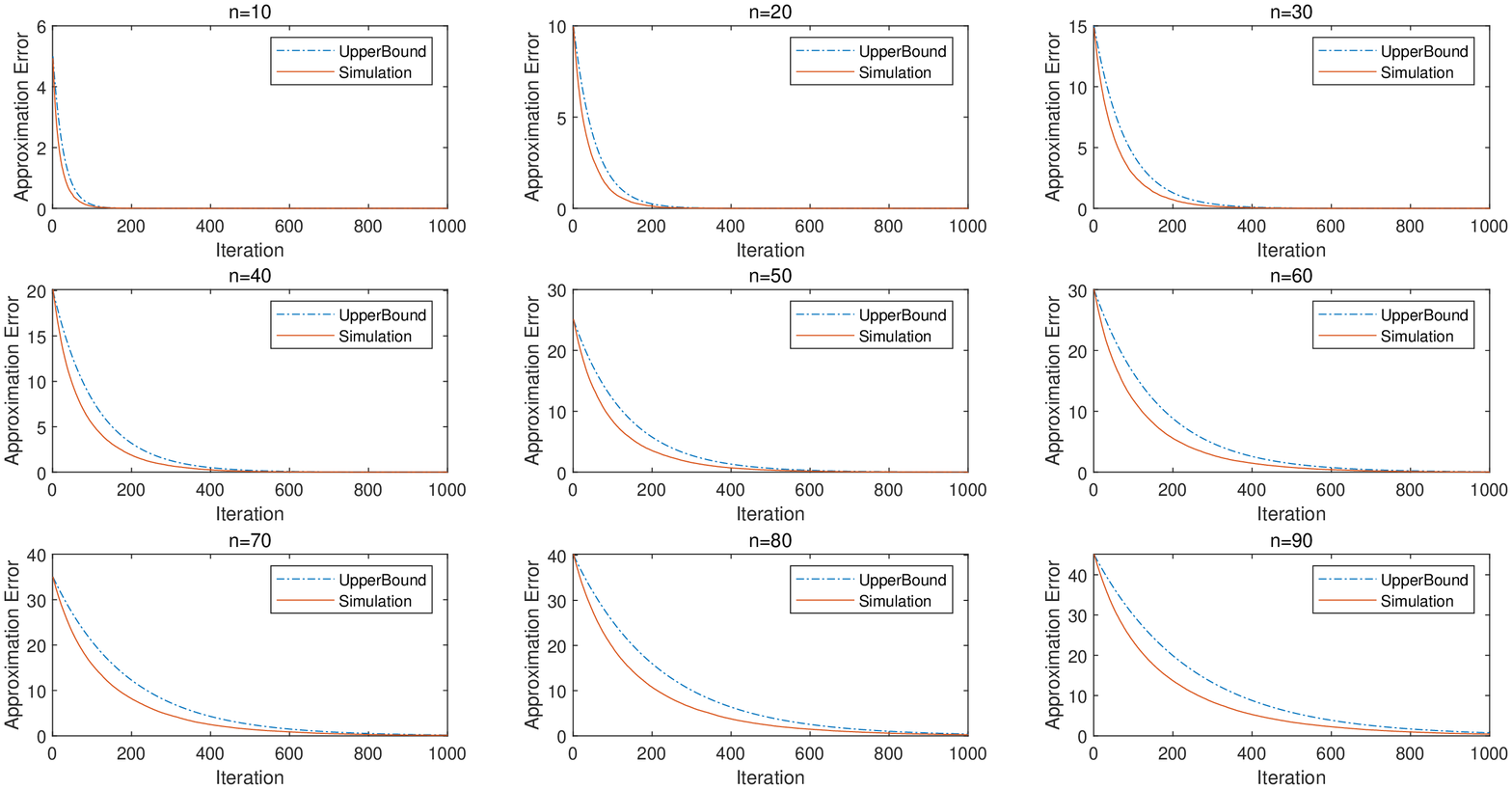}\\
  \caption{Comparison between the estimated upper bound and simulation results on expected approximation error of (1+1)EA solving the 10-, 20-,...,90-D OneMax problems.}\label{fig1}
\end{figure}

\subsection{The Peak Problem}
The global optimal solution of the Peak problem is $\mathbf{x}^*=(1,\dots,1)$, and all other solutions constitute  a platform where all solutions have the identical function value $0$. By defining the status index $i$ as the total amount of 0-bits in a solution $\mathbf{x}$, we know  $\mathbf{\tilde{e}}=(0,1,\dots,1)'$. Correspondingly, $\mathbf{p}^{[0]}=(C_n^0/2^n, C_n^1/2^n,C_n^2/2^n,\dots,C_n^n/2^n)'$.

\begin{theorem}\label{Th_RLS_Peak}
For RLS on the Peak problem,
\begin{equation*}
   e^{[t]}=1-\frac{n+1}{2^n}+\frac{n}{2^n}\left(1 -\frac{1}{n}\right)^t.
\end{equation*}
\end{theorem}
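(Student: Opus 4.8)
The plan is to exploit the very special structure of the Peak function: $f(\mathbf{x})=\prod_{i=1}^n x_i$ equals $1$ only at the all-ones string $\mathbf{x}^*$ and is $0$ at every other point of $\{0,1\}^n$. Since Algorithm \ref{alg1} accepts the offspring only when $f(\mathbf{y}_t) > f(\mathbf{x}_t)$ holds strictly, the whole non-optimal part of the search space is a flat plateau on which no move is ever accepted except the single step that reaches $\mathbf{x}^*$. First I would turn this observation into the one-step transition probabilities between statuses, where status $i$ counts the number of $0$-bits, and recall that by Lemma \ref{Sec1L1} only the non-optimal submatrix $\mathbf{R}$ matters.

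From a solution at status $j\ge 2$, a one-bit flip yields either $j-1$ or $j+1$ zero-bits, so in both cases $f$ stays at $0$ and the move is rejected; hence such a solution remains at status $j$ with probability $1$. From status $1$ the single $0$-bit is flipped with probability $1/n$, producing $\mathbf{x}^*$ and the only fitness-improving transition $1\to 0$; with the remaining probability $1-1/n$ a $1$-bit is flipped and the move is rejected, so the solution stays at status $1$. Consequently there are no transitions among the non-optimal statuses whatsoever, every off-diagonal entry of $\mathbf{R}$ vanishes, and the submatrix collapses to the diagonal form
\begin{equation*}
  \mathbf{R}=\mathrm{diag}\left(1-\frac{1}{n},\,1,\,\dots,\,1\right),
\end{equation*}
so that RLS on Peak is a \emph{diagonal search} rather than a bi-diagonal one.

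With this identification the computation follows immediately from Theorem \ref{Th_D}. For the non-optimal statuses the error vector is $\mathbf{e}=(1,1,\dots,1)'$ (so the monotonicity hypothesis $e_i\le e_{i+1}$ holds trivially as equality) and the initial distribution is $\mathbf{p}^{[0]}=(C_n^1/2^n,\dots,C_n^n/2^n)'$. Substituting $r_{1,1}^t=(1-1/n)^t$ and $r_{i,i}^t=1$ for $i\ge 2$ gives
\begin{equation*}
  e^{[t]}=\frac{n}{2^n}\left(1-\frac{1}{n}\right)^t+\frac{1}{2^n}\sum_{i=2}^{n}C_n^i,
\end{equation*}
and the remaining work is just the binomial identity $\sum_{i=2}^{n}C_n^i=2^n-1-n$, which rearranges into the claimed closed form $1-\frac{n+1}{2^n}+\frac{n}{2^n}\left(1-\frac{1}{n}\right)^t$.

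The only real obstacle is the first, conceptual step: recognizing that strict elitist acceptance on the Peak plateau freezes every status $j\ge 2$, so that---unlike OneMax---there is no gradual flow of probability mass down the status ladder. Once the diagonal structure is seen, none of the eigenvector machinery of Lemma \ref{Sec2L3} or Theorem \ref{Th_BD} is needed, and the proof reduces to a one-line application of Theorem \ref{Th_D} together with an elementary binomial sum.
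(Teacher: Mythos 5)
Your proposal is correct and follows essentially the same route as the paper: it identifies the transition submatrix as $\mathbf{R}=\mathrm{diag}(1-1/n,1,\dots,1)$ because strict elitist acceptance freezes every status $j\ge 2$ on the plateau, applies Theorem \ref{Th_D}, and finishes with the binomial identity $\sum_{i=2}^{n}C_n^i=2^n-1-n$. The reasoning and the final computation coincide with the paper's proof of Theorem \ref{Th_RLS_Peak}.
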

\begin{proof}
When the RLS is employed to solve the Peak problem, the one-bit mutation generate a probability distribution of status transition as
\begin{equation*}
  r_{i,j}=\left\{\begin{aligned}&1/n, && i=0,j=1;\\&1-1/n ,&& i=j=1,\\ & 1;&& i=j\neq 1;\\ & 0,&&\mbox{otherwise}.\end{aligned}\right.
\end{equation*}
Thus, we get the transition submatrix as
\begin{equation*}
  \mathbf{R}=diag\left(1-\frac{1}{n},1,\dots,1\right).
\end{equation*}
Applying Theorem \ref{Th_D} we know that
\begin{equation*}
  e^{[t]}=\mathbf{e}'\mathbf{R}^t\mathbf{p}^{[0]}=\sum_{i=1}^{n}e_ir_{i,i}^tp_i=\left(1 -\frac{1}{n}\right)^t\frac{C_n^1}{2^n}+\sum_{i=2}^n\frac{C_n^i}{2^n}=1-\frac{n+1}{2^n}+\frac{n}{2^n}\left(1 -\frac{1}{n}\right)^t.
\end{equation*}
\end{proof}

{\color{red}Since the error vector of non-optimal status is $\mathbf{e}=(1,\dots,1)'$, the obtained expected approximation error is equal to the probability to stay at non-optimal statuses. Because RLS employs a one-bit mutation, the optimal solution is achievable if and only if the initial solution is located at statuses $0$ and $1$. On the contrary, it cannot jump out of the fitness platform if the initial solution is not located adjacent to the global optimal solution. Thus, the probability to stay at non-optimal statuses would not converge to zero when $t\to\infty$, and its global convergence to the optimal solution cannot be guaranteed.}

 Presentation of this case is to show that the transition submatrix could be diagonal when there is a fitness platform, and so, it is easy to compute the expected approximation error. Fortunately, such an diagonal transition submatrix is also available when the bitwise mutation is employed.

\begin{theorem}\label{Th_EA_Peak}
For (1+1)EA on the Peak problem,
\begin{equation*}
e^{[t]}=\sum_{i=1}^{n}\left[1-\left(\frac{1}{n-1}\right)^i\left(1-\frac{1}{n}\right)^{n}\right]^t\frac{C_n^i}{2^n}.
\end{equation*}
\end{theorem}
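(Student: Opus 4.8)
The plan is to exploit the plateau structure of the Peak problem: every non-optimal solution has fitness $0$, so the strict elitist selection of Algorithm \ref{alg1} can never accept a move from one non-optimal solution to another. Consequently, starting from any non-optimal status $i\ge 1$, the only two one-generation outcomes are to remain at status $i$ or to jump directly to the optimal status $0$. This observation immediately shows that the transition submatrix $\mathbf{R}$ restricted to the non-optimal statuses is \emph{diagonal}, placing the problem squarely in the diagonal-search setting of Theorem \ref{Th_D}.

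First I would compute the one-step jump probability $r_{0,i}$ from status $i$ to the optimum under bitwise mutation. A solution at status $i$ has exactly $i$ zero-bits and $n-i$ one-bits, and reaching $\mathbf{x}^*=(1,\dots,1)$ requires flipping precisely those $i$ zeros (each with probability $1/n$) while leaving the $n-i$ ones untouched (each with probability $1-1/n$). By independence of the bit flips, $r_{0,i}=\left(\frac1n\right)^i\left(1-\frac1n\right)^{n-i}$. The diagonal entry is then $r_{i,i}=1-r_{0,i}$, and all off-diagonal entries of $\mathbf{R}$ vanish, so $\mathbf{R}=\mathrm{diag}(r_{1,1},\dots,r_{n,n})$.

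The one genuinely fiddly step is the algebraic rewriting needed to match the stated form. Factoring $(1-\frac1n)^n$ out of $\left(\frac1n\right)^i\left(1-\frac1n\right)^{n-i}$, and using $1-\frac1n=\frac{n-1}{n}$, turns the jump probability into $\left(\frac{1}{n-1}\right)^i\left(1-\frac1n\right)^{n}$. This is the only place where a little care is required; everything else is bookkeeping.

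Finally I would invoke Theorem \ref{Th_D}. Since the error vector of the non-optimal statuses is $\mathbf{e}=(1,\dots,1)'$ and the initial distribution is $\mathbf{p}^{[0]}=(C_n^1/2^n,\dots,C_n^n/2^n)'$, Theorem \ref{Th_D} yields $e^{[t]}=\sum_{i=1}^n r_{i,i}^t e_i p_i^{[0]}=\sum_{i=1}^n\left[1-\left(\frac{1}{n-1}\right)^i\left(1-\frac1n\right)^{n}\right]^t\frac{C_n^i}{2^n}$, which is exactly the claimed expression. I expect no serious obstacle: the entire substance of the argument is the plateau observation that collapses $\mathbf{R}$ to diagonal form, after which the result follows directly from the already-established diagonal-search formula.
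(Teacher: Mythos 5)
Your proposal is correct and follows essentially the same route as the paper: identify that the plateau plus strict elitist selection makes the non-optimal transition submatrix diagonal with $r_{i,i}=1-(1/n)^i(1-1/n)^{n-i}$, apply Theorem \ref{Th_D} with $\mathbf{e}=(1,\dots,1)'$ and the binomial initial distribution, and rewrite $(1/n)^i(1-1/n)^{n-i}$ as $(1/(n-1))^i(1-1/n)^n$. Your explicit justification of why off-diagonal entries among non-optimal statuses vanish is a welcome clarification of a step the paper leaves implicit in its transition-probability table.
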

\begin{proof}
When the (1+1)EA is employed to solve the Peak problem, the transition probability
\begin{equation*}
  r_{i,j}=\left\{\begin{aligned}&(1/n)^j(1-1/n)^{n-j}, && i=0,j\neq 0;\\&1 ,&& i=j=0,\\ & 1-(1/n)^j(1-1/n)^{n-j};&& i=j\neq 1;\\ & 0,&&\mbox{otherwise}.\end{aligned}\right.
\end{equation*}
Then, it holds that
\begin{equation*}
  \mathbf{R}=diag\left(1-\frac{1}{n}\left(1-\frac{1}{n}\right)^{n-1},1-\left(\frac{1}{n}\right)^2\left(1-\frac{1}{n}\right)^{n-2},\dots,1-\left(\frac{1}{n}\right)^{n-1}\left(1-\frac{1}{n}\right),1-\left(\frac{1}{n}\right)^{n}\right).
\end{equation*}
Applying Theorem \ref{Th_D}, we know
\begin{equation*}
  e^{[t]}=\sum_{i=1}^{n}e_ir_{i,i}^tp_i=\sum_{i=1}^{n}\left[1-\left(\frac{1}{n}\right)^i\left(1-\frac{1}{n}\right)^{n-i}\right]^t\frac{C_n^i}{2^n}=\sum_{i=1}^{n}\left[1-\left(\frac{1}{n-1}\right)^i\left(1-\frac{1}{n}\right)^{n}\right]^t\frac{C_n^i}{2^n}.
\end{equation*}
\end{proof}

\subsection{The Deceptive Problem}
According to definition of the Deceptive problem, the mapping from the total amount of 1-bits to the fitness and approximation error of $\mathbf{x}$ is as follows.
\begin{equation}\label{StaTranDec}
\begin{array}{lccccc}
  |\mathbf{x}|:\quad\quad & 0& 1 & \cdots & n-1 & n \\
  & \downarrow & \downarrow  & \cdots & \downarrow & \downarrow\\
  f(\mathbf{x}):\quad\quad& n-1 & n-2 & \cdots & 0 & n \\
  & \downarrow & \downarrow  & \cdots & \downarrow & \downarrow\\
  e(\mathbf{x}):\quad\quad& 1 & 2 & \cdots & n & 0
\end{array}
\end{equation}
Then, the feasible solution set could be divided into $n+1$ subsets, and there exists a local optimal state with $|\mathbf{x}|=0$. Then, we have
\begin{equation}\label{ErDec}
   \mathbf{\tilde{e}}=(0,\mathbf{e}')'=(0,1,2,\dots,n)',
 \end{equation}
 and it holds
\begin{equation}\label{DiDec} \mathbf{\tilde{p}}^{[0]}=\left(\frac{C_n^n}{2^n},\mathbf{{p}}^{\prime[0]}\right)'=\left(\frac{C_n^n}{2^n},\frac{C_n^0}{2^n},\frac{C_n^1}{2^n},\dots,\frac{C_n^{n-1}}{2^n}\right)'.
 \end{equation}

\begin{theorem}\label{Th_RLS_Deceptive}
For RLS applied to the Deceptive problem,
\begin{equation*}
   e^{[t]}=\left(1-\frac{1}{2^{n-1}}\right)+\left(\frac{n}{2}-\frac{n}{2^{n-1}}\right)\left(1-\frac{1}{n}\right)^t.
\end{equation*}
\end{theorem}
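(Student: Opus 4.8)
The plan is to follow the same route used for RLS on OneMax (Theorem \ref{Th_RLS_OneMax}): read off the transition submatrix, confirm it is bi-diagonal with distinct diagonal entries so that Theorem \ref{Th_BD} applies, and then show that only two spectral terms survive. First I would extract the one-bit-mutation dynamics from the error labelling in (\ref{StaTranDec}). A solution at status $j$ ($1\le j\le n$) carries $j-1$ one-bits, and throughout the deceptive region the fitness grows as the number of one-bits shrinks. Hence flipping a one-bit (probability $(j-1)/n$) is accepted and moves status $j$ to $j-1$, whereas flipping a zero-bit is rejected for $2\le j\le n-1$; at $j=1$ (the all-zeros local optimum) the chain is absorbed, and at $j=n$ flipping the unique zero-bit (probability $1/n$) jumps to the global optimum. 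This produces a bi-diagonal submatrix $\mathbf R$ with super-diagonal $r_{j-1,j}=(j-1)/n$ and diagonal $r_{j,j}=(n-j+1)/n$ for $1\le j\le n-1$, $r_{n,n}=0$. The diagonal entries $1,\frac{n-1}{n},\dots,\frac2n,0$ are pairwise distinct, so Theorem \ref{Th_BD} yields $e^{[t]}=\sum_{j=1}^{n}\lambda_j^t(\mathbf e'\mathbf p_j)(\mathbf q_j'\mathbf p^{[0]})$ with $\lambda_j=r_{j,j}$, and $\mathbf e,\mathbf p^{[0]}$ taken from (\ref{ErDec}) and (\ref{DiDec}).

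Next I would compute the spectral coefficients and show that almost all of them vanish. Substituting $r_{k,k+1}=k/n$ and $r_{j,j}-r_{k,k}=(k-j)/n$ into (\ref{Eigen1}) collapses each eigenvector entry to the signed binomial $p_{i,j}=(-1)^{j-i}C_{j-1}^{i-1}$ for $1\le i\le j\le n-1$. With $\mathbf e=(1,\dots,n)'$ this turns $\mathbf e'\mathbf p_j$ into $(-1)^{j-1}\sum_{m=0}^{j-1}(m+1)(-1)^mC_{j-1}^m$, which is $0$ for $3\le j\le n-1$ by the elementary identities $\sum_m(-1)^mC_N^m=0$ and $\sum_m m(-1)^mC_N^m=0$ (valid for $N\ge2$), while a direct check gives $\mathbf e'\mathbf p_1=\mathbf e'\mathbf p_2=1$. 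Since $\lambda_n=r_{n,n}=0$, the $j=n$ term contributes nothing for $t\ge1$. Thus for $t\ge1$ only $j=1$ ($\lambda_1=1$) and $j=2$ ($\lambda_2=1-1/n$) remain.

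Finally I would evaluate the two surviving initial-distribution projections using (\ref{ColTransMatrixIn}). Because $r_{1,1}=1$ forces nearly every factor of $\mathbf q_1'$ to equal $1$ (with a single $(n-1)/n$ correction produced by the irregular last column $r_{n,n}=0$), the sum $\mathbf q_1'\mathbf p^{[0]}$ collapses through $\sum_{m=0}^{n-2}C_n^m=2^n-n-1$ to $1-\tfrac1{2^{n-1}}$; likewise the telescoping product in $\mathbf q_2'$ gives entries $i-1$ (again with an $i=n$ deviation), and $\mathbf q_2'\mathbf p^{[0]}$ reduces through $\sum_m mC_n^m=n2^{n-1}$ to $\tfrac n2-\tfrac{n}{2^{n-1}}$. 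Assembling the two terms gives $e^{[t]}=\big(1-\tfrac1{2^{n-1}}\big)+\big(\tfrac n2-\tfrac{n}{2^{n-1}}\big)(1-\tfrac1n)^t$. The main obstacle is bookkeeping the boundary irregularity at status $n$: the value $r_{n,n}=0$ breaks the clean pattern holding for $j\le n-1$, so the components of $\mathbf p_n$ and of each $\mathbf q_j'$ in the last column must be treated separately, and it is precisely this $j=n$ term (nonzero at $t=0$ but annihilated by $\lambda_n^t$ once $t\ge1$) that makes the stated two-term closed form valid from the first iteration onward rather than at $t=0$.
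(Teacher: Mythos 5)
Your proposal is correct and follows essentially the same route as the paper's proof: identify the bi-diagonal transition submatrix with $r_{j,j}=1-(j-1)/n$ ($j\le n-1$), $r_{n,n}=0$ and $r_{j-1,j}=(j-1)/n$, apply Theorem \ref{Th_BD}, and show via the signed-binomial form $p_{i,j}=(-1)^{j-i}C_{j-1}^{i-1}$ that only the $j=1,2$ spectral terms survive, with $\mathbf q_1'\mathbf p^{[0]}=1-2^{-(n-1)}$ and $\mathbf q_2'\mathbf p^{[0]}=\tfrac n2-n2^{-(n-1)}$. The paper evaluates the same coefficients in its Appendix B using a derivative of $x(x-1)^{j-1}$ rather than your direct binomial identities, but the computation and the handling of the irregular last column ($r_{n,n}=0$, annihilating the $j=n$ term for $t\ge1$) are the same.
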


\begin{proof}
If the RLS is employed solving the Deceptive problem, we have
\begin{equation}\label{TrDeRLS}
  \mathbf{R}=(r_{i,j})_{n\times n}\left(
                 \begin{array}{ccccc}
                   1 & 1/n &   &   &   \\
                     & 1-1/n & 2/n &   &   \\
                     &   & \ddots & \ddots &   \\
                     &   &   & 2/n & (n-1)/n \\
                     &   &   &   & 0 \\
                 \end{array}
               \right).
\end{equation}
Then, Theorem \ref{Th_BD} implies that
\begin{equation}\label{err}
  e^{[t]}=\mathbf{e}'\mathbf{R}^t\mathbf{p}^{[0]}=\sum_{j=1}^{n}\lambda_j^t\left(\mathbf{e}'\mathbf{p}_j\right)\left(\mathbf{q}_j'\mathbf{p}^{[0]}\right),
\end{equation}
where $\lambda_j=r_{j,j}$, $\mathbf{p}_j$ and $\mathbf{q}_{j}$ are defined by (\ref{Eigen1}) and (\ref{ColTransMatrixIn}), respectively. From (\ref{ep1}), (\ref{ep2}) and (\ref{qq1}), we conclude that
\begin{align*}
e^{[t]}=\lambda_1^t(\mathbf{e}'\mathbf{p}_1)(\mathbf{q}'_1\mathbf{p}^{[0]})+\lambda_2^t(\mathbf{e}'\mathbf{p}_2)(\mathbf{q}'_2\mathbf{p}^{[0]})=\left(1-\frac{1}{2^{n-1}}\right)+\left(\frac{n}{2}-\frac{n}{2^{n-1}}\right)\left(1-\frac{1}{n}\right)^t.
\end{align*}
\end{proof}

{\color{red} Because the Deceptive problem has a local absorbing region where individuals cannot jump out by the one-bit mutation, the expected approximation error would not converge to zero when $t\to\infty$. Then, the global search strategy, that is, the bitwise mutation, is needed to get the global convergence of RSH. To estimate the expected approximation error of (1+1)EA on the Deceptive problem, we need the results presented in the following lemma.}
\begin{lemma}\label{Sec4L5}
  Consider a Markov chain model of Algorithm \ref{alg1} whose transition matrix can be partitioned as
  \begin{align*}
  \mathbf{\tilde{R}}=\left(
                       \begin{array}{cc}
                         \mathbf{\hat{R}} & \mathbf{\hat{r}}^{[1]}  \\
                         0 & r_{L,L}  \\
                       \end{array}
                     \right).
\end{align*}
Correspondingly, denote
\begin{equation*}
  \mathbf{\tilde{e}}=(\mathbf{\hat{e}}', e_{L})',\quad \mathbf{\tilde{p}}^{[0]}=(\mathbf{\hat{p}}^{[0]'}, p_{L}^{[0]})'.
\end{equation*}
Then, it holds for the expected approximation error that
\begin{align*}
  e^{[t]}=\mathbf{\hat{e}}'\mathbf{\hat{R}}^t\mathbf{\hat{p}}^{[0]}+p_L^{[0]}\sum_{k=0}^{t-1}r_{L,L}^{k}\mathbf{\hat{e}}'\mathbf{\hat{R}}^{t-1-k}\mathbf{\hat{r}}^{[1]}+p_L^{[0]}e_{L}r_{L,L}^t. 
\end{align*}

\end{lemma}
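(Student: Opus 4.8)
The plan is to reduce the entire statement to the single identity $e^{[t]}=\mathbf{\tilde{e}}'\mathbf{\tilde{R}}^t\mathbf{\tilde{p}}^{[0]}$ established in (\ref{Sec1ExpError}), so that the lemma becomes nothing more than finding an explicit expression for the $t$-th power of the block upper-triangular matrix $\mathbf{\tilde{R}}$ and then evaluating the resulting quadratic form against the partitioned vectors $\mathbf{\tilde{e}}=(\mathbf{\hat{e}}',e_{L})'$ and $\mathbf{\tilde{p}}^{[0]}=(\mathbf{\hat{p}}^{[0]'},p_{L}^{[0]})'$. Because $\mathbf{\tilde{R}}$ has the $2\times 2$ block form with diagonal blocks $\mathbf{\hat{R}}$ and the scalar $r_{L,L}$, a zero row-block below the diagonal, and the column $\mathbf{\hat{r}}^{[1]}$ above it, its powers preserve this shape and only the off-diagonal block requires real work.

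First I would show by induction on $t$ that
$$\mathbf{\tilde{R}}^t=\left(\begin{array}{cc} \mathbf{\hat{R}}^t & \sum_{k=0}^{t-1} r_{L,L}^{k}\,\mathbf{\hat{R}}^{t-1-k}\mathbf{\hat{r}}^{[1]} \\ \mathbf{0} & r_{L,L}^t \end{array}\right).$$
The diagonal blocks $\mathbf{\hat{R}}^t$ and $r_{L,L}^t$ follow at once from block multiplication. For the upper-right block, which I denote $\mathbf{w}_t$, writing $\mathbf{\tilde{R}}^{t+1}=\mathbf{\tilde{R}}^t\mathbf{\tilde{R}}$ and reading off the top-right entry yields the recursion $\mathbf{w}_{t+1}=\mathbf{\hat{R}}^t\mathbf{\hat{r}}^{[1]}+r_{L,L}\,\mathbf{w}_t$ with base case $\mathbf{w}_1=\mathbf{\hat{r}}^{[1]}$. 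Substituting the claimed sum and reindexing the summation shows it satisfies both the recursion and the initial value, closing the induction.

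Finally I would substitute this expression into $e^{[t]}=\mathbf{\tilde{e}}'\mathbf{\tilde{R}}^t\mathbf{\tilde{p}}^{[0]}$. Applying $\mathbf{\tilde{R}}^t$ to $\mathbf{\tilde{p}}^{[0]}$ on the right and to $\mathbf{\tilde{e}}'$ on the left, then collecting the three contributions — the top-left block acting on $\mathbf{\hat{p}}^{[0]}$, the top-right block acting on $p_{L}^{[0]}$, and the bottom-right scalar paired with $e_{L}$ — produces exactly the three summands $\mathbf{\hat{e}}'\mathbf{\hat{R}}^t\mathbf{\hat{p}}^{[0]}$, $p_L^{[0]}\sum_{k=0}^{t-1}r_{L,L}^{k}\mathbf{\hat{e}}'\mathbf{\hat{R}}^{t-1-k}\mathbf{\hat{r}}^{[1]}$, and $p_L^{[0]}e_{L}r_{L,L}^t$ in the statement. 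The only genuinely non-routine step is the inductive identification of $\mathbf{w}_t$; everything else is bookkeeping in block form, and I expect no real obstacle, since the elitist upper-triangular structure guarantees the clean block decomposition at every power.
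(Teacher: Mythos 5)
Your proposal is correct and follows essentially the same route as the paper: both reduce the lemma to computing the $t$-th power of the $2\times 2$ block upper-triangular matrix, identify the off-diagonal block as $\sum_{k=0}^{t-1}r_{L,L}^{k}\mathbf{\hat{R}}^{t-1-k}\mathbf{\hat{r}}^{[1]}$, and read off the three summands from the partitioned quadratic form. The only difference is that you supply the induction verifying that block formula explicitly, whereas the paper states it without proof.
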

\begin{proof} According to the partition of transition matrix, we know
  \begin{align*}
  e^{[t]}&=\mathbf{\tilde{e}}'\mathbf{\tilde{R}}^t\mathbf{\tilde{p}}^{[0]} =(\mathbf{\hat{e}}',e_{L})\left(
                       \begin{array}{cc}
                         \mathbf{\hat{R}} & \mathbf{\hat{r}}^{[1]}  \\
                         0 & r_{L,L}  \\
                       \end{array}
                     \right)^t(\mathbf{\hat{p}}^{[0]'},p_L^{[0]})'=(\mathbf{\hat{e}}',e_{L})\left(
                       \begin{array}{cc}
                         \mathbf{\hat{R}}^t & \mathbf{\hat{r}}^{[t]}  \\
                         0 & r_{L,L}^t  \\
                       \end{array}
                     \right)(\mathbf{\hat{p}}^{[0]'},p_L^{[0]})',
  \end{align*}
 where $\mathbf{\hat{r}}^{[t]}=\sum_{k=0}^{t-1}r_{L,L}^{k}\mathbf{\hat{R}}^{t-1-k}\mathbf{\hat{r}}^{[1]}$. Thus,
  \begin{align*}
  e^{[t]}&=\mathbf{\hat{e}}'\mathbf{\hat{R}}^t\mathbf{\hat{p}}^{[0]}+p_L^{[0]}(\mathbf{\hat{e}}'\mathbf{\hat{r}}^{[t]}+e_{L}r_{L,L}^t)
  =\mathbf{\hat{e}}'\mathbf{\hat{R}}^t\mathbf{\hat{p}}^{[0]}+p_L^{[0]}\sum_{k=0}^{t-1}r_{L,L}^{k}\mathbf{\hat{e}}'\mathbf{\hat{R}}^{t-1-k}\mathbf{\hat{r}}^{[1]}+p_L^{[0]}e_{L}r_{L,L}^t. 
\end{align*}
\end{proof}

\begin{theorem}\label{Th_EA_Deceptive}
The expected approximation error of (1+1)EA for the Deceptive problem is bounded by
\begin{equation*}
  e^{[t]}\le \left(1-\frac{n+1}{2^n}+\frac{en^2}{2^n}\right)\left[1-\left(\frac{1}{n}\right)^n\right]^t+\left(\frac{n}{2}-\frac{n}{2^n}+\frac{en^2}{2^n}\right)\left[1-\frac{1}{en}\right]^t+\frac{n^2}{2^n}\left[1-\frac{1}{e}\right]^t.
\end{equation*}
\end{theorem}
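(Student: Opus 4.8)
The plan is to describe the one-step behaviour of the (1+1)EA on the Deceptive problem, peel off its worst status with Lemma~\ref{Sec4L5}, and dominate what remains by an auxiliary bi-diagonal search so that Theorems~\ref{Th_ES} and~\ref{Th_BD} apply. From the status map (\ref{StaTranDec}), status $i$ ($i=1,\dots,n$) collects the strings with $i-1$ ones, status $1$ (all zeros) is the local optimum, and the optimum status $0$ is the all-ones string; since the non-optimal fitness $n-1-|\mathbf{x}|$ decreases in the number of ones, elitist selection accepts only moves that strip ones away---an anti-OneMax descent toward status $1$---or the single big jump flipping every remaining zero, so $\mathbf{\tilde R}$ is a full upper-triangular matrix. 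Three transition quantities drive the bound: the escape from the local optimum $r_{0,1}=(1/n)^n$, so that inside the non-optimal block status $1$ retains probability $1-(1/n)^n$; the one-step descent $r_{i-1,i}\ge \frac{i-1}{n}(1-\frac1n)^{n-1}$; and the self-loop of status $n$, from which flipping any single bit among its $n$ bits already escapes (one of the $n-1$ ones gives a fitter non-optimal string, its single zero gives the optimum), whence $r_{n,n}\le 1-(1-\frac1n)^{n-1}\le 1-\frac1e$.

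By Lemma~\ref{Sec1L1} the error collapses to $e^{[t]}=\mathbf{e}'\mathbf{R}^t\mathbf{p}^{[0]}$ on the non-optimal block, and I would then apply Lemma~\ref{Sec4L5} with $L=n$ to split off the worst status. This writes $e^{[t]}$ as the reduced-block error $\mathbf{\hat e}'\mathbf{\hat R}^t\mathbf{\hat p}^{[0]}$, the coupling term $p_n^{[0]}\sum_{k=0}^{t-1}r_{n,n}^{k}\,\mathbf{\hat e}'\mathbf{\hat R}^{t-1-k}\mathbf{\hat r}^{[1]}$, and the isolated term $p_n^{[0]}e_n r_{n,n}^t$. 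With $p_n^{[0]}=C_n^{n-1}/2^n=n/2^n$, $e_n=n$ and $r_{n,n}\le 1-\frac1e$, the last summand is already $\frac{n^2}{2^n}(1-\frac1e)^t$, exactly the third term of the claim. It is crucial that this step keeps the \emph{true} $r_{n,n}$: any bi-diagonal surrogate for status $n$ would only supply the larger eigenvalue $1-(1-\frac1n)^{n}>1-\frac1e$ and break the bound, which is precisely why Lemma~\ref{Sec4L5} is invoked before any relaxation.

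For the first two summands I would dominate the reduced block $\mathbf{\hat R}$ (still full upper-triangular on statuses $1,\dots,n-1$) by the auxiliary bi-diagonal matrix $\mathbf{\hat S}$ carrying diagonal $1-(1/n)^n$ at status $1$, diagonal $1-\frac{i-1}{n}(1-\frac1n)^{n-1}$ at status $i\ge 2$, and super-diagonal $s_{i,i+1}=\frac{i}{n}(1-\frac1n)^{n-1}$. Checking the cumulative conditions (\ref{conC1})--(\ref{conC3}) of Lemma~\ref{Sec3L4} gives $\mathbf{\hat e}'\mathbf{\hat R}^{s}\mathbf{\hat r}\le \mathbf{\hat e}'\mathbf{\hat S}^{s}\mathbf{\hat r}$ for every nonnegative $\mathbf{\hat r}$, applicable to the reduced-block term ($\mathbf{\hat r}=\mathbf{\hat p}^{[0]}$) and, term by term, to the coupling term ($\mathbf{\hat r}=\mathbf{\hat r}^{[1]}\ge\mathbf 0$). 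Since $\mathbf{\hat S}$ has the distinct eigenvalues $\lambda_1=1-(1/n)^n$ and $\lambda_i=1-\frac{i-1}{n}(1-\frac1n)^{n-1}$, Theorem~\ref{Th_BD} supplies the spectral expansion, and---exactly as in the OneMax and RLS--Deceptive computations where the telescoping products $\mathbf{\hat e}'\mathbf{p}_j$ vanish for large $j$---only the local-optimum mode $\lambda_1$ and the leading descent mode $\lambda_2$ remain. Bounding $\lambda_2\le 1-\frac1{en}$ and the geometric coupling sum by $\sum_{k\ge0}r_{n,n}^{k}\le(1-\frac1n)^{-(n-1)}\le e$ then yields the two bases $1-(1/n)^n$ and $1-\frac1{en}$ with coefficients $1-\frac{n+1}{2^n}+\frac{en^2}{2^n}$ and $\frac n2-\frac n{2^n}+\frac{en^2}{2^n}$, the $\frac{en^2}{2^n}$ corrections being the residue of the coupling term.

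The main obstacle is this coupling term from Lemma~\ref{Sec4L5}: unlike the diagonal and bi-diagonal cases it is not a single power of an eigenvalue, so one must dominate $\mathbf{\hat R}^{t-1-k}$ by $\mathbf{\hat S}^{t-1-k}$, sum the geometric series in $r_{n,n}^{k}$, and recombine the outcome with the reduced-block term while aligning the two surviving modes. The delicate point is sign control: replacing each $\lambda_i$ by the clean values $1-\frac1{en}$ and $1-\frac1e$ raises an upper bound only where the spectral coefficient $(\mathbf{\hat e}'\mathbf{p}_j)(\mathbf{q}_j'\mathbf{\hat p}^{[0]})$ is nonnegative, so the bookkeeping that guarantees this sign pattern is the crux, whereas the eigenvector evaluations themselves are routine and parallel those already carried out for OneMax.
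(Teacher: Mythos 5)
Your overall route is the paper's route: peel off status $n$ with Lemma~\ref{Sec4L5} (keeping the true $r_{n,n}\le 1-1/e$ for the isolated and coupling terms), dominate the remaining block by an auxiliary bi-diagonal search via Theorem~\ref{Th_ES}, expand with Lemma~\ref{Sec2L3} and Theorem~\ref{Th_BD}, keep only the modes $j=1,2$, and bound the geometric coupling sum. The identifications $r_{0,1}=(1/n)^n$, $r_{j-1,j}\ge\frac{j-1}{n}(1-\frac1n)^{n-1}$, $p_n^{[0]}e_n=n^2/2^n$, and the provenance of the $en^2/2^n$ corrections all match the paper's proof.

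The one concrete problem is your specification of the auxiliary matrix. You set $s_{1,1}=1-(1/n)^n$ but $s_{i,i}=1-\frac{i-1}{n}(1-\frac1n)^{n-1}$ for $i\ge 2$, i.e.\ you drop the $(1/n)^n$ leak toward status $0$ from every column except the first. The two-mode reduction you then invoke rests on $\mathbf{\check e}'\mathbf{\check p}_j=0$ for $j\ge 3$, and that identity is an artifact of the diagonal gaps being \emph{uniform}: one needs $s_{j,j}-s_{k,k}=\frac{k-j}{n}(1-\frac1n)^{n-1}$ for all $k<j$ so that $p_{i,j}=\prod_{k=i}^{j-1}\frac{s_{k,k+1}}{s_{j,j}-s_{k,k}}$ telescopes to $(-1)^{j-i}C_{j-1}^{i-1}$ and $\sum_{i=1}^{j} i\,(-1)^{j-i}C_{j-1}^{i-1}=\frac{d}{dx}\left.\left[x(x-1)^{j-1}\right]\right|_{x=1}$ vanishes for $j\ge 3$. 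With your $\mathbf S$ the $k=1$ factor becomes $\frac{\frac1n(1-\frac1n)^{n-1}}{(1/n)^n-\frac{j-1}{n}(1-\frac1n)^{n-1}}$ rather than $-\frac{1}{j-1}$, so $\mathbf{\check e}'\mathbf{\check p}_j\neq 0$ for $j\ge 3$. These residues are individually tiny, but they multiply $\mathbf{\check q}_j'\mathbf{\check p}^{[0]}$, which is exponentially large in $n$ for middle values of $j$, and they alternate in sign, so your ``raise $\lambda_j$ only where the spectral coefficient is nonnegative'' step does not cover them; a separate absorption estimate would be needed, and the stated clean constants would no longer drop out directly. The paper sidesteps this by routing $(1/n)^n$ into row $0$ of \emph{every} column of $\mathbf{\hat S}$ (see (\ref{TrOneRLSTheo9})), i.e.\ $s_{j,j}=1-(1/n)^n-\frac{j-1}{n}(1-\frac1n)^{n-1}$ for all $j$, which still satisfies condition (\ref{conC1}) because $r_{0,j}=(1/n)^{n+1-j}(1-\frac1n)^{j-1}\ge(1/n)^n$; with that choice the gaps are uniform, the coefficients for $j\ge 3$ vanish exactly, and the rest of your argument goes through as written.
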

\begin{proof}
Partition the transition matrix as
\begin{align}
  \mathbf{\tilde{R}}=\left(
                       \begin{array}{cc}
                         \mathbf{\hat{R}} & \mathbf{\hat{r}}^{[1]}  \\
                         0 & r_{n,n}  \\
                       \end{array}
                     \right),
\end{align}
where $\mathbf{\hat{R}}=(r_{i,j})_{i,j=0,1,\dots,n}$, $\mathbf{\hat{r}}^{[1]}=(r_{0,n},r_{1,n},\dots,r_{n-1,n})'$. Denote
\begin{align*}
  &\mathbf{\hat{e}}=(e_0,\dots,e_{n-1})'=(0,\dots,n-1)', \\
  &\mathbf{\hat{p}}^{[0]}=\left(p_0^{[0]},\dots,p_{n-1}^{[0]}\right)'=\left(\frac{C_n^n}{2^n},\frac{C_n^0}{2^n},\frac{C_n^1}{2^n},\dots,\frac{C_n^{n-2}}{2^n}\right)'.
\end{align*}
By Lemma \ref{Sec4L5}, we know
  \begin{align}
  e^{[t]}&=\mathbf{\hat{e}}'\mathbf{\hat{R}}^t\mathbf{\hat{p}}^{[0]}+p_n^{[0]}(\mathbf{\hat{e}}'\mathbf{\hat{r}}^{[t]}+e_{n}r_{n,n}^t)
  =\mathbf{\hat{e}}'\mathbf{\hat{R}}^t\mathbf{\hat{p}}^{[0]}+p_n^{[0]}\sum_{k=0}^{t-1}r_{n,n}^{k}\mathbf{\hat{e}}'\mathbf{\hat{R}}^{t-1-k}\mathbf{\hat{r}}^{[1]}+p_n^{[0]}e_{n}r_{n,n}^t. \label{temp3}
\end{align}

If the (1+1)EA is employed, probability to flip $j$ bits is  $\left(\frac{1}{n}\right)^j\left(1-\frac{1}{n}\right)^{n-j}$, and the probability to flip one of $j$ bits is  $C_{j}^1\left(\frac{1}{n}\right)\left(1-\frac{1}{n}\right)^{n-1}$. Thus, by (\ref{StaTranDec}) we know
\begin{align*}
  & r_{0,j} =\left(\frac{1}{n}\right)^{n+1-j}\left(1-\frac{1}{n}\right)^{j-1},\quad j=1,\dots,n-1,  \\
  & r_{j-1,j} \ge \frac{j-1}{n}\left(1-\frac{1}{n}\right)^{n-1},\quad j=2,\dots,n-1.
\end{align*}
Let
\begin{align}\label{TrOneRLSTheo9}
  \mathbf{\hat{S}}=\left(
                 \begin{array}{ccccc}
                 1&(\frac{1}{n})^n&(\frac{1}{n})^n&\cdots & (\frac{1}{n})^n\\
                      &1-(\frac{1}{n})^n & \frac{1}{n}(1-\frac{1}{n})^{n-1} &   &   \\
                     &   & 1-(\frac{1}{n})^n-\frac{1}{n}(1-\frac{1}{n})^{n-1} & \ddots &   \\
                   &   &   &  \ddots &\\
                   &     &   &   &\frac{n-2}{n}(1-\frac{1}{n})^{n-1}\\
                   &     &   &   & 1-(\frac{1}{n})^n-\frac{n-2}{n}(1-\frac{1}{n})^{n-1} \\
                 \end{array}
               \right).
\end{align}
It is trivial to check that $\mathbf{\hat{R}}$ and $\mathbf{\hat{S}}$ satisfied conditions (\ref{conC1})-(\ref{conC3}). Then, Theorem \ref{Th_ES} implies that
\begin{align}
  &\mathbf{\hat{e}}'\mathbf{\hat{R}}^t\mathbf{\hat{p}}^{[0]}\le \mathbf{\hat{e}}'\mathbf{\hat{S}}^t\mathbf{\hat{p}}^{[0]},\label{temp7}\\
  &\mathbf{\hat{e}}'\mathbf{\hat{R}}^{t-1-k}\mathbf{\hat{r}}^{[1]}\le \mathbf{\hat{e}}'\mathbf{\hat{S}}^{t-1-k}\mathbf{\hat{r}}^{[1]}. \label{temp8}
\end{align}

Furthermore, denote
\begin{align}
& \mathbf{\check{R}}=(r_{i,j})_{i,j=1,\dots,n-1},\nonumber\\
 & \mathbf{\check{e}}=(1,\dots,n-1)', \label{etheo9}\\
 & \mathbf{\check{p}}^{[0]}=\left(\frac{C_n^0}{2^n},\frac{C_n^1}{2^n},\dots,\frac{C_n^{n-2}}{2^n}\right)',\label{qtheo9} \\
 & \mathbf{\check{r}}^{[1]}=(r_{1,n},\dots,r_{n-1,n})', \label{rtheo9}
\end{align}
and let
\begin{align}
  \mathbf{\check{S}}=&(s_{i,j})_{i,j=1,\dots,n-1}\nonumber\\
  =&\left(\begin{array}{cccc}
                      1-(\frac{1}{n})^n & \frac{1}{n}(1-\frac{1}{n})^{n-1} &   &   \\
                        & 1-(\frac{1}{n})^n-\frac{1}{n}(1-\frac{1}{n})^{n-1} & \ddots &   \\
                      &   &  \ddots &\\
                        &   &   &\frac{n-2}{n}(1-\frac{1}{n})^{n-1}\\
                        &   &   & 1-(\frac{1}{n})^n-\frac{n-2}{n}(1-\frac{1}{n})^{n-1} \\
                 \end{array}
               \right).
\end{align}
By equation (\ref{temp3}), we know
\begin{align}
e^{[t]}&=\mathbf{\hat{e}}'\mathbf{\hat{R}}^t\mathbf{\hat{p}}^{[0]}+p_n^{[0]}\sum_{k=0}^{t-1}r_{n,n}^{k}\mathbf{\hat{e}}'\mathbf{\hat{R}}^{t-1-k}\mathbf{\hat{r}}^{[1]}+p_n^{[0]}e_{n}r_{n,n}^t\nonumber\\
  &\le \mathbf{\check{e}}'\mathbf{\check{S}}^t\mathbf{\check{p}}^{[0]}+p_n^{[0]}\sum_{k=0}^{t-1}r_{n,n}^{k}\mathbf{\check{e}}'\mathbf{\check{S}}^{t-1-k}\mathbf{\check{r}}^{[1]}+p_n^{[0]}e_{n}r_{n,n}^t \quad\mbox{(by (\ref{temp7}) and (\ref{temp8}))}\nonumber\\
  &= \mathbf{\check{e}}'\left(\sum_{j=1}^{n-1}\lambda_j^t\mathbf{\check{p}}_j\mathbf{\check{q}}'_j\right)\mathbf{\check{p}}^{[0]}+p_n^{[0]}\sum_{k=0}^{t-1}r_{n,n}^{k}\mathbf{\check{e}}'\left(\sum_{j=1}^{n-1}\lambda_j^{t-1-k}\mathbf{\check{p}}_j\mathbf{\check{q}}'_j\right)\mathbf{\check{r}}^{[1]}+p_n^{[0]}e_{n}r_{n,n}^t\quad\mbox{(by Lemma \ref{Sec2L3})}\nonumber\\
  &= \sum_{j=1}^{n-1}\lambda_j^t(\mathbf{\check{e}}'\mathbf{\check{p}}_j)(\mathbf{\check{q}}'_j\mathbf{\check{p}}^{[0]})+p_n^{[0]}\sum_{k=0}^{t-1}r_{n,n}^{k}\sum_{j=1}^{n-1}\lambda_j^{t-1-k} (\mathbf{\check{e}}'\mathbf{\check{p}}_j)(\mathbf{\check{q}}'_j\mathbf{\check{r}}^{[1]})+p_n^{[0]}e_{n}r_{n,n}^t, \label{etheo9}
\end{align}
 where
 \begin{equation}\label{lambdaTheo9}
   \lambda_j=s_{j,j}=1-\left(\frac{1}{n}\right)^n-\frac{j-1}{n}\left(1-\frac{1}{n}\right)^{n-1},
 \end{equation}
 \begin{align}
  \mathbf{p}_{j}&=\left(\prod_{k=1}^{j-1}\frac{s_{k,k+1}}{s_{j,j}-s_{k,k}},\prod_{k=2}^{j-1}\frac{s_{k,k+1}}{s_{j,j}-s_{k,k}},\dots,\frac{s_{j-1,j}}{s_{j,j}-s_{j-1,j-1}},1,0,\dots,0\right)', \label{EigenTheo9} \\
  \mathbf{q}'_{j}&=\left(0,\dots,0,1,\frac{s_{j,j+1}}{s_{j,j}-s_{j+1,j+1}},\prod_{k=j+1}^{j+2}\frac{s_{k-1,k}}{s_{j,j}-s_{k,k}},\dots,\prod_{k=j+1}^{n-1}\frac{s_{k-1,k}}{s_{j,j}-s_{k,k}}\right),\label{ColTransMatrixInTheo9}
\end{align}
$j=1\dots,n-1$.
Substituting (\ref{lambdaTheo9}), (\ref{ep22}), (\ref{qq2}) and (\ref{qr1}) to (\ref{etheo9}) we conclude that
\begin{align}
e^{[t]}&\le \sum_{j=1}^{2}\lambda_j^t(\mathbf{\check{e}}'\mathbf{\check{p}}_j)(\mathbf{\check{q}}'_j\mathbf{\check{p}}^{[0]})+p_n^{[0]}\sum_{j=1}^{2}\sum_{k=0}^{t-1}r_{n,n}^{k}\lambda_j^{t-1-k} (\mathbf{\check{e}}'\mathbf{\check{p}}_j)(\mathbf{\check{q}}'_j\mathbf{\check{r}}^{[1]})+p_n^{[0]}e_{n}r_{n,n}^t\nonumber\\
&\le \sum_{j=1}^{2}\lambda_j^t(\mathbf{\check{e}}'\mathbf{\check{p}}_j)(\mathbf{\check{q}}'_j\mathbf{\check{p}}^{[0]})+p_n^{[0]} \left[\left(1-\frac{n+1}{n}\left(1-\frac{1}{n}\right)^{n-1}\right)\left(\frac{\lambda_1^{t}}{\frac{1}{n}(1-\frac{1}{n})^{n-1}}+ \frac{\lambda_2^{t}}{\frac{1}{n}(1-\frac{1}{n})^{n-1}}\right)+e_{n}r_{n,n}^t\right]\nonumber\\
&\le\left(1-\frac{n+1}{2^n}\right)\lambda_1^t+\left(\frac{n}{2}-\frac{n}{2^n}\right)\lambda_2^t+\frac{en^2}{2^n}(\lambda_1^t+\lambda_2^t)+\frac{n^2}{2^n}r_{n,n}^t\nonumber\\
&\le \left(1-\frac{n+1}{2^n}+\frac{en^2}{2^n}\right)\left[1-\left(\frac{1}{n}\right)^n\right]^t+\left(\frac{n}{2}-\frac{n}{2^n}+\frac{en^2}{2^n}\right)\left[1-\frac{1}{en}\right]^t+\frac{n^2}{2^n}\left[1-\frac{1}{e}\right]^t.\nonumber
%
\end{align}

\end{proof}

The estimated upper bound is again evaluated by comparing it with simulation results. For the 10-, 20-,...,90-D deceptive problems, the simulated approximation errors are averaged for 1000 independent runs. Just as illustrated in Figure \ref{fig2}, it is showed that the estimated upper bound is very tight for the deceptive problems, which demonstrates that the error analysis method generates a tight upper bound for the approximation error of (1+1)EA solving the deceptive problem.

{\color{red} Since the deceptive problem has a local optimal solution adjacent to the global optimal solution, it is difficult for the (1+1)EA to jump from the local optimum to the global one. So, the expected approximation error is dominated by the first item  $\left(1-\frac{n+1}{2^n}+\frac{en^2}{2^n}\right)[1-\left(\frac{1}{n}\right)^n]^t$, and a iteration budget of order $\Theta(n^n)$ is necessary to get satisfactory convergence performance.}
\begin{figure}
  \centering
  \includegraphics[width=15cm]{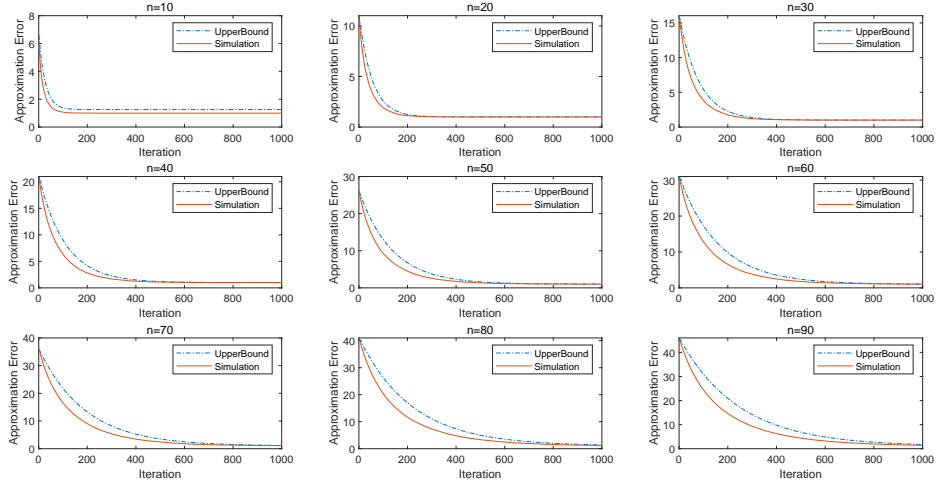}\\
  \caption{Comparison between the estimated upper bound and simulation results on expected approximation error of (1+1)EA solving the 10-, 20-,...,90-D Deceptive problems.}\label{fig2}
\end{figure}

{
\section{The Knapsack Problem}\label{SecKnap}
To validate power of error analysis, we further investigate an instance of the knapsack problem~\cite{he2014knapsack},
\begin{equation}\label{P4}
  \begin{aligned}\max &\quad f(\mathbf x)= \sum_{i=1}^np_ix_i,  \\
  s.t. &\quad \sum_{i=1}^nw_ix_i\le W,\,x_{i}\in\{0,1\},i=1,\dots,n.\end{aligned}
\end{equation}
where the problem parameters is presented in Tab. \ref{Knap}. Without loss of generality, suppose that $\alpha$ is a constant in $(0,1)$, and $\alpha n$ is a positive integer for sufficiently large $n$.


Global optimal solution of the investigated knapsack problem is $\mathbf x_g^{*}=(1,0,\dots,0)$ with $f(\mathbf x_g^*)=n$, and  $\mathbf x_l^*=(0,1,\dots,1,0,\dots 0)$ is the local optimal solution with $f(\mathbf x_l^*)=\alpha n-1$. Since the penalty method would introduce an penalty parameter that function on the fitness value of solutions, in error analysis, we employ a ratio-greedy repair mechanism to transform infeasible solutions into feasible ones. That is, if  an infeasible solution is generated, we sort all items according to the profit-to-weight(P-W) ratios, and the items with the smallest P-W ratio are successively removed from the knapsack until a feasible solution is achieved.

Denote a solution of the knapsack problem as $\mathbf{x}=(x_1,\dots,x_n)$, where $x_i\in\{0,1\}$, $i=1,\dots,n$. If item $i$ is put into the knapsack, then the corresponding binary variable $x_i$ is set as  `1'. According to the P-W ratios of items(variables), we can separate the solution vector $\mathbf x$  into three sub-vectors.
\begin{itemize}
  \item $\mathbf{x}_1=(x_1)$, the variable in which corresponds to the first item with the P-W ratio $1$. The solution $\mathbf{x}_g^*=(\mathbf x_1,\mathbf x_2,\mathbf x_3)=(1,\mathbf 0,\mathbf 0)$ represents the best packing solution that only contains the first item.
  \item $\mathbf{x}_2=(x_2,\dots,x_{\alpha n})$,  variables in which correspond to the $2^{th}-\alpha n^{th}$ items with the P-W ratio $\alpha n$. Since weights of these $\alpha n-1$ items are $\frac{1}{\alpha n}$, the total weight of a solution including all of them is $1-\frac{1}{\alpha n}$, and the total profit is $\alpha n-1$. That is, $\mathbf x_l^*=(\mathbf{x}_1,\mathbf{x}_2,\mathbf{x}_3)=(0,\mathbf{1},\mathbf{0})$ is a local optimal solution with $f(\mathbf x_l^*)=\alpha n-1$. $\mathbf{x}_2=(x_2,\dots,x_{\alpha n})$.
  \item $\mathbf{x}_3=(x_{\alpha n+1},\dots,x_{n})$, variables in which correspond to the last $n-\alpha n$ items with the P-W ratio $\frac{1}{n^2}$.
\end{itemize}
\begin{table}
\caption{Parameters of the knapsack problem.}
\label{Knap}
\centering
\begin{tabular}{c|c|c|c}
  \hline\hline
  \multirow{2}{*}{$\mathbf{x}$}  & $\mathbf{x}_1$ &
  $\mathbf{x}_2$ & $\mathbf{x}_3$ \\
  \cline{2-4}
    & $x_1$ & $x_2,\dots,x_{\alpha n}$ & $x_{\alpha n+1},\dots,x_{n}$ \\
  \hline

  Item $i$ & $1$ & $2,\dots,\alpha n$ & $\alpha n+1,\dots,n$ \\
  \hline

  Profit $p_i$ & $n$ & $1$ & $\frac{1}{n}$ \\
  Weight $w_i$ & $n$ & $\frac{1}{\alpha n}$ & $n$ \\
  P-W ratio $\frac{p_i}{w_i}$ & $1$ & $\alpha n$ & $\frac{1}{n^2}$ \\
  \hline
  Capacity $W$ & \multicolumn{3}{c}{$n$}\\
  \hline\hline
\end{tabular}
\end{table}

Because all items corresponding to $\mathbf x_1$ and $\mathbf x_3$ have a weight $n$, a solution $\mathbf x=(\mathbf x_1,\mathbf x_2,\mathbf x_3)$ is feasible if and only if at most one of $\mathbf{x}_1$, $\mathbf{x}_2$ and $\mathbf{x}_3$ is non-zero.
\begin{itemize}
  \item  If $|\mathbf x_1|=|\mathbf x_2|=|\mathbf x_3|=0$, $\mathbf{x}$ represents an empty knapsack with a fitness $0$;
  \item  if $|\mathbf x_1|=1$, $|\mathbf x_2|=|\mathbf x_3|=0$, we get the global optimal solution $\mathbf{x}^*_g$;
  \item if $|\mathbf x_2|\ge 1$, $|\mathbf x_1|=|\mathbf x_3|=0$, fitness of $\mathbf{x}$ is $|\mathbf x_2|$.
  \item if $|\mathbf x_3|= 1$, $|\mathbf x_1|=|\mathbf x_2|=0$, we have $f(\mathbf x)=\frac{1}{n}$.
\end{itemize}

When a solution $\mathbf x=(\mathbf x_1,\mathbf x_2,\mathbf x_3)$ contains more than one non-zero sub-vectors, it is infeasible, and the ratio-greedy strategy would be triggered to generate a feasible one.
According to the value of P-W ratio, the repair strategy would first remove items represented by $\mathbf{x}_3$, then flip variables in $\mathbf{x}_1$ to zero. Moreover, an infeasible solution could be form of $\mathbf{x}=(0,\mathbf{0},\mathbf{x}_3)$ when $|\mathbf{x}_3|\ge 2$. For this case, the repair strategy would randomly delete redundant items until only one variable in $\mathbf{x}_3$ is `1'.

%
%
%

\subsection{The Error Vector of Feasible Solutions}
Possible fitness values of feasible solutions are $n,\alpha n-1,\dots,1,\frac{1}{n},0 $. According to the fitness values of feasible solutions, their statuses are labelled as $s_0,s_1,\dots,s_{\alpha n+1}$. Then, we can get the fitness vector
\begin{equation*}
\mathbf{\tilde{f}}=(f_0,f_1,\dots,f_{\alpha n+1})'=\left(n,\alpha n-1,\dots,1,\frac{1}{n},0\right)'.
\end{equation*}
Correspondingly, the approximation errors of feasible solutions can be represented by
\begin{equation*}
\mathbf{\tilde{e}}=(e_0,\mathbf{e})=(e_0,\dots,e_{\alpha n+1})'=\left(0, n-(\alpha n-1),\dots, n-1,n-\frac{1}{n},n\right)',
\end{equation*}
where
\begin{equation}\label{ErrorKnap}
\mathbf{e}=(e_1,\dots,e_{\alpha n+1})'=\left(n-(\alpha n-1),\dots, n-1,n-\frac{1}{n},n\right)'.
\end{equation}

\subsection{Random Initialization and Initial Probability Distribution}
When generating the initial solution  by random initialization, we get the initial distribution of statuses.
\begin{enumerate}
     \item  If $\mathbf x_1=1$ and $|\mathbf x_2|=0$, the finally obtained feasible solution is the global solution $\mathbf x^*=(1,0,\dots,0)$. Then, the approximation error is $0$, and the corresponding probability $p_0^{[0]}=\frac{1}{2^{\alpha n}}$.

  \item If $\alpha n-1\ge |\mathbf x_2|\ge 1$, it generates feasible solutions  $\mathbf{x}=(\mathbf{0,\mathbf {x}_2,\mathbf{0}})$, no matter what the sub-vectors $\mathbf x_1$ and $\mathbf x_3$ are.  For this case, a feasible solution $\mathbf x$ with $|\mathbf x_2|=i$ is generated with probability $C_{\alpha n-1}^{i}(\frac{1}{2})^{i}(\frac{1}{2})^{\alpha n-1-i}$, $i=\alpha n-1,\dots, 1$. Thus, we get the sub-vector of approximation error
      $$\mathbf{e}_1=(n-(\alpha n-1),n-(\alpha n-2),\dots, n-1)'.$$
      The corresponding sub-vector of probability is denoted by
      $$\mathbf{p}^{[0]}_1=(p_1^{[0]},p_2^{[0]},\dots,p_{\alpha n-1}^{[0]})'=\left(C_{\alpha n-1}^{\alpha n-1}\left(\frac{1}{2}\right)^{\alpha n-1},C_{\alpha n-1}^{\alpha n-2}\left(\frac{1}{2}\right)^{\alpha n-1},\dots,C_{\alpha n-1}^{1}\left(\frac{1}{2}\right)^{\alpha n-1}\right)'.$$

  \item If $|\mathbf x_1|=0$, $|\mathbf x_2|=0$ and $|\mathbf x_3|\ge 1$, only the third category of items are put into the knapsack. Then, the ratio-greedy strategy would randomly delete redundant items until only one is remained. Consequently, the approximation error is $n-\frac{1}{n}$, and we get the corresponding probability
      $$p_{\alpha n}^{[0]}=\sum_{i=1}^{n-\alpha n}C_{n-\alpha n}^{i}\left(\frac{1}{2}\right)^{i}\left(\frac{1}{2}\right)^{n-i}=\left(\frac{1}{2}\right)^{\alpha n}-\frac{1}{2^n}.$$

      \item If $\mathbf x=(\mathbf x_1,\mathbf x_2,\mathbf x_3)=(0,\dots,0)$, the approximation error is $n$, and we get the corresponding probability ${p}^{[0]}_{\alpha n+1}=\frac{1}{2^n}$.
\end{enumerate}

In conclusion, we get the initial probability distribution of statuses:
$$\mathbf{\tilde{p}}^{[0]}=(p_0^{[0]},\mathbf{p}_1^{[0]'},p_{\alpha n+1}^{[0]})'=\left(\frac{1}{2^{\alpha n}}, C_{\alpha n-1}^{\alpha n-1}\left(\frac{1}{2}\right)^{\alpha n-1},\dots,C_{\alpha n-1}^{1}\left(\frac{1}{2}\right)^{\alpha n-1},\left(\frac{1}{2}\right)^{\alpha n}-\frac{1}{2^n},\frac{1}{2^n}\right)'.$$
For non-optimal statuses,
\begin{equation}\label{DisKnap}
  \mathbf{{p}}^{[0]}=(\mathbf p_1^{[0]'},p_{\alpha n+1}^{[0]})'=\left(C_{\alpha n-1}^{\alpha n-1}\left(\frac{1}{2}\right)^{\alpha n-1},\dots,C_{\alpha n-1}^{1}\left(\frac{1}{2}\right)^{\alpha n-1},\left(\frac{1}{2}\right)^{\alpha n}-\frac{1}{2^n},\frac{1}{2^n}\right)'.
\end{equation}

\subsection{Expected Approximation Error of RSHs}

\subsubsection{Expected Approximation Error of RLS}
Assisted by the ratio-greedy repair strategy, RLS generates possible status transitions detailed as follows.
 \begin{enumerate}
   \item While the present status is $s_{\alpha n+1}$, the corresponding solution is $\mathbf{x}=(0,\dots,0)$. Then, any flip from `0' to `1'  generates a solution $\mathbf{y}$ with better fitness, which would be accepted by the elitist selection. The status transitions and corresponding transition probabilities $p_{\mathbf x\to \mathbf y}$ are detailed in Tab. \ref{ProbTran} as \textbf{Case 1}.
   \item While the present status is $s_{\alpha n}$, the corresponding solution is $\mathbf{x}=(0,\mathbf 0,\mathbf x_3)$, where $|\mathbf x_3|=1$. Then, any flip of variables in $\mathbf{x}_3$ from `0' to `1'  generates an infeasible solution because weight of items represented by $\mathbf x_3$ is $n$. Then, the greedy-repair strategy would convert it to another solution at status $s_{\alpha n}$, and the status is not changed. If one bit in $\mathbf x_1$ or $\mathbf x_2$ is flipped to $1$, the repair strategy will keep it and flip the `1' in $\mathbf x_3$ to `0', which results in the status transition labeled  in Tab. \ref{ProbTran} as \textbf{Case 2}.
   \item While the present status is $s_{k}$, $k=\alpha n-1,\dots, 2$, the corresponding solution is $\mathbf{x}=(0,\mathbf x_2,\mathbf 0)$, where $|\mathbf x_2|=\alpha n-k$. Then, a candidate solution is accepted if and only if it is generated by flip another `0' in $\mathbf x_2$  to `1'. The status transition characterized as \textbf{Case 3} in Tab. \ref{ProbTran}.
   \item If the present status is $s_1$, the one-bit mutation cannot generate a status transition any more, and the iteration process would stagnate.
 \end{enumerate}

\begin{table}
\caption{Status transitions and the corresponding probabilities generated by RLS.}\label{ProbTran}
\centering
\begin{tabular}{c|c|ccc|cc|c}
  \hline
  \hline
   & &\multicolumn{3}{c|}{\textbf{Case 1}}& \multicolumn{2}{c|}{\textbf{Case 2}}&  \textbf{Case 3}\\
  \hline

  \multirow{3}{*}{$\mathbf x$}& Status & \multicolumn{3}{c|}{$\alpha n+1$} &\multicolumn{2}{c|}{$\alpha n$}& $ k,(k=2,\dots,\alpha n-1)$\\
  &$f(\mathbf x)$ & \multicolumn{3}{c|}{$0$} &\multicolumn{2}{c|}{$\frac{1}{n}$}& $\alpha n-k,(k=\alpha n-1,\dots,2)$\\
  & $e(\mathbf x)$& \multicolumn{3}{c|}{$n$}  &\multicolumn{2}{c|}{$n-\frac{1}{n}$} & $(1-\alpha)n+k$\\
  \hline

  \multirow{2}{*}{$\mathbf y$}& Status&$\alpha n$ &$\alpha n-1$ & $0$ & $\alpha n-1$ & $0$ & $k-1$\\
  & $f(\mathbf y)$&$\frac{1}{n}$ &$1$ & $n$ & $1$ & $n$ & $\alpha n-k+1$\\
  & $e(\mathbf y)$&$n-\frac{1}{n}$ &$n-1$ & $0$ & $n-1$ & $0$ & $(1-\alpha)n+k-1$\\

  \hline
  \multicolumn{2}{c|}{$p_{\mathbf x\to\mathbf y}$}& $\frac{n-\alpha n}{n}$ & $\frac{\alpha n-1}{n}$&$\frac{1}{n}$ & $\frac{\alpha n-1}{n}$ & $\frac{1}{n}$ &$\frac{k-1}{n}$\\
  \hline
\end{tabular}
\end{table}

Then, the transition matrix can be represented as
\begin{equation}\label{TrKnapRLS}
  \mathbf {\tilde R}=\left(
                       \begin{array}{cc}
                         \mathbf{\hat{R}} & \mathbf{\hat{r}}^{[1]} \\
                         \mathbf{0} & 0 \\
                       \end{array}
                     \right),
\end{equation}
where
\begin{equation}\label{rKnapRLS}
  \mathbf{\hat{r}}^{[1]}=(r_{0,\alpha n+1},\dots,r_{n,\alpha n+1})'=(1/n,0,\dots,0,\alpha-1/n,1-\alpha)',
\end{equation}
\begin{align}\label{TrKnapRLS1}
  \mathbf{\hat{R}}=\left(\begin{array}{cc}
                           1 & \mathbf{\check{r}} \\
                           0 & \mathbf{\check{R}}
                         \end{array}\right)=\left(
                 \begin{array}{ccccccc}
                 1&0& 0 &\cdots & 0 & 0& \frac{1}{n}\\
                      &1 & \frac{1}{n} & \cdots  & 0 & 0 & 0  \\
                     &   & 1-\frac{1}{n} & \ddots &  \vdots & \vdots & \vdots\\
                   &   &   &  \ddots & \vdots &\vdots &\vdots\\
                   &     &   &   &\alpha-\frac{3}{n} & 0 &0 \\
                   &     &   &   & 1-(\alpha-\frac{3}{n}) & \alpha-\frac{2}{n} & 0\\
                     &     &   &   &   & 1-(\alpha-\frac{2}{n}) & \alpha-\frac{1}{n}\\
                       &     &   &   &   &   & 1-\alpha\\
                 \end{array}
               \right).
\end{align}

\begin{theorem}\label{Th_RLS_Knapsack}
For RLS on the Knapsack problem, the expected approximation error is bounded by
\begin{align*}
   &(n-\alpha n+1)\left[1-\frac{1}{2^{\alpha n-1}}-\frac{1}{2^n}\right]+\frac{\alpha n-2}{4}\left(1-\frac{1}{n}\right)^{t-1} +(-1)^{\alpha n}(n-1)\frac{\alpha n+1}{\alpha n}\frac{1}{2^{\alpha n}}(1-\alpha)^t \nonumber\\
   &\le e^{[t]}
   \le (n-\alpha n+1)\left[1-\frac{1}{2^{\alpha n}}+\frac{1}{2^n}\right]+\frac{\alpha n-1}{2}\left(1-\frac{1}{n}\right)^{t-1}+(-1)^{\alpha n}(n-1)\frac{\alpha n+1}{\alpha n}\frac{1}{2^{\alpha n}}(1-\alpha)^t.
  \end{align*}
\end{theorem}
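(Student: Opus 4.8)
The plan is to combine the block-decomposition identity of Lemma~\ref{Sec4L5} with the spectral expansion of a bi-diagonal matrix from Lemma~\ref{Sec2L3}. The transition matrix (\ref{TrKnapRLS}) already has the partitioned shape required by Lemma~\ref{Sec4L5}, with the distinguished empty-knapsack status $s_{\alpha n+1}$ playing the role of status $L$; the decisive feature here is that $r_{L,L}=r_{\alpha n+1,\alpha n+1}=0$, because any one-bit flip of $(0,\dots,0)$ strictly improves the fitness and is accepted. First I would substitute $r_{L,L}=0$ into the formula of Lemma~\ref{Sec4L5}: the geometric sum $\sum_{k=0}^{t-1}r_{L,L}^{k}\,\mathbf{\hat e}'\mathbf{\hat R}^{t-1-k}\mathbf{\hat r}^{[1]}$ collapses to its single surviving $k=0$ term, and the tail $p_L^{[0]}e_Lr_{L,L}^t$ vanishes for every $t\ge1$, leaving
\begin{equation*}
 e^{[t]}=\mathbf{\hat e}'\mathbf{\hat R}^{t}\mathbf{\hat p}^{[0]}+p_{\alpha n+1}^{[0]}\,\mathbf{\hat e}'\mathbf{\hat R}^{t-1}\mathbf{\hat r}^{[1]}.
\end{equation*}

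Next I would peel off the optimal status. Since $\mathbf{\hat R}$ in (\ref{TrKnapRLS1}) is itself block-upper-triangular with a leading $1$ for the absorbing optimal status $s_0$ whose error is $e_0=0$, Lemma~\ref{Sec1L1} applies to both occurrences of $\mathbf{\hat R}$, replacing each $\mathbf{\hat e}'\mathbf{\hat R}^{k}(\cdot)$ by $\mathbf{\check e}'\mathbf{\check R}^{k}(\cdot)$; here $\mathbf{\check R}$ is the bi-diagonal core on statuses $s_1,\dots,s_{\alpha n}$, while $\mathbf{\check e}$, $\mathbf{\check p}^{[0]}$ are the truncations of the error and initial-distribution vectors and $\mathbf{\check r}^{[1]}$ is $\mathbf{\hat r}^{[1]}$ with its optimal-status entry $r_{0,\alpha n+1}=1/n$ discarded. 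The matrix $\mathbf{\check R}$ is bi-diagonal with the $\alpha n$ \emph{distinct} diagonal entries $\lambda_1=1,\ \lambda_2=1-\tfrac1n,\ \dots,\ \lambda_{\alpha n-1}=1-\tfrac{\alpha n-2}{n},\ \lambda_{\alpha n}=1-\alpha$, so Lemma~\ref{Sec2L3} gives $\mathbf{\check R}^{t}=\sum_{j=1}^{\alpha n}\lambda_j^{t}\mathbf{p}_j\mathbf{q}_j'$ and hence
\begin{equation*}
 e^{[t]}=\sum_{j=1}^{\alpha n}\lambda_j^{t}(\mathbf{\check e}'\mathbf{p}_j)(\mathbf{q}_j'\mathbf{\check p}^{[0]})+p_{\alpha n+1}^{[0]}\sum_{j=1}^{\alpha n}\lambda_j^{t-1}(\mathbf{\check e}'\mathbf{p}_j)(\mathbf{q}_j'\mathbf{\check r}^{[1]}).
\end{equation*}

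The remaining work is to evaluate the scalar inner products $\mathbf{\check e}'\mathbf{p}_j$, $\mathbf{q}_j'\mathbf{\check p}^{[0]}$ and $\mathbf{q}_j'\mathbf{\check r}^{[1]}$ from the explicit eigenvector formulas (\ref{Eigen1})--(\ref{ColTransMatrixIn}), using the error vector (\ref{ErrorKnap}) and the binomial initial weights (\ref{DisKnap}). The key observation is that only three eigenvalues contribute terms that are neither negligible nor exponentially negligible: $\lambda_1=1$ produces the constant part $(n-\alpha n+1)[\,\cdot\,]$, $\lambda_2=1-\tfrac1n$ produces the $(1-\tfrac1n)^{t-1}$ term, and $\lambda_{\alpha n}=1-\alpha$ produces the $(1-\alpha)^t$ term, whose alternating sign $(-1)^{\alpha n}$ comes from the parity of the negative factors $\lambda_{\alpha n}-\lambda_k<0$ in the eigenvector denominators. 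I would therefore isolate these three contributions exactly, and then bound the pooled contribution of the intermediate eigenvalues $\lambda_3,\dots,\lambda_{\alpha n-1}$ from above and below, exploiting the monotonicity of $\mathbf{\check e}$, the closed binomial forms of (\ref{DisKnap}), and elementary estimates such as $\tfrac{1}{2e}<(1-\tfrac1n)^n<\tfrac12$. The slack left by this two-sided estimation of the intermediate block is exactly what separates the stated lower and upper bounds (e.g.\ the constant factor ranging between $1-\tfrac{1}{2^{\alpha n-1}}-\tfrac{1}{2^n}$ and $1-\tfrac{1}{2^{\alpha n}}+\tfrac{1}{2^n}$, and the coefficient of $(1-\tfrac1n)^{t-1}$ ranging between $\tfrac{\alpha n-2}{4}$ and $\tfrac{\alpha n-1}{2}$).

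The main obstacle I anticipate is the bookkeeping of the last step: the eigenvector entries are products $\prod_k \tfrac{r_{k-1,k}}{\lambda_j-\lambda_k}$ whose denominators are differences of the $\lambda$'s, so controlling the sign pattern that yields $(-1)^{\alpha n}$ and showing that the aggregate of the intermediate terms stays inside the claimed gap requires careful telescoping rather than one clean identity. I would manage this by first establishing closed forms for $\mathbf{\check e}'\mathbf{p}_j$, $\mathbf{q}_j'\mathbf{\check p}^{[0]}$ and $\mathbf{q}_j'\mathbf{\check r}^{[1]}$ (analogous to the simplifications used in the OneMax and Deceptive cases), and only afterwards assembling and two-sidedly bounding the two sums.
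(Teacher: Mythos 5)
Your proposal follows the paper's own proof essentially step for step: Lemma~\ref{Sec4L5} with $r_{\alpha n+1,\alpha n+1}=0$ collapsing the geometric sum to a single term, Lemma~\ref{Sec1L1} to peel off the optimal status, Lemma~\ref{Sec2L3} for the spectral expansion of the bi-diagonal core with eigenvalues $\lambda_j=1-\tfrac{j-1}{n}$ ($j\le\alpha n-1$) and $\lambda_{\alpha n}=1-\alpha$, and then explicit evaluation of the inner products, which is exactly the route taken in the paper and its Appendix~\ref{AppendixC}. One small correction to your final assembly: the intermediate terms $j=3,\dots,\alpha n-1$ are not merely ``pooled and bounded'' --- the computation gives $\mathbf{\check{e}}'\mathbf{\check{p}}_j=0$ identically for those $j$, so only $j=1,2,\alpha n$ survive, and the gap between the stated lower and upper bounds comes instead from the two-sided estimate of $\mathbf{\check{q}}_j'\mathbf{\check{r}}^{[1]}$ (and the simplification of $\mathbf{\check{q}}_j'\mathbf{\check{p}}^{[0]}$) for the surviving indices.
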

\begin{proof}
By Lemma \ref{Sec4L5}, we know
\begin{align*}
  e^{[t]}=\mathbf{\hat{e}}'\mathbf{\hat{R}}^t\mathbf{\hat{p}}^{[0]}+p_{\alpha n+1}^{[0]}\mathbf{\hat{e}}'\mathbf{\hat{R}}^{t-1}\mathbf{\hat{r}}^{[1]}, 
\end{align*}
where $\mathbf{\hat{R}}$ given by equation (\ref{TrKnapRLS1}),
\begin{align}
  &\mathbf{\hat{e}}=(e_0,\dots,e_{\alpha n})'=\left(0,n-(\alpha n-1),\dots,n-1,n-\frac{1}{n}\right)', \\
  &\mathbf{\hat{p}}^{[0]}=(p_0^{[0]},\dots,p_{\alpha n}^{[0]})'=\left(\frac{1}{2^{\alpha n}}, C_{\alpha n-1}^{\alpha n-1}\left(\frac{1}{2}\right)^{\alpha n-1},\dots,C_{\alpha n-1}^{1}\left(\frac{1}{2}\right)^{\alpha n-1},\left(\frac{1}{2}\right)^{\alpha n}-\frac{1}{2^n}\right)',
\end{align}
Then, Lemma \ref{Sec2L3} and Theorem \ref{Th_BD} imply that
\begin{align}
  e^{[t]}&=\mathbf{\check{e}}'\left(\sum_{j=1}^{\alpha n}\lambda_j^t\mathbf{\check{p}}_j\mathbf{\check{q}}_j'\right)\mathbf{\check{p}}^{[0]} +p_{\alpha n+1}^{[0]}\mathbf{\check{e}}'\left(\sum_{j=1}^{\alpha n}\lambda_j^{t-1}\mathbf{\check{p}}_j\mathbf{\check{q}}_j'\right)\mathbf{\check{r}}^{[0]}\nonumber\\
  &=\sum_{j=1}^{\alpha n}\lambda_j^t\left(\mathbf{\check{e}}'\mathbf{\check{p}}_j\right)\left(\mathbf{\check{q}}_j'\mathbf{\check{p}}^{[0]}\right) +p_{\alpha n+1}^{[0]}\sum_{j=1}^{\alpha n}\lambda_j^{t-1}\left(\mathbf{\check{e}}'\mathbf{\check{p}}_j\right)\left(\mathbf{\check{q}}_j'\mathbf{\check{r}}^{[1]}\right)\label{errTh10},
\end{align}
where \begin{equation}\label{EigTh10}
        \lambda_j=r_{j,j}=\left\{\begin{aligned}& 1-\frac{j-1}{n}, && j=1,\dots,\alpha n-1,\\ & 1-\frac{j}{n}, && j=\alpha n,\end{aligned}\right.
      \end{equation},
\begin{equation}\label{eTh10}
\mathbf{\check{e}}=(e_1,\dots,e_{\alpha n})'=\left( n-(\alpha n-1),\dots, n-1,n-\frac{1}{n}\right)',
\end{equation}
\begin{equation}\label{qTh10}
  \mathbf{\check{p}}^{[0]}=({p}_1^{[0]'},\dots,p_{\alpha n}^{[0]})'=\left(C_{\alpha n-1}^{\alpha n-1}\left(\frac{1}{2}\right)^{\alpha n-1},\dots,C_{\alpha n-1}^{1}\left(\frac{1}{2}\right)^{\alpha n-1},\left(\frac{1}{2}\right)^{\alpha n}-\frac{1}{2^n}\right)',
\end{equation}
\begin{equation}\label{rrKnapRLS}
  \mathbf{\check{r}}^{[1]}=(r_{1,\alpha n+1},\dots,r_{n,\alpha n+1})'=(0,\dots,0,\alpha-1/n,1-\alpha)',
\end{equation}
\begin{align}
  \mathbf{p}_{j}&=\left(\prod_{k=1}^{j-1}\frac{r_{k,k+1}}{r_{j,j}-r_{k,k}},\prod_{k=2}^{j-1}\frac{r_{k,k+1}}{r_{j,j}-r_{k,k}},\dots,\frac{r_{j-1,j}}{r_{j,j}-r_{j-1,j-1}},1,0,\dots,0\right)', \label{Eigen1Th10} \\
  \mathbf{q}'_{j}&=\left(0,\dots,0,1,\frac{r_{j,j+1}}{r_{j,j}-r_{j+1,j+1}},\prod_{k=j+1}^{j+2}\frac{r_{k-1,k}}{r_{j,j}-r_{k,k}},\dots,\prod_{k=j+1}^{\alpha n}\frac{r_{k-1,k}}{r_{j,j}-r_{k,k}}\right),\label{ColTransMatrixInTh10}\\
  &\quad\quad\quad\quad\quad\quad\quad\quad\quad\quad\quad\quad\quad\quad\quad\quad\quad\quad\quad\quad\quad\quad\quad j=1\dots,\alpha n.\nonumber
\end{align}
Substituting (\ref{EigTh10}), (\ref{ep1Appendd}), (\ref{ep2Appendd}), (\ref{qq1Appendd}), (\ref{qq2Appendd}), (\ref{qr1Th10}) and (\ref{qr2Th10}) to (\ref{errTh10}), we know that
\begin{align}
  e^{[t]}=&\lambda_1^{t-1}\left(\mathbf{\check{e}}'\mathbf{\check{p}}_1\right)\left(\lambda_1\mathbf{\check{q}}_1'\mathbf{\check{p}}^{[0]}+p_{\alpha n+1}^{[0]}\mathbf{\check{q}}_1'\mathbf{\check{r}}^{[1]}\right) +\lambda_2^{t-1}\left(\mathbf{\check{e}}'\mathbf{\check{p}}_2\right)\left(\lambda_2\mathbf{\check{q}}_2'\mathbf{\check{p}}^{[0]}+p_{\alpha n+1}^{[0]}\mathbf{\check{q}}_2'\mathbf{\check{r}}^{[1]}\right) \nonumber\\
  &+\lambda_{\alpha n}^{t-1}\left(\mathbf{\check{e}}'\mathbf{\check{p}}_{\alpha n}\right)\left(\lambda_{\alpha n}\mathbf{\check{q}}_{\alpha n}'\mathbf{\check{p}}^{[0]}+p_{\alpha n+1}^{[0]}\mathbf{\check{q}}_{\alpha n}'\mathbf{\check{r}}^{[1]}\right),\label{EERLS}
  \end{align}
  which is bounded by
  \begin{align*}
   &(n-\alpha n+1)\left[1-\frac{1}{2^{\alpha n-1}}-\frac{1}{2^n}\right]+\frac{\alpha n-2}{4}\left(1-\frac{1}{n}\right)^{t-1} +(-1)^{\alpha n}(n-1)\frac{\alpha n+1}{\alpha n}\frac{1}{2^{\alpha n}}(1-\alpha)^t \nonumber\\
   &\le e^{[t]}
   \le (n-\alpha n+1)\left[1-\frac{1}{2^{\alpha n}}+\frac{1}{2^n}\right]+\frac{\alpha n-1}{2}\left(1-\frac{1}{n}\right)^{t-1}(-1)^{\alpha n}(n-1)\frac{\alpha n+1}{\alpha n}\frac{1}{2^{\alpha n}}(1-\alpha)^t.
  \end{align*}
\end{proof}

Similar to the case of the Deceptive problem, the Knapsack problem has a local absorbing region where individuals cannot jump out by the one-bit mutation, and thus, the expected approximation error would not converge to zero when $t\to\infty$. Because $\lambda_1=1$, by formula (\ref{EERLS}) we know that $e^{[t]}$ converges to 0 if and only if  $\left(\mathbf{\check{e}}'\mathbf{\check{p}}_1\right)\left(\lambda_1\mathbf{\check{q}}_1'\mathbf{\check{p}}^{[0]}+p_{\alpha n+1}^{[0]}\mathbf{\check{q}}_1'\mathbf{\check{r}}^{[1]}\right)=0$, which is further equivalent to the statement that both $\mathbf{\check{p}}^{[0]}$ and $p_{\alpha n+1}^{[0]}$ are zero. That is, we must generate the global optimal solution by initialization, which is impossible on the premise that we do not know the exact global optimal solution. In conclusion, $e^{[t]}$ cannot converge to $0$, no matter what initialization strategy is employed by RLS.

\subsubsection{Expected Approximation Error of (1+1)EA}
Denote the transition probability from status $j$ to status $i$ by $\tilde{p}_{i,j}$.  When the bitwise mutation is employed in the (1+1)EA, the transition probabilities are estimated as follows.
\begin{enumerate}
  \item While status $j$ transitions to status $0$, $j=1,\dots,\alpha n+1$, the transition probability is
\begin{equation}\label{TranKnap1}
  \tilde{p}_{0,j}=\left\{\begin{aligned}& \left(\frac{1}{n}\right)^{\alpha n+1-j}\left(1-\frac{1}{n}\right)^{j-1},&&j=1,\dots,\alpha n-1,\\& \frac{1}{n}\left(1-\frac{1}{n}\right)^{\alpha n-1}, && j=\alpha n,\\ & \frac{1}{n}\left(1-\frac{1}{n}\right)^{\alpha n-1}, &&,j=\alpha n+1.\end{aligned}\right.
\end{equation}

\item The probability to transition from status $j$ to status $i (1\le i<j)$ is
\begin{equation}\label{TranKnap2}
  \tilde{p}_{i,j}\ge\left\{\begin{aligned}& C_{j-1}^{j-i}\left(\frac{1}{n}\right)^{j-i}\left(1-\frac{1}{n}\right)^{(\alpha n-1)-(j-i)},&&j=1,\dots,\alpha n-1,\\& C_{\alpha n-1}^{\alpha n-i}\left(\frac{1}{n}\right)^{\alpha n-i}\left(1-\frac{1}{n}\right)^{i-1}, && j=\alpha n,\alpha n+1, 1\le i\le\alpha n-1,\\ & \left(1-\frac{1}{n}\right)^{\alpha n}-\left(1-\frac{1}{n}\right)^{n}, &&,j=\alpha n+1,i=\alpha n.\end{aligned}\right.
\end{equation}

\end{enumerate}

Note that $\tilde{p}_{i,\alpha n}=\tilde{p}_{i,\alpha n+1}$, $i=0,1,\dots,\alpha n-1$, and the difference between $e_{\alpha n}$ and $e_{\alpha n+1}$ is $\frac{1}{n}$, which is an infinitesimal that could be ignored. Then, we redefine individual statuses by combing the status $s_{\alpha n}$ and $s_{\alpha n+1}$ together, and there are $\alpha n+1$ statuses labelled as $s_0,s_1,\dots,s_{\alpha n}$. The corresponding error vector is estimated as
\begin{equation*}
\mathbf{\tilde{e}}_R=(e_0,\dots,e_{\alpha n-1},e_{\alpha n+1})'=\left(0, n-(\alpha n-1),\dots, n-1,n\right)',
\end{equation*}
and the initial distribution vector is
\begin{equation*}\mathbf{\tilde{p}}^{[0]}_R=(p^{[0]}_0,\dots,p^{[0]}_{\alpha n-1},p^{[0]}_{\alpha n}+p^{[0]}_{\alpha n+1})'=\left(\frac{1}{2^{\alpha n}}, C_{\alpha n-1}^{\alpha n-1}\left(\frac{1}{2}\right)^{\alpha n-1},\dots,C_{\alpha n-1}^{1}\left(\frac{1}{2}\right)^{\alpha n-1},\left(\frac{1}{2}\right)^{\alpha n}\right)'.
\end{equation*}
Let $\mathbf{\tilde{R}}=(r_{i,j})_{i,j=0,1,\dots,\alpha n}$ denote the transition matrix for the redefined individual statuses. When the present status is $s_{\alpha n}$, the status would keep unchanged if no better solutions are generated by the bitwise mutation, that is, the first $\alpha n$ bits are not flipped from `0' to `1'. So, we have $r_{\alpha n,\alpha n}=\left(1-\frac{1}{n}\right)^{\alpha n}$. Because the approximation error is magnified when combining two statuses together, the expected approximation error is amplified, too. That is,
{\color{blue}
\begin{equation}\label{errEAKnap}
  e^{[t]}=\mathbf{\tilde{e}}'\mathbf{\tilde{P}}^t\mathbf{\tilde{p}}^{[0]}\le \mathbf{\tilde{e}}_R'\mathbf{\tilde{R}}^t\mathbf{\tilde{p}}_R^{[0]}.
\end{equation}}

\begin{theorem}\label{Th_EA_Knapsack}
The expected approximation error of (1+1)EA for the Knapsack problem is bounded by
\begin{align*}
  e^{[t]}&\le\textstyle\left[(1-\alpha)n+1+\frac{1}{\alpha(1-\alpha)2^{\alpha n-1}}\left(1-\left(1-\frac{1}{n}\right)^{\alpha n-1}\right)\right]\left[1-\left(\frac{1}{n}\right)^{\alpha n}\right]^t\\
  &\quad +\textstyle \left\{\frac{(2^{\alpha n-2}-1)(\alpha n-1)}{2^{\alpha n-1}}+\frac{\alpha n-1}{2^{\alpha n-2}\alpha(1-\alpha)}\left[1-\left(1-\frac{1}{n}\right)^{\alpha n-1}\right]\right\}\left[1-\left(\frac{1}{n}\right)^{\alpha n}-\frac{1}{n}\left(1-\frac{1}{n}\right)^{\alpha n-2}\right]^{t}\\
  &\quad +\textstyle \left(n-\frac{1}{n}\right)\frac{1}{2^{\alpha n}}\left(1-\frac{1}{n}\right)^{\alpha n t}.
\end{align*}
\end{theorem}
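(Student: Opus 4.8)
The plan is to reuse, almost verbatim, the machinery already deployed for the Deceptive problem in Theorem~\ref{Th_EA_Deceptive}, because the Knapsack instance shares its decisive feature: a ``local absorbing'' status $s_{\alpha n}$ out of which the bitwise mutation escapes only with exponentially small probability. First I would take the domination inequality~(\ref{errEAKnap}), which is already established and which reduces the task to bounding $\mathbf{\tilde{e}}_R'\mathbf{\tilde{R}}^t\mathbf{\tilde{p}}_R^{[0]}$ for the merged chain on the $\alpha n+1$ statuses $s_0,\dots,s_{\alpha n}$. Since the bitwise mutation reaches every status with positive probability, $\mathbf{\tilde{R}}$ is a full upper-triangular (elitist) matrix, so $\mathbf{\tilde{R}}^t$ has no usable closed form; the entire strategy is therefore to trade it for a slower bi-diagonal surrogate for which Theorem~\ref{Th_BD} gives an explicit spectral formula.

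Next I would partition $\mathbf{\tilde{R}}$ exactly as in Lemma~\ref{Sec4L5}, peeling off the absorbing status $s_{\alpha n}$ whose self-loop probability is $r_{\alpha n,\alpha n}=(1-\tfrac{1}{n})^{\alpha n}$, writing $\mathbf{\tilde{R}}=\left(\begin{array}{cc}\mathbf{\hat{R}} & \mathbf{\hat{r}}^{[1]}\\ 0 & r_{\alpha n,\alpha n}\end{array}\right)$. Lemma~\ref{Sec4L5} then yields $e^{[t]}=\mathbf{\hat{e}}'\mathbf{\hat{R}}^t\mathbf{\hat{p}}^{[0]}+p_{\alpha n}^{[0]}\sum_{k=0}^{t-1}r_{\alpha n,\alpha n}^{k}\,\mathbf{\hat{e}}'\mathbf{\hat{R}}^{t-1-k}\mathbf{\hat{r}}^{[1]}+p_{\alpha n}^{[0]}e_{\alpha n}r_{\alpha n,\alpha n}^t$. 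The last summand is already the third term of the claimed bound, namely $(n-\tfrac{1}{n})\tfrac{1}{2^{\alpha n}}(1-\tfrac{1}{n})^{\alpha n t}$, obtained by reading off $p_{\alpha n}^{[0]}=\tfrac{1}{2^{\alpha n}}$, $e_{\alpha n}=n-\tfrac{1}{n}$ and $r_{\alpha n,\alpha n}=(1-\tfrac{1}{n})^{\alpha n}$, so only the first two contributions remain to be bounded.

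For those I would lower-bound the surviving super- and off-diagonal entries of $\mathbf{\hat{R}}$ using the mutation estimates~(\ref{TranKnap1})--(\ref{TranKnap2}), and construct an auxiliary bi-diagonal matrix $\mathbf{\hat{S}}$ (together with its core block $\mathbf{\check{S}}$ obtained by deleting the optimal row and column) whose diagonal entries are $\lambda_j=1-(\tfrac{1}{n})^{\alpha n}-\tfrac{j-1}{n}(1-\tfrac{1}{n})^{\alpha n-2}$ and which satisfies the monotonicity conditions~(\ref{conC1})--(\ref{conC3}). Theorem~\ref{Th_ES} then upgrades both $\mathbf{\hat{e}}'\mathbf{\hat{R}}^t\mathbf{\hat{p}}^{[0]}$ and each factor $\mathbf{\hat{e}}'\mathbf{\hat{R}}^{t-1-k}\mathbf{\hat{r}}^{[1]}$ to their surrogate counterparts, while Lemma~\ref{Sec2L3} and Theorem~\ref{Th_BD} provide $\mathbf{\check{S}}^t=\sum_j\lambda_j^t\mathbf{\check{p}}_j\mathbf{\check{q}}_j'$. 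Substituting the error vector~(\ref{ErrorKnap}) and the initial distribution~(\ref{DisKnap}) reduces everything to the scalar coefficients $\mathbf{\check{e}}'\mathbf{\check{p}}_j$, $\mathbf{\check{q}}_j'\mathbf{\check{p}}^{[0]}$ and $\mathbf{\check{q}}_j'\mathbf{\check{r}}^{[1]}$, plus the geometric sum $\sum_{k=0}^{t-1}r_{\alpha n,\alpha n}^{k}\lambda_j^{t-1-k}$, which I would control by $\lambda_j^{t}/(\lambda_j-r_{\alpha n,\alpha n})$ and hence by a constant multiple of $\lambda_j^{t}$. Keeping only the two leading eigenvalues $\lambda_1=1-(\tfrac{1}{n})^{\alpha n}$ and $\lambda_2=1-(\tfrac{1}{n})^{\alpha n}-\tfrac{1}{n}(1-\tfrac{1}{n})^{\alpha n-2}$, and over-estimating the rest, should reproduce the first two terms.

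The main obstacle will be the combined bookkeeping of the eigenvector coefficients and the verification of~(\ref{conC1})--(\ref{conC3}) for the merged, repaired chain: the off-diagonal entries mix the direct-to-optimum probabilities of~(\ref{TranKnap1}) with the multi-bit transitions of~(\ref{TranKnap2}), and the $\tfrac{1}{\alpha(1-\alpha)}$-type constants in the bound emerge precisely from telescoping the products that define $\mathbf{\check{q}}_j'\mathbf{\check{r}}^{[1]}$. The inequalities must therefore be arranged so that the surrogate genuinely converges no faster than the original chain while the eigenvector products stay tractable. I expect the non-dominant eigenvalues to contribute only lower-order terms, so the essential difficulty is algebraic rather than conceptual.
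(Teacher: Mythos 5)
Your proposal follows essentially the same route as the paper's proof: the domination inequality (\ref{errEAKnap}), the block partition of Lemma \ref{Sec4L5} peeling off the absorbing status $s_{\alpha n}$, an auxiliary bi-diagonal surrogate with the same eigenvalues $\lambda_j=1-(\tfrac{1}{n})^{\alpha n}-\tfrac{j-1}{n}(1-\tfrac{1}{n})^{\alpha n-2}$ justified via Theorem \ref{Th_ES}, the spectral expansion of Lemma \ref{Sec2L3}, and the geometric-sum bound $\lambda_j^t/(\lambda_j-r_{\alpha n,\alpha n})$ producing the $\tfrac{1}{\alpha(1-\alpha)}$ constants. The only small refinement worth noting is that the terms for $j\ge 3$ do not merely contribute lower-order corrections but vanish exactly, since $\mathbf{e}_R'\mathbf{p}_j=0$ for $j\ge 3$ by the combinatorial identity computed in the appendix.
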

\begin{proof}
Partition the transition matrix as
\begin{align}
  \mathbf{\tilde{R}}=\left(
                       \begin{array}{cc}
                         \mathbf{\hat{R}} & \mathbf{\hat{r}}^{[1]}  \\
                         0 & r_{\alpha n,\alpha n}  \\
                       \end{array}
                     \right)
\end{align}
where $\mathbf{\hat{R}}=(r_{i,j})_{\alpha n\times\alpha n}, i,j=0,1,\dots,\alpha n-1$, $\mathbf{\hat{r}}^{[1]}=(r_{0,\alpha n},r_{1,\alpha n},\dots,r_{\alpha n-1,\alpha n})'$. By applying Lemma \ref{Sec4L5} to (\ref{errEAKnap}) we know
\begin{align}
  e^{[t]}\le\mathbf{\hat{e}}_R'\mathbf{\hat{R}}^t\mathbf{\hat{p}}_R^{[0]}+(p_{\alpha n}^{[0]}+p_{\alpha n+1}^{[0]})\sum_{k=0}^{t-1}r_{\alpha n,\alpha n}^{k}\mathbf{\hat{e}}_R'\mathbf{\hat{R}}^{t-1-k}\mathbf{\hat{r}}^{[1]}+(p_{\alpha n}^{[0]}+p_{\alpha n+1}^{[0]})e_{\alpha n+1}r_{\alpha n,\alpha n}^t, \label{temp4}
\end{align}
where \begin{equation*}
\mathbf{\hat{e}}_R=(e_0,\dots,e_{\alpha n-1})'=\left(0, n-(\alpha n-1),\dots, n-1\right)',
\end{equation*}
\begin{equation*}\mathbf{\hat{p}}^{[0]}_R=(p^{[0]}_0,\dots,p^{[0]}_{\alpha n-1})'=\left(\frac{1}{2^{\alpha n}}, C_{\alpha n-1}^{\alpha n-1}\left(\frac{1}{2}\right)^{\alpha n-1},\dots,C_{\alpha n-1}^{1}\left(\frac{1}{2}\right)^{\alpha n-1}\right)'.
\end{equation*}
Furthermore, denote
\begin{equation}\label{erTheo11}
\mathbf{{e}}_R=(e_1,\dots,e_{\alpha n-1})'=\left( n-(\alpha n-1),\dots, n-1\right)',
\end{equation}
\begin{equation}\label{p0Theo11}
\mathbf{{p}}^{[0]}_R=(p^{[0]}_1,\dots,p^{[0]}_{\alpha n-1})'=\left(C_{\alpha n-1}^{\alpha n-1}\left(\frac{1}{2}\right)^{\alpha n-1},\dots,C_{\alpha n-1}^{1}\left(\frac{1}{2}\right)^{\alpha n-1}\right)',
\end{equation}
\begin{equation}\label{rTheo11}
  \mathbf{{r}}^{[1]}=(r_{1,\alpha n},\dots,r_{\alpha n-1,\alpha n})',
\end{equation}
and construct an auxiliary transition matrix
\begin{equation}\label{TranKnap3}
  \mathbf{\hat{S}}=(s_{i,j})_{\alpha n\times\alpha n}=\left(\begin{array}{cc}
                             1 & \mathbf{s} \\
                             0 & \mathbf{S}
                           \end{array}\right),\quad i,j=0,1,\dots,\alpha n-1,
\end{equation}
where $$\mathbf{s}=\left(\left(\frac{1}{n}\right)^{\alpha n}, \left(\frac{1}{n}\right)^{\alpha n},\cdots, \left(\frac{1}{n}\right)^{\alpha n}\right),$$
\begin{align}\label{TrKnapEA}
  &\mathbf{S}=(s_{i,j})_{i,j=1,\dots,\alpha n-1}\nonumber\\
  &\tiny \left(
                 \begin{array}{ccccc}
                      1-\left(\frac{1}{n}\right)^{\alpha n} & \frac{1}{n}\left(1-\frac{1}{n}\right)^{\alpha n-2} & \cdots  & 0 & 0  \\
                        & 1-\left(\frac{1}{n}\right)^{\alpha n}-\frac{1}{n}\left(1-\frac{1}{n}\right)^{\alpha n-2} & \cdots  & 0 & 0\\
                      &   &  \ddots  &\vdots &\vdots\\
                        &   &   & \frac{\alpha n-3}{n}\left(1-\frac{1}{n}\right)^{\alpha n-2} & 0\\
                          &   &   &   1-\left(\frac{1}{n}\right)^{\alpha n}-\frac{\alpha n-3}{n}\left(1-\frac{1}{n}\right)^{\alpha n-2} & \frac{\alpha n-2}{n}\left(1-\frac{1}{n}\right)^{\alpha n-2}\\
                               &   &   &   & 1-\left(\frac{1}{n}\right)^{\alpha n}-\frac{\alpha n-2}{n}\left(1-\frac{1}{n}\right)^{\alpha n-2}\\
                 \end{array}
               \right).
\end{align}
It is trivial to check that matrices $\mathbf{\hat{R}}$ and $\mathbf{\hat{S}}$ satisfy conditions (\ref{conC1})-(\ref{conC3}). Applying Theorem \ref{Th_ES} and Lemma \ref{Sec1L1} to (\ref{temp4}) we know
\begin{align}
  e^{[t]}&\le\mathbf{\hat{e}}_R'\mathbf{\hat{S}}^t\mathbf{\hat{p}}_R^{[0]}+(p_{\alpha n}^{[0]}+p_{\alpha n+1}^{[0]})\sum_{k=0}^{t-1}r_{\alpha n,\alpha n}^{k}\mathbf{\hat{e}}_R'\mathbf{\hat{S}}^{t-1-k}\mathbf{\hat{r}}^{[1]}+(p_{\alpha n}^{[0]}+p_{\alpha n+1}^{[0]})e_{\alpha n+1}r_{\alpha n,\alpha n}^t\nonumber\\
  &=\mathbf{{e}}_R'\mathbf{{S}}^t\mathbf{{p}}_R^{[0]}+(p_{\alpha n}^{[0]}+p_{\alpha n+1}^{[0]})\sum_{k=0}^{t-1}r_{\alpha n,\alpha n}^{k}\mathbf{{e}}_R'\mathbf{{S}}^{t-1-k}\mathbf{{r}}^{[1]}+(p_{\alpha n}^{[0]}+p_{\alpha n+1}^{[0]})e_{\alpha n+1}r_{\alpha n,\alpha n}^t \label{temp5}
\end{align}
Then, Lemma \ref{Sec2L3} implies that
\begin{align}
  e^{[t]}&\le\sum_{j=1}^{\alpha n-1}\lambda_j^t(\mathbf{{e}}_R'\mathbf{p}_j)(\mathbf{q}_j'\mathbf{{p}}_R^{[0]})+(p_{\alpha n}^{[0]}+p_{\alpha n+1}^{[0]})\sum_{k=0}^{t-1}r_{\alpha n,\alpha n}^{k}\sum_{j=1}^{\alpha n-1}\lambda_j^{t-1-k}(\mathbf{{e}}_R'\mathbf{p}_j)(\mathbf{q}_j'\mathbf{{r}}^{[1]})\nonumber\\
  &\qquad+(p_{\alpha n}^{[0]}+p_{\alpha n+1}^{[0]})e_{\alpha n+1}r_{\alpha n,\alpha n}^t, \label{temp6}
\end{align}
where
\begin{align}\label{lambddaTh11}
\lambda_{j}=s_{j,j}=1-\left(\frac{1}{n}\right)^{\alpha n}-\frac{j-1}{n}\left(1-\frac{1}{n}\right)^{\alpha n-2},
\end{align}
\begin{align}
  \mathbf{p}_{j}&=\left(\prod_{k=1}^{j-1}\frac{s_{k,k+1}}{s_{j,j}-s_{k,k}},\prod_{k=2}^{j-1}\frac{s_{k,k+1}}{s_{j,j}-s_{k,k}},\dots,\frac{s_{j-1,j}}{s_{j,j}-s_{j-1,j-1}},1,0,\dots,0\right)',\label{pTheo11}\\
 \mathbf{q}'_{j}&=\left(0,\dots,0,1,\frac{s_{j,j+1}}{s_{j,j}-s_{j+1,j+1}},\prod_{k=j+1}^{j+2}\frac{s_{k-1,k}}{s_{j,j}-s_{k,k}},\dots,\prod_{k=j+1}^{n}\frac{s_{k-1,k}}{s_{j,j}-s_{k,k}}\right),\label{qTheo11}
\end{align}
$j=1\dots,\alpha n-1$. Substituting (\ref{lambddaTh11}), (\ref{EPKnapTh11}), (\ref{QPKnapTh11}) and (\ref{QRTh11}) to (\ref{temp6}), we conclude that
\begin{align*}
  e^{[t]}&\le \textstyle (n-\alpha n+1)\left(1-\frac{1}{2^{\alpha n-1}}\right)\left[1-\left(\frac{1}{n}\right)^{\alpha n}\right]^t+(\alpha n-1)\left(\frac{1}{2}-\frac{1}{2^{\alpha n-1}}\right)\left[1-\left(\frac{1}{n}\right)^{\alpha n}-\frac{1}{n}\left(1-\frac{1}{n}\right)^{\alpha n-2}\right]^t\\
  &+\textstyle \frac{1}{2^{\alpha n}}\left[1-\left(1-\frac{1}{n}\right)^{\alpha n-1}\right]\left\{\sum_{k=0}^{t-1}\left[\left(1-\frac{1}{n}\right)^{\alpha n}\right]^k\left[1-\left(\frac{1}{n}\right)^{\alpha n}\right]^{t-1-k}\right.\\
  &\textstyle \left. +(\alpha n-1)\sum_{k=0}^{t-1}\left[\left(1-\frac{1}{n}\right)^{\alpha n}\right]^k\left[1-\left(\frac{1}{n}\right)^{\alpha n}-\frac{1}{n}\left(1-\frac{1}{n}\right)^{\alpha n-2}\right]^{t-1-k}\right\} +\left(n-\frac{1}{n}\right)\frac{1}{2^{\alpha n}}\left(1-\frac{1}{n}\right)^{\alpha n t}\\
  &\le \textstyle (n-\alpha n+1)\left(1-\frac{1}{2^{\alpha n-1}}\right)\left[1-\left(\frac{1}{n}\right)^{\alpha n}\right]^t+(\alpha n-1)\left(\frac{1}{2}-\frac{1}{2^{\alpha n-1}}\right)\left[1-\left(\frac{1}{n}\right)^{\alpha n}-\frac{1}{n}\left(1-\frac{1}{n}\right)^{\alpha n-2}\right]^t\\
  &+\textstyle \frac{1}{2^{\alpha n}}\left[1-\left(1-\frac{1}{n}\right)^{\alpha n-1}\right]\left\{\frac{2}{\alpha(1-\alpha)}\left[1-\left(\frac{1}{n}\right)^{\alpha n}\right]^{t} +\frac{4(\alpha n-1)}{\alpha(1-\alpha)}\left[1-\left(\frac{1}{n}\right)^{\alpha n}-\frac{1}{n}\left(1-\frac{1}{n}\right)^{\alpha n-2}\right]^{t}\right\} \\
  &+\textstyle \left(n-\frac{1}{n}\right)\frac{1}{2^{\alpha n}}\left(1-\frac{1}{n}\right)^{\alpha n t}\\
  &\le \textstyle\left[(1-\alpha)n+1+\frac{1}{\alpha(1-\alpha)2^{\alpha n-1}}\left(1-\left(1-\frac{1}{n}\right)^{\alpha n-1}\right)\right]\left[1-\left(\frac{1}{n}\right)^{\alpha n}\right]^t\\
  &\quad +\textstyle \left\{\frac{(2^{\alpha n-2}-1)(\alpha n-1)}{2^{\alpha n-1}}+\frac{\alpha n-1}{2^{\alpha n-2}\alpha(1-\alpha)}\left[1-\left(1-\frac{1}{n}\right)^{\alpha n-1}\right]\right\}\left[1-\left(\frac{1}{n}\right)^{\alpha n}-\frac{1}{n}\left(1-\frac{1}{n}\right)^{\alpha n-2}\right]^{t}\\
  &\quad +\textstyle \left(n-\frac{1}{n}\right)\frac{1}{2^{\alpha n}}\left(1-\frac{1}{n}\right)^{\alpha n t}.
\end{align*}

\end{proof}

\begin{figure}
  \centering
  \includegraphics[width=15cm]{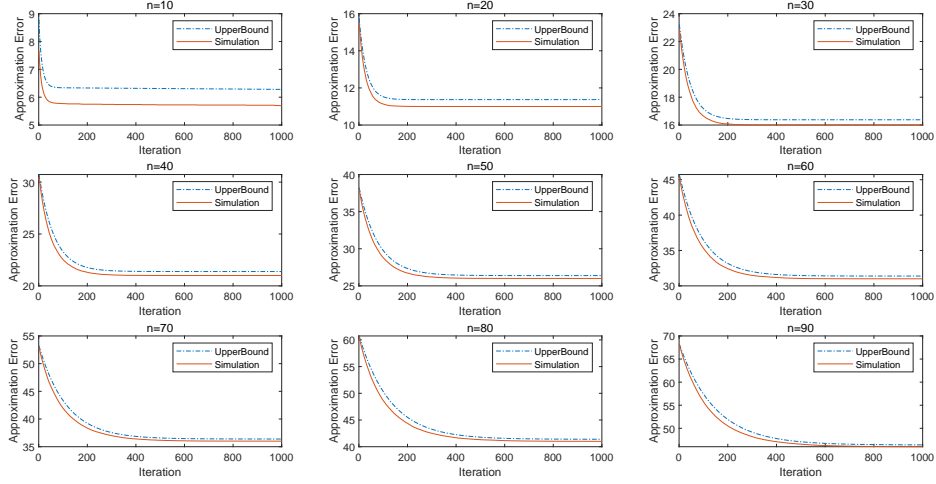}\\
  \caption{Comparison between the estimated upper bound and simulation results on expected approximation error of (1+1)EA solving the 10-, 20-,...,90-D Knapsack problems.}\label{fig3}
\end{figure}

As illustrated in Figure \ref{fig3}, it is showed that the estimated upper bound for the Knapsack problem is very tight, too. Although global exploration ability can be achieved by the bitwise mutation, the (1+1)EA always focuses on local exploitation. Then, the searching process is absorbed by the local optimal solution. Then, to jump from it to the global optimal solution is a difficult task because the transition probability is $\left(\frac{1}{n}\right)^{\alpha n}$. As a consequence, the upper bound of $e^{[t]}$ is dominated by the first item, which is of the order $\left[1-\left(\frac{1}{n}\right)^{\alpha n}\right]^t$.

}
\section{Conclusions and Discussions}\label{SecCon}

In order to bridge the gap between theories and applications of RSH, this paper is dedicated to analyze elitist RSH by estimating the expected approximation error for iteration budget $t$. According to the distribution of non-zero elements in the transition matrix of Markov chain, searching processes of elitist RSH are classified into three categories, and we propose a general framework for estimation of approximation error, named as the error analysis.

{\color{red} Since error analysis is based on computation the $t$-th power of the transition probability matrix, we can obtain general results on expected approximation error regarding any iteration $t$. Meanwhile, the obtained results are concrete expressions of approximation error, which is much more precise than the asymptotic results of fixed-budget analysis. Furthermore, the analysis routine can be theoretically applied to any RSH that is modeled by a upper triangular transition matrix, which demonstrates the universality of error analysis.} Tricks of error analysis are definition of statues, diagonalization of upper triangular matrices and multiplication of block matrices. With help of these mathematical techniques, the error analysis can be applied easily on analysis of elitist RHS for uni- and multi-modal problems.

{\color{red}Analysis of  population-based EAs in the framework of error analysis is feasible if we can address how the transition probability is influenced by population size. For the (1+$\lambda$) EA that generates multiple offsprings by one parent, it is easy to estimate improvement of transition probability, and the challenge lies in computation of the $t$-th power of transition matrix. However, to analyze ($N$+$N$)EA we must overcome the difficulties in estimation of transition probability and computation of the $t$-th  power.} There are some other open questions in error analysis, including construction of Markov chain model, design of auxiliary searches, and computation of combinatorics, etc. Moreover, the analyzing routine is based on the precondition that the transition matrix is diagonalizable. Thus, we would like to analyze RSH whose transition matrix is not diagonalizable.

%
%

\section*{Acknowledgements}
This work was partly supported by the Fundamental Research Funds for the Central Universities (WUT: 2020IB006), the National Nature Science Foundation of China under Grant 61763010, the Guangxi ``BAGUI Scholar'' Program, and the Science and Technology Major Project of Guangxi under Grant AA18118047.

\appendix
\section{Computation of $\mathbf{e}'\mathbf{p}_j$ and $\mathbf{q}_j\mathbf{p}^{[0]}$ in Proof of Theorem \ref{Th_RLS_OneMax}}\label{AppendixAA}
Denote $\mathbf p_{j}=(p_{1,j},\dots,p_{n,j})'$, $\mathbf q_{j}=(q_{1,j},\dots,q_{n,j})'$. Then, By Lemma \ref{Sec2L3} we can get the values of $p_{i,j}$ as follows.
\begin{enumerate}
  \item If $i>j$, $p_{i,j}=0$;
  \item if $i=j$, $p_{i,j}=1$;
  \item if $i<j$, equation (\ref{TrOneRLS}) implies that
  \begin{align*}
    p_{i,j}  =\prod_{k=i}^{j-1}\frac{r_{k,k+1}}{r_{j,j}-r_{k,k}}=\prod_{k=i}^{j-1}\frac{\frac{k+1}{n}}{\frac{k}{n}-\frac{j}{n}}=(-1)^{j-i}C_j^{i},\quad,i=1,\dots,j-1.
  \end{align*}
\end{enumerate}
Similarly, we can also get the value of $q_{i,j}$ as follows.
\begin{enumerate}
  \item If $i<j$, $q_{i,j}=0$;
  \item if $i=j$, $q_{i,j}=1$;
  \item if $i>j$, equation (\ref{TrOneRLS}) implies that
  \begin{align*}
    q_{i,j}  =\prod_{k=j+1}^{i}\frac{r_{k-1,k}}{r_{j,j}-r_{k,k}}=\prod_{k=j+1}^{i}\frac{\frac{k}{n}}{\frac{k}{n}-\frac{j}{n}}=C_i^j,\quad,i=j+1,\dots,n.
  \end{align*}
\end{enumerate}
In summary, we know
\begin{align}
  \mathbf{p}_{j}&=\left((-1)^{j-1}C_j^1,(-1)^{j-2}C_j^2,\dots,(-1)^{1}C_j^{j-1},1,0,\dots,0\right)',\label{p1}\\
  \mathbf{q}'_{j}&=\left(0,\dots,0,1,C_{j+1}^j,C_{j+2}^j,\dots,C_{n}^j\right),\label{q1}\\
  &\quad\quad\quad\quad\quad\quad\quad\quad\quad\quad\quad\quad\quad\quad\quad\quad\quad j=1,\dots,n.\nonumber
  \end{align}
Combing (\ref{ErOneMax}) and (\ref{p1}) we know
\begin{align}\label{temp1}
  \mathbf{e}'\mathbf{p}_j=\sum_{i=1}^ji(-1)^{j-i}C_{j}^{i}=\sum_{i=1}^j(-1)^{j-i}C_{j}^{1}C_{j-1}^{i-1}= \left\{\begin{aligned}& 1, && \mbox{ if }j=1;\\ & 0, && \mbox{ if } 2\le j\le n. \end{aligned}\right.
\end{align}
Moreover, (\ref{DiOneMax}) and (\ref{q1}) implies that
\begin{align}\label{temp2}
  \mathbf{q}'_1\mathbf{p}^{[0]}=\sum_{i=1}^{n}C_{i}^{1}\frac{C_{n}^{i}}{2^n}=\frac{1}{2^n}\sum_{i=1}^{^n}C_n^1C_{n-1}^{i-1}=\frac{n}{2}.
\end{align}
\section{Computation of $\mathbf{e}\mathbf{p}_j$ and $\mathbf{q}_j\mathbf{p}^{[0]}$ in Proof of Theorem \ref{Th_RLS_Deceptive}}\label{AppendixA}
Denote $\mathbf p_{j}=(p_{1,j},\dots,p_{n,j})'$, $\mathbf q_{j}=(q_{1,j},\dots,q_{n,j})'$. Then, By Lemma \ref{Sec2L3} we can get the values of $p_{i,j}$ as follows.
\begin{enumerate}
  \item If $i>j$, $p_{i,j}=0$;
  \item if $i=j$, $p_{i,j}=1$;
  \item if $i<j$, equation (\ref{TrDeRLS}) implies that
  \begin{align*}
    p_{i,j}=&\prod_{k=i}^{j-1}\frac{r_{k,k+1}}{r_{j,j}-r_{k,k}} \\ =&\left\{\begin{aligned}&\prod_{k=i}^{j-1}\frac{\frac{k}{n}}{\frac{k-1}{n}-\frac{j-1}{n}}=(-1)^{j-i}C_{j-1}^{i-1}, &&i=1,\dots,j-1, j=1,\dots, n-1;\\
    &\prod_{k=i}^{j-1}\frac{\frac{k}{n}}{0-(1-\frac{k-1}{n})}=(-1)^{j-i}C_{j-1}^{i-1}\frac{1}{n+1-i}, &&i=1,\dots,j-1, j=n.\end{aligned}
    \right.
  \end{align*}
\end{enumerate}
Similarly, we can  get the value of $q_{i,j}$ as follows.
\begin{enumerate}
  \item If $i<j$, $q_{i,j}=0$;
  \item if $i=j$, $q_{i,j}=1$;
  \item if $i>j$, equation (\ref{TrDeRLS}) implies that
  \begin{align*}
    q_{i,j} & =\prod_{k=j+1}^{i}\frac{r_{k-1,k}}{r_{j,j}-r_{k,k}}\\
    &=\left\{\begin{aligned}&\prod_{k=j+1}^{i}\frac{\frac{k-1}{n}}{\frac{k-1}{n}-\frac{j-1}{n}}=C_{i-1}^{j-1}, && i=j+1,\dots,n-1;\\ &\prod_{k=j+1}^{i-1}\frac{\frac{k-1}{n}}{\frac{k-1}{n}-\frac{j-1}{n}}\frac{n-1}{n-j+1}=C_{i-1}^{j-1}\frac{n-j}{n-j+1}, && i=n.\end{aligned}\right.
  \end{align*}
\end{enumerate}
That is,
\begin{align*}
  \mathbf{p}_{j}&=\left\{\begin{aligned}&\left((-1)^{j-1}C_{j-1}^0,(-1)^{j-2}C_{j-1}^1,\dots,-C_{j-1}^{j-2},1,0,\dots,0\right)^T,&& j<n\\&\left((-1)^{n-1}C_{n-1}^{1-1}/C_{n+1-1}^{1},(-1)^{n-2}C_{n-1}^{2-1}/C_{n+1-2}^{1},\dots,-C_{n-1}^{(n-1)-1}/C_{n+1-(n-1)}^{1},1\right)^T, && j=n. \end{aligned}\right. \\
  \mathbf{q}'_{j}&=\left(0,\dots,0,1,C_{j}^{j-1},C_{j+1}^{j-1},\dots,C_{n-2}^{j-1},C_{n-1}^{j-1}\frac{n-j}{n+1-j}\right), \quad j=1,\dots,n.
\end{align*}
By equation (\ref{ErDec}), we know when $j<n$,
\begin{align}
  \mathbf{e}\mathbf{p}_j&=\sum_{i=1}^{j}(-1)^{j-i}C_{j-1}^{i-1}i=\left.\frac{d}{dx}\left(\sum_{i=1}^{j}(-1)^{j-i}C_{j-1}^{i-1}x^i\right)\right|_{x=1}\nonumber\\
  &=\left.\frac{d}{dx}\left(x\sum_{k=0}^{j-1}(-1)^{j-1-k}C_{j-1}^{k}x^k\right)\right|_{x=1}=\left.\frac{d}{dx}\left(x(x-1)^{j-1}\right)\right|_{x=1}\nonumber\\
  &=\left\{\begin{aligned}&1, && j=1,2,\\ & 0, && j=3,\dots,n-1; \end{aligned}\right. \label{ep1}
\end{align}
and if $j=n$,
\begin{align}
  \mathbf{e}\mathbf{p}_n&=\sum_{i=1}^{n}(-1)^{n-i}iC_{n-1}^{i-1}/(n+1-i)=\sum_{i=1}^{n}(-1)^{n-i}C_{n-1}^{i-1}-(n+1)\sum_{i=1}^{n}(-1)^{n-i}C_{n-1}^{i-1}/(n+1-i)\nonumber\\
  &=-(n+1)\left.\left(\sum_{i=1}^{n}(-1)^{n-i}C_{n-1}^{i-1}\int_{0}^{x}x^{n-i}dx\right)\right|_{x=1}=-(n+1)\left.\int_{0}^{x}x^n\left(\sum_{i=1}^{n}(-1)^{n-i}C_{n-1}^{i-1}x^{-i}\right)dx\right|_{x=1}\nonumber\\
  &=-(n+1)\left.\int_{0}^{x}(x-1)^{n-1}dx\right|_{x=1}=(-1)^{n}\frac{n+1}{n}. \label{ep2}
\end{align}
By equation (\ref{DiDec}), we know that
\begin{align}
  \mathbf{{q}}'_j\mathbf{{p}}^{[0]}&=\sum_{i=j}^{n-1}C_{i-1}^{j-1}\frac{C_n^{i-1}}{2^n}+C_{n-1}^{j-1}\frac{n-j}{n+1-j}\frac{C_n^{n-1}}{2^n}\nonumber\\
  &=\frac{1}{2^n}\left(\sum_{i=j}^nC_n^{j-1}C_{n-j+1}^{i-j}-C_n^{j-1}\right)=\frac{C_n^{j-1}}{2^n}\left(2^{n-j+1}-2\right). \label{qq1}
\end{align}

\section{Estimation of $\mathbf{\check{e}}\mathbf{\check{p}_j}$ and $\mathbf{\check{q}}_j\mathbf{\check{p}}^{[0]}$ and $\mathbf{\check{q}}_j\mathbf{\check{r}}^{[1]}$ in Proof of Theorem \ref{Th_EA_Deceptive}}\label{AppendixB}
Similar to computation of (\ref{p1}) and (\ref{q1}), from (\ref{TrOneRLSTheo9}) we know
\begin{align*}
  \mathbf{\check{p}}_{j}&=\left((-1)^{j-1}C_{j-1}^0,(-1)^{j-2}C_{j-1}^1,\dots,-C_{j-1}^{j-2},1,0,\dots,0\right)',j=1,\dots,n-1 \\
  \mathbf{\check{q}}_j'&=\left(0,\dots,0,1,C_{j}^{j-1},C_{j+1}^{j-1},\dots,C_{n-2}^{j-1}\right), \quad j=1,\dots,n-1.
\end{align*}
By equations (\ref{ep1}) and (\ref{etheo9}), we know
\begin{align}
  \mathbf{\check{e}}\mathbf{\check{p}}_j=\sum_{i=1}^{j}(-1)^{j-i}C_{j-1}^{i-1}i=\left\{\begin{aligned}&1, && j=1,2,\\ & 0, && j=3,\dots,n-1. \end{aligned}\right. \label{ep22}
\end{align}

Moreover, equation (\ref{qtheo9}) implies
\begin{align}
  \mathbf{\check{q}}'_j\mathbf{\check{p}}^{[0]}=\sum_{i=j}^{n-1}C_{i-1}^{j-1}\frac{C_n^{i-1}}{2^n}=\frac{C_n^{j-1}}{2^n}(2^{n-j+1}-(n-j+2)), \label{qq2}
\end{align}
and by equation  (\ref{rtheo9}) we know
\begin{align}\label{qr1}
  \mathbf{\check{q}}_j'\mathbf{\check{r}}^{[1]}&=\sum_{i=j}^{n-1}C_{i-1}^{j-1}r_{i,n}\le C_{n-1}^{j-1}(1-r_{0,n}-r_{n,n})\nonumber\\
  &=C_{n-1}^{j-1}\left[1-\frac{1}{n}\left(1-\frac{1}{n}\right)^{n-1}-\left(\left(1-\frac{1}{n}\right)^{n}+C_{n-1}^1\left(\frac{1}{n}\right)^2\left(1-\frac{1}{n}\right)^{n-2}\right)\right]\nonumber\\
  &=C_{n-1}^{j-1}\left(1-\frac{n+1}{n}\left(1-\frac{1}{n}\right)^{n-1}\right).
\end{align}

\section{Computation of $\mathbf{\check{e}}\mathbf{\check{p}}_j$ and $\mathbf{\check{q}}_j\mathbf{\check{p}}^{[0]}$ in Proof of Theorem \ref{Th_RLS_Knapsack}}\label{AppendixC}
Deduction of $\mathbf{\check{e}}\mathbf{\check{p}}_j$ and $\mathbf{\check{q}}_j\mathbf{\check{p}}^{[0]}$ is similar to computation in \ref{AppendixA}. From (\ref{TrKnapRLS1}) we know
\begin{align*}
  \mathbf{\check{p}}_{j}&=\left\{\begin{aligned}&\left((-1)^{j-1}C_{j-1}^0,(-1)^{j-2}C_{j-1}^1,\dots,-C_{j-1}^{j-2},1,0,\dots,0\right)^T,&& j<\alpha n\\&\left((-1)^{j-1}C_{j-1}^{1-1}/C_{j+1-1}^{1},(-1)^{j-2}C_{j-1}^{2-1}/C_{j+1-2}^{1},\dots,-C_{j-1}^{(j-1)-1}/C_{j+1-(j-1)}^{1},1\right)^T, && j=\alpha n, \end{aligned}\right. \\
  \mathbf{\check{q}}'_{j}&=\left(0,\dots,0,1,C_{j}^{j-1},C_{j+1}^{j-1},\dots,C_{\alpha n-2}^{j-1},C_{\alpha n-1}^{j-1}\frac{\alpha n-j}{\alpha n+1-j}\right), \quad j=1,\dots,\alpha n.
\end{align*}

By equation (\ref{eTh10}), we know when $j<\alpha n$,
\begin{align}
  \mathbf{\check{e}}\mathbf{\check{p}}_j&=\sum_{i=1}^{j}(-1)^{j-i}C_{j-1}^{i-1}(n-(\alpha n-i))=(1-\alpha)n\sum_{i=1}^{j}(-1)^{j-i}C_{j-1}^{i-1}+\sum_{i=1}^{j}(-1)^{j-i}C_{j-1}^{i-1}i\nonumber\\
  &=\left\{\begin{aligned}&n-\alpha n+1, && j=1,\\ & 1, && j=2,\\ & 0, && j=3,\dots,\alpha n-1; \end{aligned}\right. \label{ep1Appendd}
\end{align}
and if $j=\alpha n$,
\begin{align}
  \mathbf{\check{e}}\mathbf{\check{p}}_{\alpha n}&=\sum_{i=1}^{\alpha n}(-1)^{\alpha n-i}(n-(\alpha n-i))C_{\alpha n-1}^{i-1}/(\alpha n+1-i)\nonumber\\
  &=(n-1)\sum_{i=1}^{\alpha n}(-1)^{\alpha n-i}C_{\alpha n-1}^{i-1}/(\alpha n+1-i)-\sum_{i=1}^{\alpha n}(-1)^{\alpha n-i}C_{\alpha n-1}^{i-1}\nonumber\\
  &=(-1)^{\alpha n}(n-1)\frac{\alpha n+1}{\alpha n}. \label{ep2Appendd}
\end{align}
By equation (\ref{qTh10}), we know that when $j<\alpha n$,
\begin{align}
  \mathbf{\check{q}}'_j\mathbf{\check{p}}^{[0]}&=\sum_{i=j}^{\alpha n-1}C_{i-1}^{j-1}\frac{C_{\alpha n-1}^{i-1}}{2^{\alpha n-1}}+C_{\alpha n-1}^{j-1}\frac{\alpha n-j}{\alpha n+1-j}\left(\frac{1}{2^{\alpha n}}-\frac{1}{2^n}\right)\nonumber\\
  &=C_{\alpha n-1}^{j-1}\left[\frac{1}{2^{j-1}}-\frac{1}{2^{\alpha n-1}}+\frac{\alpha n-j}{\alpha n+1-j}\left(\frac{1}{2^{\alpha n}}-\frac{1}{2^n}\right)\right]. \quad j=1,\dots,\alpha n-1,\label{qq1Appendd}
\end{align}
and if $j=\alpha n$,
\begin{equation}\label{qq2Appendd}
  \mathbf{\check{q}}'_{\alpha n}\mathbf{\check{p}}^{[0]}=\frac{1}{2^{\alpha n}}-\frac{1}{2^n}.
\end{equation}
Moreover, equation (\ref{rrKnapRLS}) implies that if $j<\alpha n$,
\begin{equation}\label{qr1Th10}
  \left(1-\frac{1}{n}\right)C_{n-2}^{j-1}\frac{\alpha n-j}{\alpha n+1-j}\le\mathbf{\check{q}}_j'\mathbf{\check{r}}^{[1]} \le \left(1-\frac{1}{n}\right)C_{n-1}^{j-1},
\end{equation}
and if $j=\alpha n$,
\begin{equation}\label{qr2Th10}
  \mathbf{\check{q}}'_{\alpha n}\mathbf{\check{r}}^{[1]}=1-\alpha.
\end{equation}

\section{Computation of $\mathbf{e}_R'\mathbf{p}_j$, $\mathbf{q}_j'\mathbf{p}_R^{[0]}$ and $\mathbf{{q}} _j'\mathbf{{r}}^{[1]}$ in Proof of Theorem \ref{Th_EA_Knapsack} }\label{AppendixD}
Similar computation in \ref{AppendixAA}, by (\ref{pTheo11}) and (\ref{qTheo11}) we know
\begin{align*}
  \mathbf{p}_{j}&=\left((-1)^{j-1}C_{j-1}^0,(-1)^{j-2}C_{j-1}^1,\dots,-C_{j-1}^{j-2},1,0,\dots,0\right)',\\
  \mathbf{q}'_{j}&=\left(0,\dots,0,1,C_{j}^{j-1},C_{j+1}^{j-1},\dots,C_{\alpha n-2}^{j-1}\right),
\end{align*}
$j=1,\dots,\alpha n-1$.
Combining them with (\ref{erTheo11}), (\ref{p0Theo11}) and (\ref{rTheo11}), we know
 \begin{align}
\mathbf{e}_R'\mathbf{p}_j&=\sum_{i=1}^{j}(-1)^{j-i}C_{j-1}^{i-1}(n-(\alpha n-i))=\left\{\begin{aligned}& n-\alpha n+1, && j=1;\\ & 1, && j=2;\\& 0, && j=3,\dots,\alpha n-1,\end{aligned}\right. \label{EPKnapTh11}\\
  \mathbf{q}_j'\mathbf{p}_R^{[0]}&=\sum_{k=j}^{\alpha n-1}C_{k-1}^{j-1}C_{\alpha n-1}^{k-1}\frac{1}{2^{\alpha n-1}}=C_{\alpha n-1}^{j-1}\left[\frac{1}{2^{j-1}}-\frac{1}{2^{\alpha n-1}}\right].\label{QPKnapTh11}
\end{align}
\begin{align}
  \mathbf{{q}}_j'\mathbf{{r}}^{[1]}&=\sum_{i=j}^{\alpha n-1}C_{i-1}^{j-1}r_{i,\alpha n}\le\sum_{i=1}^{\alpha n-1}C_{i-1}^{j-1}r_{i,\alpha n}\le C_{\alpha n-1}^{j-1}(1-r_{0,\alpha n}-r_{\alpha n,\alpha n})\nonumber\\
  &=C_{\alpha n-1}^{j-1}\left[1-\left(1-\frac{1}{n}\right)^{\alpha n-1}\right].\label{QRTh11}
\end{align}
\bibliographystyle{elsarticle-num}
\bibliography{paper-E5}
\end{document}